\documentclass[accepted]{uai2026} 
                        
\usepackage[american]{babel}

\usepackage{natbib} 
\usepackage{mathtools} 
\usepackage{booktabs} 
\usepackage{tikz} 
\usepackage{amsthm, amsfonts, amsmath, amssymb}

\newtheorem{theorem}{Theorem}

\newtheorem{lemma}[theorem]{Lemma}

\usepackage{algorithm}
\usepackage{algpseudocode}
\usepackage{graphicx}

\usepackage{xurl}

\usepackage{subcaption}
\usepackage{enumitem}
\usepackage{caption}
\usepackage{hyperref}
\usepackage{colortbl}
\usepackage{ragged2e} 
\usepackage{tabularx}
\usepackage{makecell}
\usepackage[normalem]{ulem}
\usepackage{array}
\usepackage{graphicx}
\usepackage{caption}
\usepackage{booktabs, multirow, xcolor}
\usepackage{siunitx}       
\usepackage{pgfplots}
\pgfplotsset{compat=1.17}
\usepackage{tikz}
\usetikzlibrary{patterns}
\usepackage[table]{xcolor}
\usepackage{microtype}
\usetikzlibrary{patterns}
\usepackage{pgfplots}
\pgfplotsset{compat=1.17}
\usepackage{tikz}
\usetikzlibrary{patterns}
\usepackage{tikz}
\usepackage{pgfplots}
\usepackage{subcaption}
\usepackage{caption}
\usetikzlibrary{matrix, shapes.geometric}

\pgfplotsset{compat=1.17}
\usepgfplotslibrary{fillbetween}

\definecolor{colUS}{RGB}{31,119,180}   
\definecolor{colETC}{RGB}{44,160,44}   
\definecolor{colSR}{RGB}{255,127,14}   
\definecolor{colFCSR}{RGB}{214,39,40}  

\pgfplotsset{
  every axis/.append style={
    tick label style={font=\small},
    label style={font=\small},
    title style={font=\small},
    legend style={font=\small},
    scaled ticks=false
  }
}

\newtheorem{remark}{Remark}

\title{Fixed-Budget Constrained Best Arm Identification in Grouped Bandits}

\author[1]{\href{mailto:<raunakm@iitb.ac.in>?Subject=Your UAI 2026 paper}{Raunak Mukherjee}{}}
\author[2]{\href{mailto:<sharayum@iitb.ac.in>?Subject=Your UAI 2026 paper}{Sharayu Moharir}{}}

\affil[1]{%
    Dept. of Electrical Engineering\\
    IIT Bombay\\
}
\affil[2]{%
    Dept. of Electrical Engineering\\
    IIT Bombay
}
  
\begin{document}
\maketitle

\begin{abstract}
   We study fixed budget constrained best-arm identification in grouped bandits, where each arm consists of multiple independent attributes with stochastic rewards. An arm is considered feasible only if all its attributes' means are above a given threshold. The aim is to find the feasible arm with the largest overall mean. We first derive a lower bound on the error probability for any algorithm on this setting. We then propose Feasibility Constrained Successive Rejects (FCSR), a novel algorithm that identifies the best arm while ensuring feasibility.  We show it attains optimal dependence on problem parameters up to constant factors in the exponent. Empirically, FCSR outperforms natural baselines while preserving feasibility guarantees. 
\end{abstract}

\section{Introduction}\label{sec:intro}

The aim of the pure exploration stochastic multi-armed
bandit problem is to identify the optimal arm among a given set of arms. The two most popular paradigms for best arm identification (BAI) include the \emph{fixed confidence} setting \citep{kaufmann2016complexitybestarmidentification} and the \emph{fixed budget} setting \citep{Audibert2010BestArmIdentification}.  \emph{Fixed budget best arm identification} (FBBAI) has numerous applications in online advertising, recommender systems, etc. In general it applies to situations that contain a finite \emph{testing} phase and then a \emph{commercialization} phase \citep{Audibert2010BestArmIdentification}. 

Many services are more naturally modeled as an aggregation of independent services that are rendered to a customer. It is often desirable that each of these services are above a certain standard of quality \citep{dharod2024constrainedbestarmidentification}. For instance, a typical auto garage may offer car wash services, AC servicing, tyre and wheel care services, car inspections, etc. To evaluate such services, it makes sense to have customers rate each service separately and maintain a
rating for each service. A reasonable metric for evaluating
such a service as a whole is the (weighted) average of
the ratings of the different services. In addition, for an
amenity to be deemed acceptable, it may be desirable that
the ratings for each service exceed a threshold


Similarly, in online advertising, one might seek the best creative (such as an image or video ad) that is run across multiple demographic segments. Each creative may be treated as an arm, with the (random) performance over each demographic segment modeled as an attribute of that creative. We desire the best creative that does not perform unacceptably poorly on any demographic segment.

We model this as a pure exploration MAB problem in a grouped bandit setting, where each arm is a group of attributes that are modeled as random variables that are sampled independently. An arm is
said to be feasible if the mean reward of all its attributes
exceeds a given threshold.

\paragraph{Our Contributions.}  We propose Feasibility Constrained Successive Rejects (FCSR), a novel hybrid sampling algorithm that attains optimal dependence on problem parameters up to a constant in the exponent. FCSR is \emph{entirely parameter free} \footnote{See \cite{DBLP:journals/corr/abs-2105-13017} for similar usage of the terms "parameter free" and "optimal" to describe algorithms.} in the sense that it requires no knowledge of the problem instance (which is usually unavailable in practice). FCSR is a hybrid sampling strategy that synthesizes existing fixed-budget algorithms in addition to incorporating a novel sampling heuristic ($\textsc{SampleUntilFeasible}$).

We demonstrate this by first defining a new complexity parameter for this setting $H_{FC}$, that recovers the well-known hardness parameter for the fixed budget best arm identification setting if the constraint is relaxed. We derive a non-asymptotic lower bound on the error probability in terms of the instance dependent parameter $H_{FC}$. We then prove a matching upper bound on the probability of error of FCSR, thereby establishing \emph{optimality up to a constant factor in the exponent.} We test FCSR against baselines considered in similar settings on several synthetic as well as real-world datasets. Our results show that FCSR outperforms baselines on the whole. 

\paragraph{Related Work.} 

The unconstrained single attribute pure exploration MAB formulation is well studied in the fixed budget setting. Algorithms such as Successive Rejects \citep{Audibert2010BestArmIdentification} and Sequential Halving \citep{pmlr-v28-karnin13} divide the exploration budget into phases, and eliminate/reject one or more arms at the end of each phase using an elimination rule. The last surviving arm is then flagged as optimal. They are among the best known algorithms in this setting while being relatively simple.  We note that the in the unconstrained case, the decision on which arm to reject is relatively straightforward (rank each arm by its empirical mean). In contrast, \citet{Faizal2022ConstrainedPE} note that in the constrained setting the pursuit of the arm with the largest empirical mean must be balanced with the potential of violating the constraint .



\citet{wang2021bestarmidentificationsafety} consider linear and then monotonic safety constraints on a best arm identification problem. \citet{hou2022optimalvarianceconstrainedbestarm} consider a \emph{fixed confidence} best arm identification problem with a feasibility constraint is placed on the variance of the arm. These works differ from ours since they do not consider identifying arms that are comprised of multiple attributes. \citet{pmlr-v89-katz-samuels19a} considers the problem of identifying the best multidimensional arm that satisfies a general feasibility constraint, however, the work focuses on the fixed confidence setting where feedback is multidimensional, in contrast to the fixed budget single-dimensional feedback setting in this paper.

\citet{locatelli2016optimalalgorithmthresholdingbandit} introduced the thresholding bandit problem (TBP) and an optimal algorithm APT for the setting. In TBP, there are $K$ scalar-valued distributions and a threshold $\tau$. The goal is to identify all distributions with means greater than $\tau$ within a fixed budget $T$. \citet{Katz2018} generalize the TBP setting by considering multidimensional arms and the problem of identifying those arms with means belonging to a given polyhedron as mentioned above. The constraint considered in the above works is very similar to the feasibility constraint considered here. However, the above papers are concerned with identifying \emph{all} arms that satisfy the above constraints as opposed to finding the \emph{best} arm that fulfills them.

\citet{kagrecha2023constrainedregretminimizationmulticriterion} considers a similar multi-dimensional constrained bandit setting but with a \emph{regret minimization} objective. The best arm identification objective is considered in \citet{dharod2024constrainedbestarmidentification} and in \citet{Faizal2022ConstrainedPE}. However, the former considers the case where the aim is to maximize a singular attribute subject to a constraint on the other, and the latter considers the same MAB setup as in this paper, but is set in the fixed confidence regime. We provide results for the fixed-budget setting which, to the best of our knowledge, is an open problem.


\section{PROBLEM FORMULATION}
\paragraph{Constrained Grouped Bandit Setting.} We have $K$ arms each with $M$ attributes. Let $[K]=\{1,2,\dots,K\}$ and $[M] = \{1, 2, \dots, M\}$. 
Attribute $j$ of arm $i$ is denoted as the tuple $(i, j)$, and is associated with the unknown reward distribution $\nu_{i,j}$. The random rewards of all attributes are i.i.d and assumed to be $R$ sub-gaussian. 
We define a \emph{bandit instance} defined as the product distribution $\mathcal{B} = \bigotimes_{(i, j) \in \mathcal{A}}\nu_{ij}$.

\textbf{Definition ($R$-sub-Gaussian distribution).} Let $R > 0$. A distribution $\nu$ is $R$-sub-Gaussian if for all $t \in \mathbb{R}$ we have $\mathbb{E}_{X \sim \nu}[\exp(t(X - \mathbb{E}[X])] \leq \exp\left(\frac{R^2 t^2}{2}\right)$.

The \textit{mean reward} of attribute $(i,j)$ is defined as $\mu_{i,j} := \mathbb{E}_{X \sim \nu_{i,j}}[X]$. The \textit{mean reward} of arm $i$ is the average of its attribute means and is defined as 
\begin{equation}
\mu_i := \frac{1}{M}\sum_{j=1}^M \mu_{i,j}.
\label{eq:arm-mean}
\end{equation}

In this work, we consider the simple average of all attributes, however, results may easily be extended to the weighted average case. An arm $i$ is called \emph{feasible} iff the mean reward of all attributes $(i,j)$ is above a given threshold $\tau \in\mathbb{R}$. We define the set of all feasible arms as $\mathcal{F}:=\{i\in[K]:\min_{j\in[M]}\mu_{i,j} > \tau\}$. Let $\mathcal{F}^c := [K] \backslash \mathcal{F}$ be the set of all \emph{infeasible} arms. 


The best feasible arm, if it exists, is 
\begin{equation*}
i^\star := \arg\max_{i\in\mathcal{F}} \mu_i.
\label{eq:best-arm}
\end{equation*}
We assume the existence of a unique best arm. The generalization to multiple optimal arms is straightforward. If there exists no feasible arm, we declare the given instance \emph{infeasible}. Let us denote this case by defining $i^\star=0$ as the \emph{flag} denoting infeasibility of the instance.

\paragraph{Objective.} Given an unknown bandit instance $\mathcal{B}$, at any time $t$, the learner chooses an attribute $(i_t, j_t)$ and receives
a random reward drawn from the distribution $X_{i,j}(t) \sim \nu_{i,j}$. An adaptive learner bases its decision at
time $t$ on the samples observed in the past and outputs a decision $I_t \in \{0, 1, \dots, K\}$ after it has exhausted its budget $T$. The goal of the learner is to minimize the probability it deviates from the true $i^\star$. Formally, given a budget $T$ and a threshold $\tau$ known to the learner, and a bandit instance $\mathcal{B}$ unknown to the learner, the learner aims to \emph{minimize} 
\begin{equation}
\mathbb{P}_{\mathcal{B}}( I_t \neq i^\star).
\end{equation}

\section{LOWER BOUND}
In this section, we provide a fundamental lower bound on the minimum probability of error incurred by any policy in this setting.

Let $\mathcal{R} = \{i: i \in  \mathcal{F}^c, \mu_i \geq \mu_{i^\star}\}$. Note that if $i^\star = 0$, then $\mathcal{R} = \mathcal{F}^c$. Henceforth, referred to as \emph{risky} arms, $\mathcal{R}$ contains all infeasible arms with higher average mean reward than the optimal arm.
 There are three distinct possibilities in the case that the adaptive learner incorrectly report the best arm, i.e. $I_T \neq i^\star$: (i) the best arm $i^\star$ exists, but is incorrectly deemed infeasible, (ii) a \emph{feasible sub-optimal arm} $i' \in \mathcal{F}_S$ is deemed to possess a greater mean reward than $i^\star$ or (iii) a \emph{risky} arm $r \in \mathcal{R}$ is deemed to be feasible (and by definition $r$ beats $i^\star$ in mean reward). The case when no best arm exists and $i^\star = 0$ is a subset of scenario (iii).

\paragraph{Complexity Parameter Definitions.} We define some parameters related to the hardness of the problem instance in this setting. Define the \emph{threshold gap} for attribute $(i,j)$ as $\bar{\Delta}_{i,j} := |\mu_{i,j} - \tau|$. Further, set $\bar{\Delta}_{0,j} = \infty \,\forall\, j \in [M]$. Define the \emph{sub-optimality} gap for arm $i\neq i^\star$ as $ \Delta_i := |\mu_{i^\star} - \mu_i| \,\forall \, i \in \{0, \dots, K\}$. Set $\mu_0 := \infty$ and thus if the best arm does not exist ($i^\star = 0$), $\Delta_i = \infty \,\forall \,i$. The definitions of $\bar{\Delta}_{0,j}$ and $\mu_0$ are made such that the parameters below remain well defined even if the best arm does not exist. 

Assume all arms are indexed by their average mean reward, i.e. $\mu_1 \geq \mu_2 \geq \dots \geq \mu_K$. We define the parameters \emph{mean hardness}, \emph{risky hardness} and \emph{feasibility hardness}, respectively on some bandit problem $\mathcal{B}$ as  
\begin{align}
\label{eq:hardness-param}
    & H_{2}^R (\mathcal{B}) := \max_{i \in [|\mathcal{R}|+2:K]} i\Delta_{i}^{-2},
    \\ &H_{f} (\mathcal{B}) :=\frac{K}{\log(K)} \max_{j \in [M]}
    \bar{\Delta}_{i^\star j}^{-2} \,, 
    \\ & H_{\text{tbp}}(i) := \sum_{j \in [M]}\bar{\Delta}_{i,j}^{-2} \, \,
    , \, \, H_{\text{tbp}} (\mathcal{B}) := K\max_{i\in \mathcal{F}^C}H_{\text{tbp}}(i).
\end{align}
Let $H_{\text{tbp}} := 0$ if $|\mathcal{F}^C| = 0$ i.e. there are no infeasible arms. The dependence on $\mathcal{B}$ is often implicit. Hence, define the overall \emph{feasibility constrained} hardness parameter of a bandit instance $\mathcal{B}$ as 
\begin{equation}
    H_{FC}(\mathcal{B}) := \max \left\{ H_2^R, \, H_{\text{tbp}}, \,  H_f\right\}.
\end{equation}

\begin{remark}
    Note that in the case we allow $\tau = -\infty$, all arms are feasible a.s. and the problem instance is identical to unconstrained best arm identification with grouped arms. We have $H_f = 0$ and $H_{tbp} = 0$. Thus, $H_{FC} = H_2 = \max_{i \in [K]}i\Delta_i^{-2}$ and $H_{FC}$ recovers the hardness index widely seen in the fixed budget unconstrained setting (see \citet{Audibert2010BestArmIdentification}). This is the hardness index we expect to see in vanilla best arm identification if we treat each grouped arm as a single dimensional arm with the same arm mean. 
\end{remark}


\begin{theorem}[Lower Bound]
 Let $\mathcal{C}_{FC}(a;K)$ denotes the set of bandit instances with $K=M \geq 2$ and whose difficulty $H_{FC}$ is upper bounded by some constant $a$. If
 \begin{equation}
 T \geq \frac{4}{60^2}(a\log(K))^2\log(6T(K+M+1)),
 \end{equation}
 there exists a bandit instance $\mathcal{G} \in \mathcal{C}_{FC}(a)$ such that the probability any arbitrary learner incorrectly reports the best arm is at least
 \begin{equation}
 \label{eq:lb-p}
\mathbb{P}_{\mathcal{G} \in \mathcal{C}_{FC}(a)}\geq \frac{1}{6} \exp\left( \frac{-1200T}{\log(K)H_{FC}(\mathcal{G})} \right).
\end{equation}
\end{theorem}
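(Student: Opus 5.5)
The plan is to follow the Audibert–Bubeck / Carpentier–Locatelli style of fixed-budget lower bound, which is also the one used for thresholding bandits by \citet{locatelli2016optimalalgorithmthresholdingbandit}. We build a finite family of bandit instances that all lie in $\mathcal{C}_{FC}(a;K)$ and whose correct answers are pairwise distinct. A change-of-measure argument then shows that some instance in the family forces error at least $\frac16\exp(-cT/(\log(K) H_{FC}))$. The $\frac{1}{6}$ constant and the $\log(6T(K+M+1))$ condition on $T$ point to the usual high-probability control of empirical log-likelihood ratios uniformly over all $(K+M+1)$ perturbed coordinates and all $t\le T$. Taking $K=M$ makes the three hardness terms commensurate.

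\textbf{Construction.} Use Gaussian (hence $R$-sub-Gaussian) attributes with unit variance. The base instance $\mathcal{G}_0$ has a feasible best arm $i^\star=1$ with every attribute at $\tau+\delta$. The other arms are feasible with means $\mu_i=\mu_1-\Delta_i$, except that they carry attributes only slightly above or below $\tau$. We then build three kinds of alternatives, each changing one attribute's mean by at most twice its relevant gap:
\begin{enumerate}[label=(\roman*)]
\item $\mathcal{G}^{f}_j$ pushes attribute $(1,j)$ just below $\tau$, so the answer changes and the best arm becomes infeasible. This matches $H_f$, using $M$ alternatives.
\item $\mathcal{G}^{m}_i$ raises arm $i$'s mean above $\mu_1$ by shifting its attributes. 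This matches $H_2^R$, using $K-1$ alternatives.
\item $\mathcal{G}^{t}_i$ flips a single infeasible attribute of a risky arm above $\tau$, making that arm the answer. This matches $H_{\text{tbp}}$.
\end{enumerate}
The gaps are calibrated so that $H_{FC}\le a$ for every instance in the family. The $\log K$ factor in $H_f$ and the requirement $H_{FC}\le a$ are handled by choosing $\delta\asymp(\log K/(aK))^{1/2}$-type scalings.

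\textbf{Change of measure.} For each alternative $\mathcal{G}'$ differing from $\mathcal{G}_0$ on a coordinate $(i,j)$ with KL divergence $\mathrm{KL}_{ij}$, the high-probability lemma gives the following. On an event $\xi$ of $\mathcal{G}_0$-probability at least $5/6$, the empirical log-likelihood ratio after $N_{ij}(T)$ pulls is at most $N_{ij}\mathrm{KL}_{ij}+O\big(\sqrt{N_{ij}\log(6T(K+M+1))}\big)$. The condition on $T$ is exactly what lets this deviation term be absorbed into a constant multiple of $T/(\log(K) H_{FC})$. Then
\[
\mathbb{P}_{\mathcal{G}'}(I_T\ne i^\star(\mathcal{G}'))\ge \mathbb{E}_{\mathcal{G}_0}\big[\mathbf{1}\{I_T=i^\star(\mathcal{G}_0)\}\,\mathbf{1}_\xi\, e^{-\widehat{\mathrm{KL}}}\big].
\]
By pigeonhole on $\sum_{ij}\mathbb{E}_{\mathcal{G}_0}N_{ij}=T$, there is a coordinate pulled at most $T/(\text{weight})$ times in expectation. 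The weights $\Delta_i^{-2}$, $\bar\Delta_{1j}^{-2}$ and $\bar\Delta_{ij}^{-2}$ are normalized by the corresponding hardness terms. A Markov step turns this into a bound holding with constant probability, which yields the exponent $-cT/(\log(K)H_{FC})$. The argument also applies to $\mathcal{G}_0$ itself: if $\mathcal{G}_0$ errs with probability at least $1/2$, we are already done.

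\textbf{Main obstacle.} The hard step is designing a single base instance for which the allocation pigeonhole works simultaneously for all three hardness terms. This is also where the $\log K$ factor must be balanced. For $H_2^R$ the correct normalization is Audibert's $\sum_i \Delta_i^{-2}\le \overline{\log}(K)H_2$. For $H_{\text{tbp}}$ it is a sum over the $M$ attributes of one arm times $K$ arms. For $H_f$ it is $K\max_j\bar\Delta^{-2}/\log K$.

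The plan is to take the worst case among the three: choose the instance type according to which term attains the maximum, and prove a separate bound for each case. A uniform $\log K$ loss is accepted in each case, which is why the final exponent carries $\log(K)$ and the generous constant $1200$. The step most likely to require care is verifying that the perturbed instances remain in $\mathcal{C}_{FC}(a)$, since perturbing one attribute changes $\mathcal{R}$ and hence $H_2^R$. I would handle this by keeping all perturbations within a factor $2$ of the original gaps, so that every hardness term changes by at most a constant.
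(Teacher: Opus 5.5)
You use the same template as the paper: a concentration event of probability at least $5/6$, a change of measure, Markov at $6t_w$, and a pigeonhole. But the step that carries the theorem is missing, and you have its direction reversed. A $\log K$ in the denominator of the exponent of a lower bound is not slack to be ``accepted.'' The bound $\frac16\exp(-1200T/(\log(K)H_{FC}))$ is \emph{larger} than $\frac16\exp(-cT/H_{FC})$, so the $\log K$ has to be earned. A plain pigeonhole on $\sum_{i,j}\mathbb{E}_{\mathcal{G}_0}N_{ij}=T$ with weights $\Delta_i^{-2}$ yields only $\exp(-cT/H_1(\mathcal{G}_0))$, where $H_1=\sum_i\Delta_i^{-2}$. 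The inequality you invoke, $H_1\le\overline{\log}(K)H_2$, points the wrong way: it gives $\exp(-cT/H_1)\le\exp(-cT/(\overline{\log}(K)H_2))$, so nothing follows. Rescaling $\delta$ cannot help, because $H_1/H_2$ is scale-invariant. Indeed, with equal gaps $\Delta$, uniform allocation already achieves error of order $K\exp(-cT\Delta^2/K)$, so the $\log K$ improvement cannot hold on such a family. The missing idea is Carpentier--Locatelli's. Take gaps growing linearly in the index ($d_i\propto i$ in the paper's risky class). Then weight the pigeonhole by $1/(d_w^2H_R(w))$, where $H_R(w)$ is the complexity of the \emph{alternative} instance. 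The estimate $d_i^2H_R(i)\le 4i$ gives $h^*=\sum_w 1/(d_w^2H_R(w))\ge\frac14\sum_{i=2}^K\frac1i\ge\frac1{10}\log K$, and this is exactly where the paper's $\log K$ comes from. Even if your ``weights normalized by the corresponding hardness terms'' meant such ratios, your construction never imposes a gap profile that makes their sum of order $\log K$.

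Second, your claim that factor-$2$ perturbations change every hardness term by at most a constant is false. The terms depend on $\mathcal{F}^c$, $\mathcal{R}$ and $i^\star$, and these jump under the perturbation. In your case (i), moving $(1,j)$ from $\tau+\delta$ to $\tau-\delta$ puts arm $1$ into $\mathcal{F}^c$. The alternative then has $H_{\text{tbp}}\ge K\sum_{j'}\bar\Delta_{1j'}^{-2}=KM\delta^{-2}$, whereas $\log(K)H_f(\mathcal{G}_0)=K\delta^{-2}$. Your pigeonhole over the $M$ attributes of arm $1$ gives a lower bound of order $\exp(-cT\delta^2/M)$. The target is stated with the $H_{FC}$ of the instance on which the error occurs, so it is at least $\frac16\exp(-1200T\delta^2/(KM\log K))$. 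You are short by a factor of order $K\log K$ in the exponent. The weighted pigeonhole repairs this as well, because each alternative is charged against its own complexity. In the risky class, the paper also takes $d_R=1/8$ and $d_i=\frac{\beta i}{16K}\sqrt{(K-1)/(MK)}$. With these choices, $H^{(i)}\ge 64MK\ge H^{(i)}_{\text{tbp}}$ for every instance, so $H_R(i)=H^{(i)}_{\text{tbp}}+H^{(i)}\le 2H^{(i)}$ and the bound $d_i^2H_R(i)\le 4i$ survives. Finally, the theorem only asserts one bad instance per learner, so a case split by which term of $H_{FC}$ is largest is unnecessary. In your separate $H_{\text{tbp}}$ case, a plain pigeonhole would miss the $\log K$ again. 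What you need is one family in which each alternative's $H_{FC}$ is controlled by the complexity you pigeonhole against.
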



\textit{Proof Sketch and Discussion.} The proof proceeds by constructing two adversarial families of bandit problems that are subsets of $\mathcal{C}_{FC}(a)$ and capture two principal failure: (i) a \emph{feasibility} family $\mathcal{C}_F$ in which only one arm crosses the threshold $\tau$ and (ii) a \emph{risky} family $\mathcal{C}_R$ that elevates competitor arms or demotes single attributes of the best arm. We adapt the technique used by \citet{carpentier2016tightlowerboundsfixed} to this more general setting. For each family, a concentration event $\xi$ is established and a change-of-measure argument is applied to some undersampled arm giving us the desired result. Taking the maximum over the two families gives the stated bound with the combined hardness $H_{FC}=\max\{H_{tbp},H_2^R,H_f\}$. The main new contribution is the construction of a rich class of multi-dimensional bandit instances that capture multiple failure modes between violating the feasibility constraint and error in mean discrimination. See Appendix \ref{app:lb} for the full proof.

\section{FEASIBILITY CONSTRAINED SUCCESSIVE REJECTS}
\subsection{Notation}
Define $[i:j] = \{i, i + 1, \dots, j\}$ and $[j] = [1:j]$ where $i$ and $j$ are integers. Let $X_{i,j}(t) \sim \nu_{i,j}$ denote the random reward observed on pulling attribute $(i,j)$ at time $t > 0$ (Unless otherwise mentioned $i \in [K], j\in [M]$). Let $C_{i,j}(t)$ denote the number of times $(i,j)$ was pulled before time $t$, and $S_{i,j}$ the sum of all observed rewards before $t$. Also define the attribute empirical mean $\hat{\mu}_{i,j}(t) := S_{i,j} (t) / \max \{C_{i,j} (t), 1\}$. Thus, on pulling attribute $(i,j)$ at time $t$ we make the following update 
\begin{equation}
\label{eq:upd-rule-p}
\begin{aligned}
& S_{i,j}(t+1) \gets S_{i,j}(t) + X_{i,j}(t)\,, \\
& C_{i,j}(t+1) \gets C_{i,j}(t) + 1\,, \\
&\hat{\mu}_{i,j}(t+1) = \frac{S_{i,j} (t+1)}{\max \{C_{i,j} (t+1), 1\}}\,,
\end{aligned}
\end{equation}
and $S_{k,m}(t+1) = S_{k,m}(t) \;,\; C_{k,m}(t+1) = C_{k,m}(t) \;,\; \hat{\mu}_{k,m}(t+1) = \hat{\mu}_{k,m}(t)$ remain unchanged for all other attributes $(k,m) \neq (i,j)$.

Let $S(t) \in \mathbb{R}^{K \times M}$ be a matrix where each entry is given by $S_{i,j}(t)$ and the vector of attribute sums of arm $i$, $S_{i}(t)$ be row $i$ of $S(t)$. Similarly, define $C(t) \in \mathbb{N}^{K \times M}$, $C_{i}(t) \in \mathbb{N}^{M}$ and the matrix of attribute empirical means $\hat{\mu}(t) \in \mathbb{R}^{K \times M}$. 

The FCSR algorithm maintains global attribute statistics matrices $S, C, \hat{\mu}$ that are updated by inner sampling sub-routines. Thus, for the sake of simplicity, we omit the time step $t$ when referring to the global attribute statistics $S, C, \hat{\mu}$ and the reward $X_{i,j}$. However, we explicitly index any statistic that is used only within the scope of a sub-routine. 

\subsection{Algorithm description}

We briefly recall the two fixed budget bandit algorithms that FCSR builds upon. \emph{Successive Rejects} (SR) \citep{Audibert2010BestArmIdentification} is a fixed-budget best arm identification strategy that proceeds in $K-1$ rounds. In each round, all surviving arms are sampled uniformly according to a prescribed schedule and an empirical estimate of each arm’s mean is computed. The arm with the worst estimate is eliminated. 

The \emph{APT} procedure for the Thresholding Bandit Problem \citep{locatelli2016optimalalgorithmthresholdingbandit} addresses a different problem: given arms with unknown means and a known threshold, it allocates samples adaptively to decide whether each mean lies above or below the threshold, concentrating samples on arms whose empirical means are close to the threshold. APT achieves optimal performance Algorithm \ref{alg:apt-p} describes the APT sampling procedure for the set of attributes of arm $i$ with an arbitrary initialization of the global arm statistic vectors $S_i, C_i$. Both SR and APT achieve optimal performance upto constant factors in the exponent \citep{carpentier2016tightlowerboundsfixed, locatelli2016optimalalgorithmthresholdingbandit}.

\begin{algorithm}[H]
\caption{$\textsc{APT}(i, T; S_i, C_i)$}
\label{alg:apt-p}
\begin{algorithmic}
\Require Arm $i$, Budget $T$, Threshold \(\tau \in \mathbb{R}\), and global statistics $S_i, C_i$
\State \textbf{Initialization:} Use existing global $S_i, C_i, \hat{\mu}_{i,j} \;\forall\; j \in [M]$. 
\For{\(t = 0, 1, \dots, T-1\)}
    \For{each attribute \(j = 1, \dots, M\)}
        \State Compute the empirical gap:
        $\widehat{\bar{\Delta}}_{j}(t) = |\hat{\mu}_{i,j} - \tau| $
        \State Compute: $B_j(t+1) = \sqrt{C_{i,j}} \cdot \widehat{\bar{\Delta}}_{ij}(t)$
    \EndFor
    \State Pull attribute \(J_{t+1} = \arg\min_{1 \leq j \leq M} B_j(t+1)\).
    \State Observe reward \(X_{i,J_{t+1}} \sim \nu_{i, J_{t+1}}\).
    \State Update global $S_{i, J_{t+1}}, C_{i,J_{t+1}}, \hat{\mu}_{i, J_{t+1}}$ using  \eqref{eq:upd-rule-p}.
\EndFor
\State \Return Updated statistics $S_i, C_i$.
\end{algorithmic}
\end{algorithm}

\begin{algorithm}
\caption{$\textsc{Uniform}(i, T; S_i, C_i)$}
\label{alg:us-p}
\begin{algorithmic}
\Require Arm $i$, Budget $T$ and global statistics $S_i, C_i$
\For{attribute $j = 1, 2, \dots M$ of arm $i$}
    \For{samples $s = 1, 2, \dots \lfloor T/M\rfloor$} 
        \State Observe reward $X_{i,j} \sim \nu_{i,j}$
        \State Update $S_{i, J_{t+1}} \,, \, C_{i,J_{t+1}}$, $\hat{\mu}_{i, J_{t+1}}$ using  \eqref{eq:upd-rule-p}.
    \EndFor
\EndFor
\State \Return Updated statistics $C_i, S_i$
\end{algorithmic}
\end{algorithm}

FCSR combines these two ideas: SR provides the global elimination schedule across arms based on estimated arm means, while APT is used locally to focus samples on attributes that are close to the feasibility threshold. While SR eliminates sub-optimal arms, APT sampling is targeted at eliminating risky arms. In addition, a novel sampling strategy, $\textsc{SampleUntilFeasible}$ (SUF) allocates a dedicated portion of the overall budget toward ensuring the best feasible arm is not erroneously eliminated. SUF selectively samples infeasible attributes to prevent the best arm from being eliminated early. This combination naturally leads to a three-phase structure described in more detail below.

Fix a total sample budget \(T\) and fractional hyperparameters \(f,g\in(0,1)\). We reserve a total fraction of the budget \(fT\) that is split equally: for every arm \(i\) set the feasibility budget $P_i$
\[
P_i \leftarrow \big\lfloor fT/K\big\rfloor.
\]
Initialize the shared leftover feasibility pool \(T_{\text{extra}}\leftarrow 0\). When arm $i$ is eliminated and it's feasibility budget $P_i > 0$, we repurpose those unused samples by adding it to the "extra" sample pool $T_{\text{extra}} \gets T_{\text{extra}} + P_i$ which is then used to further sample attributes uniformly (see the Uniform phase below). The remaining \((1-f)T\) samples are allocated across rounds following the SR schedule.

In the vanilla SR algorithm, $n_r$ is the total number of samples received by all arms at the end of round $r$. Thus, $\Delta n_r := n_r-n_{r-1}$ is the "per-round" budget for each surviving arm. Formally, 
\begin{equation}
\label{eq:dnr-defp}
n_r := \Big\lceil \frac{(1-f)T}{\bar n (K+1-r)}\Big\rceil \,, \quad \Delta n_r := n_r-n_{r-1}
\end{equation}
with \(n_0:=0\). $\bar{n} = \frac{1}{2} + \sum_{k=2}^{K} \frac{1}{k}$ is a normalizing constant. In round \(r=1,\dots,K-1\) let \(S_r\) denote the set of surviving arms. Each surviving arm \(i\in S_r\) receives three sequential sampling phases (all integer quantities are rounded as described below). Let \(|S_r|\) denote the number of surviving arms and \(\mathcal{A}=[K]\) be the set of all arms.
\begin{enumerate}[]
    \item \textbf{Uniform phase.} Allocate \(\big\lfloor(1-g)\Delta n_r\big\rfloor\) samples to each arm \(i\) uniformly across that arm's attributes (Algorithm \ref{alg:us-p}). In addition, redistribute \(\big\lfloor T_{\text{extra}}/|S_r|\big\rfloor\) samples from the leftover pool \(T_{\text{extra}}\) to each surviving arm; reduce \(T_{\text{extra}}\) accordingly.    
    \item \textbf{Risky (APT) phase.} Allocate \(\big\lfloor g\Delta n_r\rfloor\) samples \emph{per arm} according to the APT thresholding subroutine (Algorithm~\ref{alg:apt-p}). The APT allocation for arm \(i\) is applied after the uniform-phase observations of round \(r\).    
    \item \textbf{Feasibility phase.} For any attribute \(j\) of arm \(i\) that remains \emph{empirically infeasible} draw additional samples on that attribute sequentially, up to the arm's remaining feasibility budget \(P_i\). We stop sampling attribute \((i,j)\) when it is empirically feasible.  Deduct each feasibility sample from \(P_i\). If arm \(i\) is discarded later its remaining \(P_i\) is transferred to the pool: \(T_{extra}\leftarrow T_{\text{extra}}+P_i\). See \textsc{SampleUntilFeasible} (Algorithm \ref{alg:suf-p}).
\end{enumerate}

After all phases in round \(r\) compute an elimination score for each surviving arm \(i\). The score for arm $i$ at some time $t$ is defined as
\begin{equation}
    \label{eq:score-supm-p}
    s(i;t) :=
    \begin{cases}
        \hat\mu_i(t) &\text{if }\min_{j\in[M]}\hat\mu_{ij}(t)>\tau,\\
        \min_{j\in[M]}\hat\mu_{ij}(t)  &\text{otherwise.}
    \end{cases}
\end{equation}
 Repeat until one arm remains; return that arm if it is feasible, else return $-1$. For the complete pseudocode, refer Algorithm \ref{alg:fcsr-p}.

\begin{algorithm}[H]
\caption{\textsc{SampleUntilFeasible}$(i, P_i; S_{i}, C_{i}, \tau)$}
\label{alg:suf-p}
\begin{algorithmic}
\Require Arm $i$, feasibility budget $P_i$, global $S_i \in \mathbb{R}^{M}$, $C_i \in \mathbb{N}^{M}$, threshold $\tau$
\State \textbf{Initialization:} Use existing $S_i, C_i, \hat{\mu}_{i,j} \;\forall\; j \in [M].$
\While{$P_i \geq 1$} 
\State $\mathcal{F}^{c, (i)} \gets \{j : \hat{\mu}_{i,j} \leq \tau\}$ \Comment{Set of infeasible attributes} 
\State Break if $\mathcal{F}^{c, (i)} = \emptyset$
\State $j^* \gets \min_{j \in \mathcal{F}^{c, (i)}}j$ \Comment{Select lowest index} 
\While{$\hat{\mu}_{i,j^\star} \leq \tau$}
\State Observe reward $X_{i,j^\star} \sim \nu_{i,j^\star}$
\State Update $S_{i, j^\star} , C_{i,j^\star}, \hat{\mu}_{i, j^\star}$ using \eqref{eq:upd-rule-p}.

\State $P_i \gets P_i - 1$
\State Break if $P_i = 0$ \Comment{ Break when feasibility budget exhausted} 

\EndWhile
\EndWhile
\State \Return Updated statistics $S_i, C_i$.
\end{algorithmic}
\end{algorithm}

\begin{algorithm}
\caption{$\textsc{FCSR}(T, \tau, f, g \;;\; S, C)$}
\label{alg:fcsr-p}

\begin{algorithmic}
\Require $K$ arms, $M$ attributes/arm, budget $T$, threshold $\tau$, fractions $f, g \in (0,1)$ and matrices $S, C$.
\Ensure The best feasible arm or -1.

\State \textbf{Initialize:}  Global statistics $S_{i,j}, C_{i,j}, \hat{\mu}_{i,j} \gets 0.$
\State $P_i \gets \lfloor \frac{fT}{K} \rfloor \; \forall \; i$,  $T_{\text{extra}} \gets 0$
\For{$r = 1$ to $K-1$}
    \State Define $\Delta n_r \gets n_r - n_{r-1}$ as in \eqref{eq:dnr-defp}.    
    \State $T_{\text{extra}}^{\text{arm}} \gets T_{\text{extra}} / |\mathcal{A}|$; $T_{\text{extra}} \gets 0$.

    \For{each arm $i \in \mathcal{A}$}
        \State Run $\textsc{Uniform}(i, \lfloor (1-g)\Delta n_r \rfloor + T_{\text{extra}}^{\text{arm}}; S_{i}, C_{i})$.
        \State Run $\textsc{APT}(i, \lfloor g \cdot \Delta n_r \rfloor, \tau\;;\; S_i, C_i)$.
        \State Run $\textsc{SampleUntilFeasible}(i, P_i, \tau \;;\; S_i, C_i)$.
    \EndFor
    
    \State Compute $s(i) \; \forall \; i$ using \eqref{eq:score-supm-p}.
    \State Eliminate $\mathcal{A} \gets \mathcal{A} \setminus \{ \arg \min_{i\in \mathcal{A}} s(i) \}$.
    \State $T_{\text{extra}} \gets T_{\text{extra}} + P_{\arg \min_{i\in \mathcal{A}}s(i)}$. 
\EndFor
\State \textbf{If }{the last remaining arm $i_{\text{final}} \in \mathcal{A}$ is feasible} \textbf{then }\textbf{return} $i_{\text{final}.}$
\State \textbf{Else return} $0$.
\end{algorithmic}
\end{algorithm}

\subsection{Theoretical Analysis}
\begin{theorem}[Performance of FCSR]
Let $K, M \geq 1$. Given parameters $f, g \in (0,1)$ and $\tau$. Let $c=1/32R^2$ The probability of error of $\textsc{FCSR}(T, \tau, f, g$) satisfies
\begin{equation}
\label{eq:fcsr-ub-p}
\mathbb{P}(e) \leq 3K^2\exp\left( \frac{-cT}{\log(K)H_{FC}(\mathcal{B})} \right),
\end{equation} 
for all 
\begin{align*}
T \geq \max \Big\{ 256H_{tbp}R^2\log((\log(T) + 1)M), \\ 
\frac{4K^2M}{f}, \frac{K\max\{{H_{tbp}}^{-1}, (H_{2}^R)^{-1}\}}{\log(K)} \Big\}.
\end{align*}
\end{theorem}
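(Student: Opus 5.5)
The plan follows the standard Successive Rejects analysis of \citet{Audibert2010BestArmIdentification}. We decompose the error event by the round in which the first ``wrong'' elimination happens, and then control each round with sub-Gaussian concentration. An error $I_T\neq i^\star$ can occur in only a few ways. Either $i^\star$ is eliminated in some round $r$, or (when $i^\star=0$ or $i^\star$ survives) a risky arm $r'\in\mathcal{R}$ survives to the end and is empirically feasible. The second case would put its score above that of $i^\star$ or of the last competitor. So it suffices to bound, for each round $r$, the probability that $s(i^\star)\le \min$ over the surviving scores, and to add the failure events of the final feasibility check. A union bound over rounds and arms produces the $K^2$ prefactor. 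The factor $3$ comes from the three failure modes (i)--(iii) listed before the lower bound: \emph{feasibility}, \emph{mean discrimination} and \emph{risky} errors. Each of these will be shown to be at most $K^2\exp(-cT/(\log(K) H_{FC}))$.

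\textbf{Mean discrimination (mode ii).} At the end of round $r$ at least $r$ arms have been removed. Hence, if $i^\star$ is eliminated, some arm among the $K+1-r$ worst (excluding risky ones) must have score at least $s(i^\star)$. Each surviving arm has at least $\lfloor(1-g)\Delta n_r\rfloor$ per-round uniform samples accumulated, i.e.\ roughly $(1-f)(1-g)n_r/M$ samples per attribute. Summing over the $M$ attributes, the arm mean $\hat\mu_i$ is an average of independent sub-Gaussian terms with variance proxy $R^2/(M\cdot n_r/M)$. Hoeffding then gives
\begin{equation*}
\mathbb{P}(\hat\mu_i\ge\hat\mu_{i^\star})\le 2\exp\bigl(-n_r\Delta_i^2/(8R^2)\bigr).
\end{equation*}
Taking the arm of rank $K+1-r+|\mathcal{R}|$ and using $n_r\ge (1-f)T/(\bar n (K+1-r))$, $\bar n\le\log(2K)$ and $(K+1-r)\Delta_{(K+1-r)}^{-2}\le H_2^R$ yields $\exp(-cT/(\log K\, H_2^R))$. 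The shift by $|\mathcal{R}|+2$ in the definition of $H_2^R$ exactly accounts for risky arms, which may survive but must be handled by mode (iii). The constraint that $\hat\mu_{i^\star}$ be empirically feasible is separated out into mode (i).

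\textbf{Feasibility of $i^\star$ (mode i).} Here we use \textsc{SampleUntilFeasible}. The dedicated budget $P_{i^\star}\ge fT/K-1$ is at least $4KM$ by the condition $T\ge 4K^2M/f$. We bound the probability that some attribute $(i^\star,j)$ is empirically infeasible after SUF stops. SUF keeps sampling such an attribute until it becomes feasible. Failure therefore requires that, for some $j$, the running mean $\hat\mu_{i^\star j}$ stays $\le\tau$ over a stretch of length order $fT/(KM)$. We plan to control this with a maximal (Doob/peeling) sub-Gaussian inequality over the sampling times, and we expect a bound $M\exp(-c\,fT\bar\Delta_{i^\star j}^2/K)$. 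Since $H_f=\frac{K}{\log K}\max_j\bar\Delta_{i^\star j}^{-2}$, this is $\exp(-cT/(\log K\,H_f))$. The $\log((\log T+1)M)$ term in the budget condition is the price of the peeling union bound.

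\textbf{Risky arms (mode iii) — the main obstacle.} We must show that every $r'\in\mathcal{R}$ is detected as infeasible, so that its score becomes $\min_j\hat\mu_{r'j}<\tau<s(i^\star)$ and it is eliminated before $i^\star$. For this we invoke the APT guarantee of \citet{locatelli2016optimalalgorithmthresholdingbandit}, applied to the attributes of arm $r'$. After a cumulative APT budget $g n_r$, it gives misclassification probability at most $\exp(-g n_r/(c' H_{tbp}(r')))$ times a $\log$-factor union term, which again motivates the condition $T\ge 256H_{tbp}R^2\log((\log T+1)M)$. The difficulty is that APT here is warm-started from the uniform and SUF samples, and is run in disjoint chunks across rounds. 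The existing APT proof must therefore be rechecked on the favorable event $\xi$ where all empirical attribute means are within $\bar\Delta_{i,j}/2$ of the truth uniformly in the sample count. We expect that on $\xi$ the APT index argument goes through with arbitrary additional samples, since extra samples only help. In addition, SUF may push an infeasible attribute temporarily above $\tau$; we handle this by noting that on $\xi$ this cannot happen for truly infeasible attributes. Using $n_r\ge T/(\log(2K)K)$ and $H_{tbp}=K\max_i H_{tbp}(i)$ converts the APT bound into $\exp(-cT/(\log K\,H_{tbp}))$. The last condition $T\ge K\max\{H_{tbp}^{-1},(H_2^R)^{-1}\}/\log K$ absorbs rounding losses from the floors in $\Delta n_r$. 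Summing the three modes and taking $H_{FC}$ as the maximum concludes the proof.
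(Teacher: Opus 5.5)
Your architecture is the paper's: the Successive-Rejects decomposition into $f_i$ ($i^\star$ declared infeasible), $s_i$ (a non-risky arm of index $\ge|\mathcal{R}|+2$ out-scores $i^\star$, handled by the uniform pulls and $H_2^R$) and $r_i$ (an infeasible arm declared feasible, handled by the warm-started APT lemma and $H_{tbp}$), followed by a union bound over rounds.

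The genuine gap is in your feasibility step, mode (i). The budget $P_{i^\star}\approx fT/K$ is shared by all $M$ attributes and all $r\le K-1$ rounds. SUF is re-invoked after every uniform/APT phase, so its pulls on one attribute are not a single contiguous stretch, and pigeonholing over excursions only gives length $P_{i^\star}/(Mr)$. You can legitimately get length $P_{i^\star}/M$ with a last-exit argument. Write $\bar X_{1:n}$ for the mean of the first $n$ rewards of attribute $(i^\star,j)$. Every SUF pull on it occurs at a distinct count $n$ with $\bar X_{1:n}\le\tau$, so $m$ such pulls force $\bar X_{1:n}\le\tau$ for some $n\ge m-1$. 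Ville's inequality for $\exp(-\lambda\sum_{u\le n}(X_u-\mu_{i^\star j})-\lambda^2R^2n/2)$ with $\lambda=\bar\Delta_{i^\star j}/R^2$ then bounds this by $\exp(-\bar\Delta_{i^\star j}^2(m-1)/(2R^2))$.

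But with $m=P_{i^\star}/M$ and a union bound over $j$, you get $M\exp(-c\,fT\bar\Delta^2/(KM))$, not the $M\exp(-c\,fT\bar\Delta^2/K)$ you state. Because $H_f$ involves $\max_j\bar\Delta_{i^\star j}^{-2}$ rather than a sum over $j$, this extra $M$ in the exponent cannot be absorbed. A gap-weighted split of the budget across attributes improves it only to $\sum_j\bar\Delta_{i^\star j}^{-2}=H_{tbp}(i^\star)$, which is exactly the APT-type rate SUF is meant to beat. So as planned, mode (i) only proves $\exp(-cT/(M\log K\,H_f))$.

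The paper removes the $M$ by aggregating instead of pigeonholing. Each SUF demand gets a geometric tail $\mathbb{P}(Z\ge t\mid\mathcal{H})\le e^{-\alpha(t-1)}$ with $\alpha=\min_j\bar\Delta_{i^\star j}^2/(2R^2)$, again via Ville. A Chernoff bound on the total demand, $\mathbb{P}(S\ge P_{i^\star})\le e^{-\theta P_{i^\star}}G(\theta)^{Mr}$ at $\theta=\alpha/2$, then gives $\exp(-\alpha P_{i^\star}/4)$, whose exponent uses the whole budget. The last-exit variables $N_j:=\sup\{n:\bar X_{1:n}\le\tau\}$ fit this even better, for three reasons. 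SUF's total consumption is at most $\sum_j(N_j+1)$ regardless of rounds. The $N_j$ are functions of independent reward streams, so their MGFs multiply exactly. And they automatically account for the deficit left by the uniform/APT pulls.

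Be aware that this Chernoff step needs a burn-in $P_{i^\star}\gtrsim(\text{number of terms})/\alpha$, or else a prefactor exponential in the number of terms. The reason is that the per-term MGF bound at $\theta=\alpha/2$ is bounded away from $1$ (the paper's $G(\alpha/2)$ equals $1+e^{\alpha/2}$). This is a gap-dependent condition, roughly $T\gtrsim KMR^2\max_j\bar\Delta_{i^\star j}^{-2}/f$, and $T\ge4K^2M/f$ does not imply it. The paper's step asserting $G(\theta)<\theta$ is not valid, so you should add this condition explicitly.

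Smaller points:
\begin{itemize}
\item The $\log((\log T+1)M)$ condition comes from the APT lemma, not from the SUF step.
\item The event in mode (iii) cannot be ``within $\bar\Delta_{i,j}/2$ uniformly in the sample count'', which fails at small counts. It should be the stream-based event with radius $\sqrt{T\delta^2/(Hs)}$ used in the APT analysis.
\item Your exponents carry factors $f$, $g(1-f)$, $(1-f)(1-g)$ and the APT constant. They therefore give a constant depending on $f,g$, not the fixed $c=1/(32R^2)$ for all $f,g\in(0,1)$; the paper glosses over this too.
\end{itemize}
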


\textit{Proof Sketch.}
The main novelty is the sampling procedure SUF, while the overall analysis builds on the analysis of successive-rejects in \citet{Audibert2010BestArmIdentification}. We order arms by their true means and decompose the event that the optimal arm $i^\star$ is eliminated into three types: (i) $i^\star$ is incorrectly declared infeasible ($f_i$), (ii) a suboptimal feasible arm beats $i^\star$ by estimation error ($s_i$), or (iii) a risky (infeasible) arm is incorrectly declared feasible and eliminates $i^\star$ ($r_i$). A union bound over rounds gives $\sum_i \mathbb{P}(e_i)\le \sum_i \mathbb{P}(f_i)+\sum_i \mathbb{P}(s_i)+\sum_i \mathbb{P}(r_i)$. Each term is bounded separately  and the stated result is obtained. See Appendix \ref{app:fcsr} for the full proof.

Comparing the upper bound on the probability of error under FCSR \eqref{eq:fcsr-ub-p} with the lower bound \eqref{eq:lb-p}, we observe that the probability of error is characterized by the fundamental difficulty parameter $H_{FC}$ in both cases (with a $\log(K)$ factor), demonstrating the optimality of FCSR. \footnote{See Appendix \ref{app:just} for a more detailed discussion on \emph{optimality}} up to constant factors in the exponent. 

The novel sampling procedure $\textsc{SUF}$ is necessary for a much stronger upper bound on $\mathbb{P}(f_i)$ compared to the natural alternative of APT sampling. Intuitively, suppose the best arm is deemed infeasible due to the empirical mean of some attribute dipping below $\tau$, APT may concentrate samples on other less critical attributes that are deemed \emph{feasible}, simply because they are closer to the threshold. In contrast, $\textsc{SUF}$ focuses sampling on these crucial empirically infeasible attributes exclusively.

\begin{lemma}
Assuming $i^\star$ exists and $T \geq \max \{\frac{4K^2M}{f}\}$, the probability of $i^\star$ being deemed infeasible at the end of round $r \in [K-1]$ under SUF is  upper bounded by
\begin{equation}
\mathbb{P}(i^\star\in\mathcal F_r^c)\le \exp\!\Big(-\frac{fT}{16R^2\log K\,H_f}\Big).
\end{equation}
\end{lemma}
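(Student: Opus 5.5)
Note first that $\frac{fT}{16R^2\log K\,H_f} = \frac{f\,T}{16R^2 K \max_j \bar\Delta_{i^\star j}^{-2}}$, since $H_f=\frac{K}{\log K}\max_j\bar\Delta_{i^\star j}^{-2}$. Writing $\bar\Delta_{\min}:=\min_j \bar\Delta_{i^\star j}>0$ (recall $i^\star$ is feasible) and $P:=\lfloor fT/K\rfloor$, the target is therefore $\exp\!\big(-P\bar\Delta_{\min}^2/(16R^2)\big)$ up to the floor, which $T\ge 4K^2M/f$ lets us absorb ($P\ge 4KM$, so $\lfloor fT/K\rfloor\ge fT/(2K)$ with room for constants). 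The plan is to show that, on the event that $i^\star$ is empirically infeasible at the end of round $r$, some attribute of $i^\star$ must have received on the order of $P/M$ (or more) SUF samples while its running empirical mean stayed at or below $\tau$. We then bound that event by a maximal sub-Gaussian concentration inequality.

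\textbf{Step 1 (structure of the bad event).} Suppose $\min_j\hat\mu_{i^\star j}\le\tau$ after round $r$. SUF in round $r$ only stops while an empirically infeasible attribute remains if the budget $P_{i^\star}$ is exhausted. Since $i^\star$ is never eliminated on this event before round $r$, its budget is never transferred away. All $P$ samples have therefore been spent by SUF on attributes at times when their empirical mean was $\le\tau$. Because SUF samples each infeasible attribute (lowest index first) until it becomes empirically feasible, every SUF sample of attribute $j$ is drawn while $\hat\mu_{i^\star j}\le\tau$. Hence, by pigeonhole, some $j$ received at least $P/M$ SUF samples, and in particular at some count $n\ge P/M$ its running mean satisfies $\hat\mu_{i^\star j}(n)\le \tau$. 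A second possibility is that the final infeasibility arises from an attribute whose mean dipped during the Uniform/APT phases of round $r$ after SUF had already made it feasible. This is not possible as stated, because SUF runs last in each round. So the only way to end round $r$ infeasible is budget exhaustion.

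\textbf{Step 2 (concentration).} For a fixed $j$, the event "there exists $n\ge n_0$ with $\hat\mu_{i^\star j}(n)\le\tau$" is the event that the centered empirical mean deviates below by at least $\bar\Delta_{i^\star j}\ge\bar\Delta_{\min}$ at some $n\ge n_0$. I would bound it by a peeling/union bound over $n\ge n_0$ using the sub-Gaussian Chernoff bound $\exp(-n\bar\Delta^2/(2R^2))$. Summing the geometric tail gives $\lesssim \exp(-n_0\bar\Delta^2/(2R^2))/(1-e^{-\bar\Delta^2/2R^2})$; a Doob maximal inequality on the exponential supermartingale avoids that denominator and is the cleaner choice. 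A union bound over the $M$ attributes then gives $M\exp(-P\bar\Delta_{\min}^2/(2R^2M))$.

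\textbf{Main obstacle.} The bound from Step 2 carries an extra factor $M$ in the exponent's denominator and a prefactor $M$. Neither appears in the claimed bound $\exp(-fT\bar\Delta_{\min}^2/(16R^2K))$. So I expect the real work to be a sharper accounting that avoids the pigeonhole loss. The idea is to exploit that SUF spends its samples sequentially on one attribute at a time: the *first* attribute with true gap issues is sampled until feasible. The event is then that the sum of excursion lengths over attributes exceeds $P$. I would treat the total SUF budget via the combined sum $\sum_j (\text{time-to-feasibility of } j)$ and bound its moment generating function attribute-by-attribute, using independence across attributes. Each hitting time of level $\tau$ for a random walk with drift $\bar\Delta_j\ge\bar\Delta_{\min}$ has an exponential moment bounded like $\mathbb{E}[e^{\lambda N_j}]\le e^{O(1)}$ for $\lambda\asymp \bar\Delta_{\min}^2/R^2$. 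Markov's inequality on $\prod_j$ then gives $\exp(-\lambda P + O(M))$, and $P\ge 4KM$ from the condition on $T$ absorbs the $O(M)$ term. This is the factor that should produce the constant $16$ and explain the hypothesis $T\ge 4K^2M/f$. If that fails, I would fall back on the cruder Step 2 bound and track constants.
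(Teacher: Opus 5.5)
Your main plan is the same argument as the paper's proof, and it fails at the same step: the claim that ``$P\ge 4KM$ absorbs the $O(M)$ term.'' The paper's proof applies Chernoff to the total SUF demand and uses Ville's inequality to get $\mathbb{P}(Z_j^{(s)}\ge t\mid\mathcal{H}_s)\le e^{-\alpha(t-1)}$ with $\alpha=\bar\Delta_{\min}^2/(2R^2)$. It then bounds the mgf by $G(\theta)$, multiplies over attributes and rounds, and sets $\theta=\alpha/2$.

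Take $N_j$ to be the occupation time $\#\{n\ge 1:\hat\mu_{i^\star j}(n)\le\tau\}$ of attribute $j$'s sample stream. It dominates all SUF samples of $j$ over all rounds. Unlike a hitting time started from the deficit left by the Uniform/APT phases, it has a tail that does not depend on that deficit. Ville gives $\mathbb{P}(N_j\ge t)\le e^{-\alpha t}$, hence $\mathbb{E}[e^{\alpha N_j/2}]\le 2$ and $\mathbb{P}(\sum_j N_j\ge P)\le 2^M e^{-\alpha P/2}$. This is at most $e^{-\alpha P/4}$ only when $\alpha P\ge 4M\log 2$, that is, $P\ge 8MR^2\log 2/\bar\Delta_{\min}^2$. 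That is a gap-dependent burn-in, and $P\ge 4KM$ says nothing about it. Changing $\theta$ does not help. By Jensen, $e^{-\theta P}\prod_j\mathbb{E}[e^{\theta N_j}]\ge e^{\theta(\sum_j\mathbb{E}[N_j]-P)}$, and for Gaussian attributes $\mathbb{E}[N_j]$ is of order $R^2/\bar\Delta_{i^\star j}^2$. So the Chernoff bound is trivial until $P$ exceeds the expected total demand.

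No sharper accounting can remove this, because under only $T\ge 4K^2M/f$ the inequality is false. Take $K=2$ with the other arm, say, infeasible. Let the attributes be Gaussian with variance $R^2$, put every attribute of $i^\star$ at $\tau+R/\sqrt{M}$, and set $T=4K^2M/f$.
\begin{itemize}
\item The right-hand side equals $e^{-1/2}$, and $P=\lfloor fT/K\rfloor=8M$.
\item The Uniform phase gives each attribute only a constant number of samples, depending on $f,g$ alone, and APT adds only $O(M)$ samples in total.
\item So when SUF starts, about half the attributes sit below $\tau$ with a deficit of order $R$.
\item Their first-passage times back above $\tau$ are independent, with mean of order $R/\bar\Delta=\sqrt{M}$.
\item The total demand is therefore of order $M^{3/2}\gg 8M$ with high probability, and $\mathbb{P}(i^\star\in\mathcal{F}_1^c)\to 1$ as $M\to\infty$.
\end{itemize}

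The paper's proof runs into the same obstacle. It arrives at the requirement $P_1\ge\frac{4}{\alpha}Mr\log G(\alpha/2)$ and then drops the gap dependence by asserting $G(\theta)<\theta$. That is false: from its own formula, $G(\alpha/2)=1+e^{\alpha/2}>2$.

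What your argument actually establishes is the bound $e^{-\alpha P/4}$ under an extra hypothesis of the form $fT\gtrsim KMR^2\max_j\bar\Delta_{i^\star j}^{-2}$. The occupation-time formulation also removes the paper's factor $r$. Your Step 1--2 fallback, $M\exp(-P\bar\Delta_{\min}^2/(2R^2M))$, holds with no extra hypothesis but is weaker.
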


\textit{Proof Sketch and Discussion.} The proof uses the inclusion that the event the best arm is deemed infeasible at the end of round $i$ is a subset of the event that the feasibility budget runs out at the end of round $i$. We construct random variables $Z_j$ that count the number of samples that would be needed until attribute $j$ of the best arm is deemed feasible and apply stopping time analysis for a tail bound. We then aggregate this quantity over all attributes and obtain the result via a Chernoff Bound. The full proof is provided in Appendix \ref{app:suf}. 

To see the superior performance of $\textsc{SUF}$ formally, consider the case where one allocates a fraction $f$ of the per-round budget $\Delta n_r$ (defined in \eqref{eq:dnr-defp}) uniformly across all $K$ arms toward "feasibility" sampling using APT; the error probability is bounded as
\begin{equation}
\mathbb{P}_{APT}(f_i) \leq \exp \left( \frac{-f(T-K)}{\overline{\log}(K)K(K+1-i)H_{tbp}(i^\star)} \right).
\end{equation}
The above result follows from an application of the upper bound on the probability of error of APT sampling (see Lemma \ref{lem:apt-p} in Appendix \ref{app:fcsr}). The factor in the denominator in this case scales as $\mathcal{O}(K^3)$ as compared to the $\mathcal{O}(K)$ scaling for $\textsc{SUF}$.  This is due to additional factors of $K$ in the denominator and because $H_{tbp}(i^\star)$ is a sum over all attributes and while $H_f$ is only a max.

\section{NUMERICAL ANALYSIS}
We evaluate FCSR on both synthetic and real-world data. Four synthetic bandit instances are constructed to vary the hardness index  and test FCSR’s robustness in varied difficult problem instances against baseline algorithms. We also test FCSR on randomly generated bandit instances derived from the MovieLens dataset \citep{harper2015movielens} to assess performance in a low-budget, practical setting.

\subsection{Experimental Setup}
FCSR hyperparameters are set to $g = 0.3$ and $f = 0.2$, chosen empirically to approximately equalize sample allocation across mean discrimination and feasibility testing. The fraction of samples allocated to $\textsc{Uniform} = 0.8 \times 0.7 = 0.56$. The fraction allocated to $\textsc{SUF} = 0.2$, $\textsc{APT} = 0.8 \times 0.3 = 0.24$. 

Synthetic experiments use bandit instances with $K = 10$ arms and $M = 5$ attributes, run for $N = 2000$ iterations. Attributes follow $\nu_{ij} \sim \mathcal{N}(\mu_{ij}, 0.3)$, where $\mathcal{N}(0,1)$ denotes a normal distribution with mean $0$ and variance $1$.  
Algorithms are evaluated using the probability of error, $\mathbb{P}(\text{error})$, defined as the fraction of runs where an algorithm fails to identify the true best feasible arm $i^\star$.  We plot the natural logarithm of the error probability, $\log(\mathbb{P}(\text{error}))$, against the total budget $T$. Reported values are averaged over the $N = 2000$ trials. 
\subsection{Baselines}

\paragraph{Uniform Sampling (US).} The total budget is divided equally across all arm-attribute pairs. Empirical means and the set of empirically feasible arms are computed. Upon exhaustion of the budget, the arm with the highest empirical overall mean is selected; if no arm is feasible, the algorithm returns 0 by convention.

\paragraph{Successive Rejects (SR).} We use a variant of SR that is similar to the Infeasibility First scheme of \cite{Faizal2022ConstrainedPE}. Samples are allocated using the Successive Rejects schedule. Arms are eliminated similarly as in FCSR, using the scoring rule in \eqref{eq:score-supm-p}. The final arm is flagged as optimal if feasible, otherwise 0 is returned.

\paragraph{Explore-then-Commit (ETC).} A two-stage variant of the classic Explore-then-Commit algorithm. In the first stage, a fixed fraction of the total budget is allocated to uniformly sample all attributes. Arms are ranked as in SR, and the top-$M$ arms form a candidate set. The remaining budget is then uniformly allocated among attributes of these candidates. The highest-ranked arm at the end is returned.


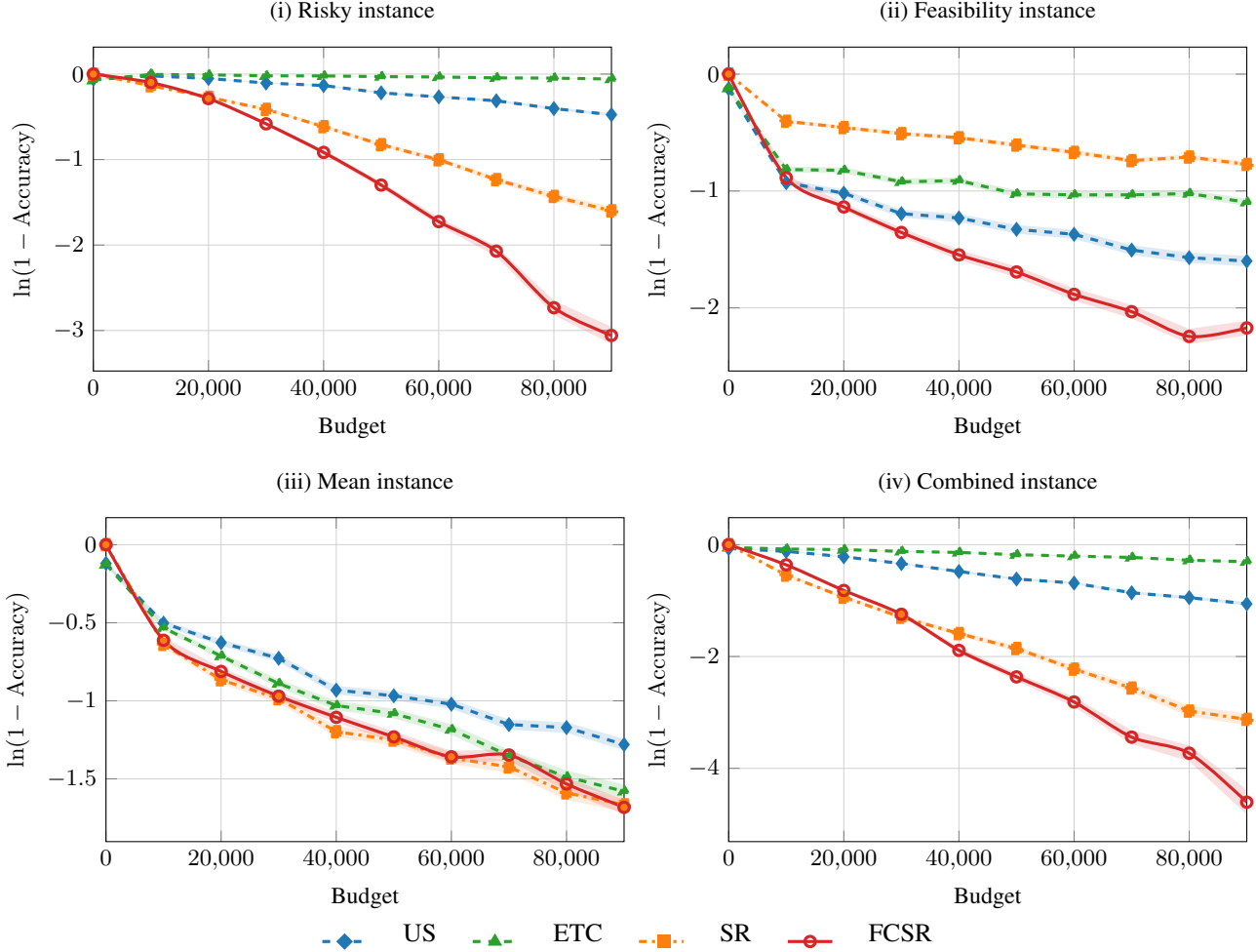
\begin{figure*}[!t]
  \centering

  \pgfplotsset{
    mygrid/.style={
      grid=both,
      major grid style={line width=0.22pt, draw=black!18},
      minor grid style={line width=0.08pt, draw=black!8}
    }
  }

  \noindent
  \begin{minipage}[t]{0.5\textwidth}
    \centering
    \begin{tikzpicture}
      \begin{axis}[
        width=\linewidth,
        height=6cm,
        title={ (i) Risky instance},
        xlabel={Budget},
        ylabel={$\ln(1-\mathrm{Accuracy})$},
        xmin=0, xmax=90000,
        xtick={0,20000,40000,60000,80000},
        mygrid,
        legend pos=south west,
        legend cell align=left
      ]
        \addplot[name path=us, color=colUS, very thick, dashed,  mark=diamond*, mark options = {solid}] coordinates {
          (0,   {ln(1-0.057)}) (10000,{ln(1-0.021)}) (20000,{ln(1-0.050)})
          (30000,{ln(1-0.099)}) (40000,{ln(1-0.125)}) (50000,{ln(1-0.197)})
          (60000,{ln(1-0.234)}) (70000,{ln(1-0.268)}) (80000,{ln(1-0.332)})
          (90000,{ln(1-0.378)})
        };
        \addplot[name path=usU, draw=none] coordinates {
          (0,{ln(1-0.057)+sqrt(0.057/(2000*(1-0.057)))})
          (10000,{ln(1-0.021)+sqrt(0.021/(2000*(1-0.021)))})
          (20000,{ln(1-0.050)+sqrt(0.050/(2000*(1-0.050)))})
          (30000,{ln(1-0.099)+sqrt(0.099/(2000*(1-0.099)))})
          (40000,{ln(1-0.125)+sqrt(0.125/(2000*(1-0.125)))})
          (50000,{ln(1-0.197)+sqrt(0.197/(2000*(1-0.197)))})
          (60000,{ln(1-0.234)+sqrt(0.234/(2000*(1-0.234)))})
          (70000,{ln(1-0.268)+sqrt(0.268/(2000*(1-0.268)))})
          (80000,{ln(1-0.332)+sqrt(0.332/(2000*(1-0.332)))})
          (90000,{ln(1-0.378)+sqrt(0.378/(2000*(1-0.378)))})
        };
        \addplot[name path=usL, draw=none] coordinates {
          (0,{ln(1-0.057)-sqrt(0.057/(2000*(1-0.057)))})
          (10000,{ln(1-0.021)-sqrt(0.021/(2000*(1-0.021)))})
          (20000,{ln(1-0.050)-sqrt(0.050/(2000*(1-0.050)))})
          (30000,{ln(1-0.099)-sqrt(0.099/(2000*(1-0.099)))})
          (40000,{ln(1-0.125)-sqrt(0.125/(2000*(1-0.125)))})
          (50000,{ln(1-0.197)-sqrt(0.197/(2000*(1-0.197)))})
          (60000,{ln(1-0.234)-sqrt(0.234/(2000*(1-0.234)))})
          (70000,{ln(1-0.268)-sqrt(0.268/(2000*(1-0.268)))})
          (80000,{ln(1-0.332)-sqrt(0.332/(2000*(1-0.332)))})
          (90000,{ln(1-0.378)-sqrt(0.378/(2000*(1-0.378)))})
        };
        \addplot[fill=colUS, fill opacity=0.15] fill between[of=usU and usL];
        \addplot[name path=etc, color=colETC, very thick, dashed, mark=triangle*] coordinates {
          (0,   {ln(1-0.070)}) (10000,{ln(1-0.004)}) (20000,{ln(1-0.009)})
          (30000,{ln(1-0.020)}) (40000,{ln(1-0.021)}) (50000,{ln(1-0.029)})
          (60000,{ln(1-0.033)}) (70000,{ln(1-0.043)}) (80000,{ln(1-0.046)})
          (90000,{ln(1-0.056)})
        };
        \addplot[name path=etcU, draw=none] coordinates {
          (0,{ln(1-0.070)+sqrt(0.070/(2000*(1-0.070)))})
          (10000,{ln(1-0.004)+sqrt(0.004/(2000*(1-0.004)))})
          (20000,{ln(1-0.009)+sqrt(0.009/(2000*(1-0.009)))})
          (30000,{ln(1-0.020)+sqrt(0.020/(2000*(1-0.020)))})
          (40000,{ln(1-0.021)+sqrt(0.021/(2000*(1-0.021)))})
          (50000,{ln(1-0.029)+sqrt(0.029/(2000*(1-0.029)))})
          (60000,{ln(1-0.033)+sqrt(0.033/(2000*(1-0.033)))})
          (70000,{ln(1-0.043)+sqrt(0.043/(2000*(1-0.043)))})
          (80000,{ln(1-0.046)+sqrt(0.046/(2000*(1-0.046)))})
          (90000,{ln(1-0.056)+sqrt(0.056/(2000*(1-0.056)))})
        };
        \addplot[name path=etcL, draw=none] coordinates {
          (0,{ln(1-0.070)-sqrt(0.070/(2000*(1-0.070)))})
          (10000,{ln(1-0.004)-sqrt(0.004/(2000*(1-0.004)))})
          (20000,{ln(1-0.009)-sqrt(0.009/(2000*(1-0.009)))})
          (30000,{ln(1-0.020)-sqrt(0.020/(2000*(1-0.020)))})
          (40000,{ln(1-0.021)-sqrt(0.021/(2000*(1-0.021)))})
          (50000,{ln(1-0.029)-sqrt(0.029/(2000*(1-0.029)))})
          (60000,{ln(1-0.033)-sqrt(0.033/(2000*(1-0.033)))})
          (70000,{ln(1-0.043)-sqrt(0.043/(2000*(1-0.043)))})
          (80000,{ln(1-0.046)-sqrt(0.046/(2000*(1-0.046)))})
          (90000,{ln(1-0.056)-sqrt(0.056/(2000*(1-0.056)))})
        };
        \addplot[fill=colETC, fill opacity=0.15] fill between[of=etcU and etcL];
        \addplot[name path=sr, color=colSR, very thick, dashdotted, mark=square*] coordinates {
          (0,   {ln(1-0.000)}) (10000,{ln(1-0.127)}) (20000,{ln(1-0.237)})
          (30000,{ln(1-0.338)}) (40000,{ln(1-0.459)}) (50000,{ln(1-0.562)})
          (60000,{ln(1-0.633)}) (70000,{ln(1-0.709)}) (80000,{ln(1-0.761)})
          (90000,{ln(1-0.798)})
        };
        \addplot[name path=srU, draw=none] coordinates {
          (0,{ln(1-0.000)+0})
          (10000,{ln(1-0.127)+sqrt(0.127/(2000*(1-0.127)))})
          (20000,{ln(1-0.237)+sqrt(0.237/(2000*(1-0.237)))})
          (30000,{ln(1-0.338)+sqrt(0.338/(2000*(1-0.338)))})
          (40000,{ln(1-0.459)+sqrt(0.459/(2000*(1-0.459)))})
          (50000,{ln(1-0.562)+sqrt(0.562/(2000*(1-0.562)))})
          (60000,{ln(1-0.633)+sqrt(0.633/(2000*(1-0.633)))})
          (70000,{ln(1-0.709)+sqrt(0.709/(2000*(1-0.709)))})
          (80000,{ln(1-0.761)+sqrt(0.761/(2000*(1-0.761)))})
          (90000,{ln(1-0.798)+sqrt(0.798/(2000*(1-0.798)))})
        };
        \addplot[name path=srL, draw=none] coordinates {
          (0,{ln(1-0.000)-0})
          (10000,{ln(1-0.127)-sqrt(0.127/(2000*(1-0.127)))})
          (20000,{ln(1-0.237)-sqrt(0.237/(2000*(1-0.237)))})
          (30000,{ln(1-0.338)-sqrt(0.338/(2000*(1-0.338)))})
          (40000,{ln(1-0.459)-sqrt(0.459/(2000*(1-0.459)))})
          (50000,{ln(1-0.562)-sqrt(0.562/(2000*(1-0.562)))})
          (60000,{ln(1-0.633)-sqrt(0.633/(2000*(1-0.633)))})
          (70000,{ln(1-0.709)-sqrt(0.709/(2000*(1-0.709)))})
          (80000,{ln(1-0.761)-sqrt(0.761/(2000*(1-0.761)))})
          (90000,{ln(1-0.798)-sqrt(0.798/(2000*(1-0.798)))})
        };
        \addplot[fill=colSR, fill opacity=0.15] fill between[of=srU and srL];
        \addplot[name path=fcsr, color=colFCSR, very thick, smooth, mark=o, mark options={solid}] coordinates {
          (0,   {ln(1-0.000)}) (10000,{ln(1-0.094)}) (20000,{ln(1-0.249)})
          (30000,{ln(1-0.441)}) (40000,{ln(1-0.600)}) (50000,{ln(1-0.727)})
          (60000,{ln(1-0.822)}) (70000,{ln(1-0.874)}) (80000,{ln(1-0.935)})
          (90000,{ln(1-0.953)})
        };
        \addplot[name path=fcsrU, draw=none] coordinates {
          (0,{ln(1-0.000)+0})
          (10000,{ln(1-0.094)+sqrt(0.094/(2000*(1-0.094)))})
          (20000,{ln(1-0.249)+sqrt(0.249/(2000*(1-0.249)))})
          (30000,{ln(1-0.441)+sqrt(0.441/(2000*(1-0.441)))})
          (40000,{ln(1-0.600)+sqrt(0.600/(2000*(1-0.600)))})
          (50000,{ln(1-0.727)+sqrt(0.727/(2000*(1-0.727)))})
          (60000,{ln(1-0.822)+sqrt(0.822/(2000*(1-0.822)))})
          (70000,{ln(1-0.874)+sqrt(0.874/(2000*(1-0.874)))})
          (80000,{ln(1-0.935)+sqrt(0.935/(2000*(1-0.935)))})
          (90000,{ln(1-0.953)+sqrt(0.953/(2000*(1-0.953)))})
        };
        \addplot[name path=fcsrL, draw=none] coordinates {
          (0,{ln(1-0.000)-0})
          (10000,{ln(1-0.094)-sqrt(0.094/(2000*(1-0.094)))})
          (20000,{ln(1-0.249)-sqrt(0.249/(2000*(1-0.249)))})
          (30000,{ln(1-0.441)-sqrt(0.441/(2000*(1-0.441)))})
          (40000,{ln(1-0.600)-sqrt(0.600/(2000*(1-0.600)))})
          (50000,{ln(1-0.727)-sqrt(0.727/(2000*(1-0.727)))})
          (60000,{ln(1-0.822)-sqrt(0.822/(2000*(1-0.822)))})
          (70000,{ln(1-0.874)-sqrt(0.874/(2000*(1-0.874)))})
          (80000,{ln(1-0.935)-sqrt(0.935/(2000*(1-0.935)))})
          (90000,{ln(1-0.953)-sqrt(0.953/(2000*(1-0.953)))})
        };
        \addplot[fill=colFCSR, fill opacity=0.15] fill between[of=fcsrU and fcsrL];
      \end{axis}
    \end{tikzpicture}
    \label{fig:synthetic_risky}
  \end{minipage}\hfill%
  \begin{minipage}[t]{0.5\textwidth}
    \centering
    \begin{tikzpicture}
      \begin{axis}[
        width=\linewidth,
        height=6cm,
        title={(ii) Feasibility instance },
        xlabel={Budget},
        ylabel={$\ln(1-\mathrm{Accuracy})$},
        xmin=0, xmax=90000,
        xtick={0,20000,40000,60000,80000},
        mygrid
      ]
        \addplot[name path=us2, color=colUS, very thick, dashed, mark=diamond*, mark options = {solid}] coordinates {
          (0,   {ln(1-0.118)}) (10000,{ln(1-0.605)}) (20000,{ln(1-0.639)})
          (30000,{ln(1-0.697)}) (40000,{ln(1-0.708)}) (50000,{ln(1-0.735)})
          (60000,{ln(1-0.746)}) (70000,{ln(1-0.778)}) (80000,{ln(1-0.792)})
          (90000,{ln(1-0.798)})
        };
        \addplot[name path=us2U, draw=none] coordinates {
          (0,{ln(1-0.118)+sqrt(0.118/(2000*(1-0.118)))})
          (10000,{ln(1-0.605)+sqrt(0.605/(2000*(1-0.605)))})
          (20000,{ln(1-0.639)+sqrt(0.639/(2000*(1-0.639)))})
          (30000,{ln(1-0.697)+sqrt(0.697/(2000*(1-0.697)))})
          (40000,{ln(1-0.708)+sqrt(0.708/(2000*(1-0.708)))})
          (50000,{ln(1-0.735)+sqrt(0.735/(2000*(1-0.735)))})
          (60000,{ln(1-0.746)+sqrt(0.746/(2000*(1-0.746)))})
          (70000,{ln(1-0.778)+sqrt(0.778/(2000*(1-0.778)))})
          (80000,{ln(1-0.792)+sqrt(0.792/(2000*(1-0.792)))})
          (90000,{ln(1-0.798)+sqrt(0.798/(2000*(1-0.798)))})
        };
        \addplot[name path=us2L, draw=none] coordinates {
          (0,{ln(1-0.118)-sqrt(0.118/(2000*(1-0.118)))})
          (10000,{ln(1-0.605)-sqrt(0.605/(2000*(1-0.605)))})
          (20000,{ln(1-0.639)-sqrt(0.639/(2000*(1-0.639)))})
          (30000,{ln(1-0.697)-sqrt(0.697/(2000*(1-0.697)))})
          (40000,{ln(1-0.708)-sqrt(0.708/(2000*(1-0.708)))})
          (50000,{ln(1-0.735)-sqrt(0.735/(2000*(1-0.735)))})
          (60000,{ln(1-0.746)-sqrt(0.746/(2000*(1-0.746)))})
          (70000,{ln(1-0.778)-sqrt(0.778/(2000*(1-0.778)))})
          (80000,{ln(1-0.792)-sqrt(0.792/(2000*(1-0.792)))})
          (90000,{ln(1-0.798)-sqrt(0.798/(2000*(1-0.798)))})
        };
        \addplot[fill=colUS, fill opacity=0.15] fill between[of=us2U and us2L];
        \addplot[name path=etc2, color=colETC, very thick, dashed, mark=triangle*] coordinates {
          (0,   {ln(1-0.113)}) (10000,{ln(1-0.558)}) (20000,{ln(1-0.561)})
          (30000,{ln(1-0.602)}) (40000,{ln(1-0.598)}) (50000,{ln(1-0.641)})
          (60000,{ln(1-0.644)}) (70000,{ln(1-0.644)}) (80000,{ln(1-0.640)})
          (90000,{ln(1-0.666)})
        };
        \addplot[name path=etc2U, draw=none] coordinates {
          (0,{ln(1-0.113)+sqrt(0.113/(2000*(1-0.113)))})
          (10000,{ln(1-0.558)+sqrt(0.558/(2000*(1-0.558)))})
          (20000,{ln(1-0.561)+sqrt(0.561/(2000*(1-0.561)))})
          (30000,{ln(1-0.602)+sqrt(0.602/(2000*(1-0.602)))})
          (40000,{ln(1-0.598)+sqrt(0.598/(2000*(1-0.598)))})
          (50000,{ln(1-0.641)+sqrt(0.641/(2000*(1-0.641)))})
          (60000,{ln(1-0.644)+sqrt(0.644/(2000*(1-0.644)))})
          (70000,{ln(1-0.644)+sqrt(0.644/(2000*(1-0.644)))})
          (80000,{ln(1-0.640)+sqrt(0.640/(2000*(1-0.640)))})
          (90000,{ln(1-0.666)+sqrt(0.666/(2000*(1-0.666)))})
        };
        \addplot[name path=etc2L, draw=none] coordinates {
          (0,{ln(1-0.113)-sqrt(0.113/(2000*(1-0.113)))})
          (10000,{ln(1-0.558)-sqrt(0.558/(2000*(1-0.558)))})
          (20000,{ln(1-0.561)-sqrt(0.561/(2000*(1-0.561)))})
          (30000,{ln(1-0.602)-sqrt(0.602/(2000*(1-0.602)))})
          (40000,{ln(1-0.598)-sqrt(0.598/(2000*(1-0.598)))})
          (50000,{ln(1-0.641)-sqrt(0.641/(2000*(1-0.641)))})
          (60000,{ln(1-0.644)-sqrt(0.644/(2000*(1-0.644)))})
          (70000,{ln(1-0.644)-sqrt(0.644/(2000*(1-0.644)))})
          (80000,{ln(1-0.640)-sqrt(0.640/(2000*(1-0.640)))})
          (90000,{ln(1-0.666)-sqrt(0.666/(2000*(1-0.666)))})
        };
        \addplot[fill=colETC, fill opacity=0.15] fill between[of=etc2U and etc2L];
        \addplot[name path=sr2, color=colSR, very thick, dashdotted, mark=square*] coordinates {
          (0,   {ln(1-0.000)}) (10000,{ln(1-0.332)}) (20000,{ln(1-0.367)})
          (30000,{ln(1-0.399)}) (40000,{ln(1-0.420)}) (50000,{ln(1-0.455)})
          (60000,{ln(1-0.488)}) (70000,{ln(1-0.523)}) (80000,{ln(1-0.508)})
          (90000,{ln(1-0.538)})
        };
        \addplot[name path=sr2U, draw=none] coordinates {
          (0,{ln(1-0.000)+0})
          (10000,{ln(1-0.332)+sqrt(0.332/(2000*(1-0.332)))})
          (20000,{ln(1-0.367)+sqrt(0.367/(2000*(1-0.367)))})
          (30000,{ln(1-0.399)+sqrt(0.399/(2000*(1-0.399)))})
          (40000,{ln(1-0.420)+sqrt(0.420/(2000*(1-0.420)))})
          (50000,{ln(1-0.455)+sqrt(0.455/(2000*(1-0.455)))})
          (60000,{ln(1-0.488)+sqrt(0.488/(2000*(1-0.488)))})
          (70000,{ln(1-0.523)+sqrt(0.523/(2000*(1-0.523)))})
          (80000,{ln(1-0.508)+sqrt(0.508/(2000*(1-0.508)))})
          (90000,{ln(1-0.538)+sqrt(0.538/(2000*(1-0.538)))})
        };
        \addplot[name path=sr2L, draw=none] coordinates {
          (0,{ln(1-0.000)-0})
          (10000,{ln(1-0.332)-sqrt(0.332/(2000*(1-0.332)))})
          (20000,{ln(1-0.367)-sqrt(0.367/(2000*(1-0.367)))})
          (30000,{ln(1-0.399)-sqrt(0.399/(2000*(1-0.399)))})
          (40000,{ln(1-0.420)-sqrt(0.420/(2000*(1-0.420)))})
          (50000,{ln(1-0.455)-sqrt(0.455/(2000*(1-0.455)))})
          (60000,{ln(1-0.488)-sqrt(0.488/(2000*(1-0.488)))})
          (70000,{ln(1-0.523)-sqrt(0.523/(2000*(1-0.523)))})
          (80000,{ln(1-0.508)-sqrt(0.508/(2000*(1-0.508)))})
          (90000,{ln(1-0.538)-sqrt(0.538/(2000*(1-0.538)))})
        };
        \addplot[fill=colSR, fill opacity=0.15] fill between[of=sr2U and sr2L];
        \addplot[name path=fcsr2, color=colFCSR, very thick, smooth, mark=o, mark options={solid}] coordinates {
          (0,   {ln(1-0.000)}) (10000,{ln(1-0.588)}) (20000,{ln(1-0.679)})
          (30000,{ln(1-0.742)}) (40000,{ln(1-0.787)}) (50000,{ln(1-0.816)})
          (60000,{ln(1-0.848)}) (70000,{ln(1-0.869)}) (80000,{ln(1-0.894)})
          (90000,{ln(1-0.886)})
        };
        \addplot[name path=fcsr2U, draw=none] coordinates {
          (0,{ln(1-0.000)+0})
          (10000,{ln(1-0.588)+sqrt(0.588/(2000*(1-0.588)))})
          (20000,{ln(1-0.679)+sqrt(0.679/(2000*(1-0.679)))})
          (30000,{ln(1-0.742)+sqrt(0.742/(2000*(1-0.742)))})
          (40000,{ln(1-0.787)+sqrt(0.787/(2000*(1-0.787)))})
          (50000,{ln(1-0.816)+sqrt(0.816/(2000*(1-0.816)))})
          (60000,{ln(1-0.848)+sqrt(0.848/(2000*(1-0.848)))})
          (70000,{ln(1-0.869)+sqrt(0.869/(2000*(1-0.869)))})
          (80000,{ln(1-0.894)+sqrt(0.894/(2000*(1-0.894)))})
          (90000,{ln(1-0.886)+sqrt(0.886/(2000*(1-0.886)))})
        };
        \addplot[name path=fcsr2L, draw=none] coordinates {
          (0,{ln(1-0.000)-0})
          (10000,{ln(1-0.588)-sqrt(0.588/(2000*(1-0.588)))})
          (20000,{ln(1-0.679)-sqrt(0.679/(2000*(1-0.679)))})
          (30000,{ln(1-0.742)-sqrt(0.742/(2000*(1-0.742)))})
          (40000,{ln(1-0.787)-sqrt(0.787/(2000*(1-0.787)))})
          (50000,{ln(1-0.816)-sqrt(0.816/(2000*(1-0.816)))})
          (60000,{ln(1-0.848)-sqrt(0.848/(2000*(1-0.848)))})
          (70000,{ln(1-0.869)-sqrt(0.869/(2000*(1-0.869)))})
          (80000,{ln(1-0.894)-sqrt(0.894/(2000*(1-0.894)))})
          (90000,{ln(1-0.886)-sqrt(0.886/(2000*(1-0.886)))})
        };
        \addplot[fill=colFCSR, fill opacity=0.15] fill between[of=fcsr2U and fcsr2L];
      \end{axis}
    \end{tikzpicture}
    \label{fig:synthetic_feas}
  \end{minipage}

  \vspace{6pt}

  \noindent
  \begin{minipage}[t]{0.5\textwidth}
    \centering
    \begin{tikzpicture}
      \begin{axis}[
        width=\linewidth,
        height=6cm, 
        title={(iii) Mean instance },
        xlabel={Budget},
        ylabel={$\ln(1-\mathrm{Accuracy})$},
        xmin=0, xmax=90000,
        xtick={0,20000,40000,60000,80000},
        mygrid
      ]
        \addplot[name path=us3, color=colUS, very thick, dashed, mark=diamond*, mark options = {solid}] coordinates {
          (0,   {ln(1-0.113)}) (10000,{ln(1-0.395)}) (20000,{ln(1-0.466)})
          (30000,{ln(1-0.517)}) (40000,{ln(1-0.606)}) (50000,{ln(1-0.620)})
          (60000,{ln(1-0.640)}) (70000,{ln(1-0.684)}) (80000,{ln(1-0.690)})
          (90000,{ln(1-0.722)})
        };
        \addplot[name path=us3U, draw=none] coordinates {
          (0,{ln(1-0.113)+sqrt(0.113/(2000*(1-0.113)))})
          (10000,{ln(1-0.395)+sqrt(0.395/(2000*(1-0.395)))})
          (20000,{ln(1-0.466)+sqrt(0.466/(2000*(1-0.466)))})
          (30000,{ln(1-0.517)+sqrt(0.517/(2000*(1-0.517)))})
          (40000,{ln(1-0.606)+sqrt(0.606/(2000*(1-0.606)))})
          (50000,{ln(1-0.620)+sqrt(0.620/(2000*(1-0.620)))})
          (60000,{ln(1-0.640)+sqrt(0.640/(2000*(1-0.640)))})
          (70000,{ln(1-0.684)+sqrt(0.684/(2000*(1-0.684)))})
          (80000,{ln(1-0.690)+sqrt(0.690/(2000*(1-0.690)))})
          (90000,{ln(1-0.722)+sqrt(0.722/(2000*(1-0.722)))})
        };
        \addplot[name path=us3L, draw=none] coordinates {
          (0,{ln(1-0.113)-sqrt(0.113/(2000*(1-0.113)))})
          (10000,{ln(1-0.395)-sqrt(0.395/(2000*(1-0.395)))})
          (20000,{ln(1-0.466)-sqrt(0.466/(2000*(1-0.466)))})
          (30000,{ln(1-0.517)-sqrt(0.517/(2000*(1-0.517)))})
          (40000,{ln(1-0.606)-sqrt(0.606/(2000*(1-0.606)))})
          (50000,{ln(1-0.620)-sqrt(0.620/(2000*(1-0.620)))})
          (60000,{ln(1-0.640)-sqrt(0.640/(2000*(1-0.640)))})
          (70000,{ln(1-0.684)-sqrt(0.684/(2000*(1-0.684)))})
          (80000,{ln(1-0.690)-sqrt(0.690/(2000*(1-0.690)))})
          (90000,{ln(1-0.722)-sqrt(0.722/(2000*(1-0.722)))})
        };
        \addplot[fill=colUS, fill opacity=0.15] fill between[of=us3U and us3L];
        \addplot[name path=etc3, color=colETC, very thick, dashed, mark=triangle*] coordinates {
          (0,   {ln(1-0.119)}) (10000,{ln(1-0.413)}) (20000,{ln(1-0.510)})
          (30000,{ln(1-0.589)}) (40000,{ln(1-0.643)}) (50000,{ln(1-0.661)})
          (60000,{ln(1-0.694)}) (70000,{ln(1-0.741)}) (80000,{ln(1-0.774)})
          (90000,{ln(1-0.794)})
        };
        \addplot[name path=etc3U, draw=none] coordinates {
          (0,{ln(1-0.119)+sqrt(0.119/(2000*(1-0.119)))})
          (10000,{ln(1-0.413)+sqrt(0.413/(2000*(1-0.413)))})
          (20000,{ln(1-0.510)+sqrt(0.510/(2000*(1-0.510)))})
          (30000,{ln(1-0.589)+sqrt(0.589/(2000*(1-0.589)))})
          (40000,{ln(1-0.643)+sqrt(0.643/(2000*(1-0.643)))})
          (50000,{ln(1-0.661)+sqrt(0.661/(2000*(1-0.661)))})
          (60000,{ln(1-0.694)+sqrt(0.694/(2000*(1-0.694)))})
          (70000,{ln(1-0.741)+sqrt(0.741/(2000*(1-0.741)))})
          (80000,{ln(1-0.774)+sqrt(0.774/(2000*(1-0.774)))})
          (90000,{ln(1-0.794)+sqrt(0.794/(2000*(1-0.794)))})
        };
        \addplot[name path=etc3L, draw=none] coordinates {
          (0,{ln(1-0.119)-sqrt(0.119/(2000*(1-0.119)))})
          (10000,{ln(1-0.413)-sqrt(0.413/(2000*(1-0.413)))})
          (20000,{ln(1-0.510)-sqrt(0.510/(2000*(1-0.510)))})
          (30000,{ln(1-0.589)-sqrt(0.589/(2000*(1-0.589)))})
          (40000,{ln(1-0.643)-sqrt(0.643/(2000*(1-0.643)))})
          (50000,{ln(1-0.661)-sqrt(0.661/(2000*(1-0.661)))})
          (60000,{ln(1-0.694)-sqrt(0.694/(2000*(1-0.694)))})
          (70000,{ln(1-0.741)-sqrt(0.741/(2000*(1-0.741)))})
          (80000,{ln(1-0.774)-sqrt(0.774/(2000*(1-0.774)))})
          (90000,{ln(1-0.794)-sqrt(0.794/(2000*(1-0.794)))})
        };
        \addplot[fill=colETC, fill opacity=0.15] fill between[of=etc3U and etc3L];
        \addplot[name path=sr3, color=colSR, very thick, dashdotted, mark=square*] coordinates {
          (0,   {ln(1-0.000)}) (10000,{ln(1-0.472)}) (20000,{ln(1-0.579)})
          (30000,{ln(1-0.627)}) (40000,{ln(1-0.698)}) (50000,{ln(1-0.714)})
          (60000,{ln(1-0.745)}) (70000,{ln(1-0.759)}) (80000,{ln(1-0.796)})
          (90000,{ln(1-0.811)})
        };
        \addplot[name path=sr3U, draw=none] coordinates {
          (0,{ln(1-0.000)+0})
          (10000,{ln(1-0.472)+sqrt(0.472/(2000*(1-0.472)))})
          (20000,{ln(1-0.579)+sqrt(0.579/(2000*(1-0.579)))})
          (30000,{ln(1-0.627)+sqrt(0.627/(2000*(1-0.627)))})
          (40000,{ln(1-0.698)+sqrt(0.698/(2000*(1-0.698)))})
          (50000,{ln(1-0.714)+sqrt(0.714/(2000*(1-0.714)))})
          (60000,{ln(1-0.745)+sqrt(0.745/(2000*(1-0.745)))})
          (70000,{ln(1-0.759)+sqrt(0.759/(2000*(1-0.759)))})
          (80000,{ln(1-0.796)+sqrt(0.796/(2000*(1-0.796)))})
          (90000,{ln(1-0.811)+sqrt(0.811/(2000*(1-0.811)))})
        };
        \addplot[name path=sr3L, draw=none] coordinates {
          (0,{ln(1-0.000)-0})
          (10000,{ln(1-0.472)-sqrt(0.472/(2000*(1-0.472)))})
          (20000,{ln(1-0.579)-sqrt(0.579/(2000*(1-0.579)))})
          (30000,{ln(1-0.627)-sqrt(0.627/(2000*(1-0.627)))})
          (40000,{ln(1-0.698)-sqrt(0.698/(2000*(1-0.698)))})
          (50000,{ln(1-0.714)-sqrt(0.714/(2000*(1-0.714)))})
          (60000,{ln(1-0.745)-sqrt(0.745/(2000*(1-0.745)))})
          (70000,{ln(1-0.759)-sqrt(0.759/(2000*(1-0.759)))})
          (80000,{ln(1-0.796)-sqrt(0.796/(2000*(1-0.796)))})
          (90000,{ln(1-0.811)-sqrt(0.811/(2000*(1-0.811)))})
        };
        \addplot[fill=colSR, fill opacity=0.15] fill between[of=sr3U and sr3L];
        \addplot[name path=fcsr3, color=colFCSR, very thick, smooth, mark=o, mark options={solid}] coordinates {
          (0,   {ln(1-0.000)}) (10000,{ln(1-0.458)}) (20000,{ln(1-0.556)})
          (30000,{ln(1-0.621)}) (40000,{ln(1-0.669)}) (50000,{ln(1-0.708)})
          (60000,{ln(1-0.743)}) (70000,{ln(1-0.740)}) (80000,{ln(1-0.784)})
          (90000,{ln(1-0.814)})
        };
        \addplot[name path=fcsr3U, draw=none] coordinates {
          (0,{ln(1-0.000)+0})
          (10000,{ln(1-0.458)+sqrt(0.458/(2000*(1-0.458)))})
          (20000,{ln(1-0.556)+sqrt(0.556/(2000*(1-0.556)))})
          (30000,{ln(1-0.621)+sqrt(0.621/(2000*(1-0.621)))})
          (40000,{ln(1-0.669)+sqrt(0.669/(2000*(1-0.669)))})
          (50000,{ln(1-0.708)+sqrt(0.708/(2000*(1-0.708)))})
          (60000,{ln(1-0.743)+sqrt(0.743/(2000*(1-0.743)))})
          (70000,{ln(1-0.740)+sqrt(0.740/(2000*(1-0.740)))})
          (80000,{ln(1-0.784)+sqrt(0.784/(2000*(1-0.784)))})
          (90000,{ln(1-0.814)+sqrt(0.814/(2000*(1-0.814)))})
        };
        \addplot[name path=fcsr3L, draw=none] coordinates {
          (0,{ln(1-0.000)-0})
          (10000,{ln(1-0.458)-sqrt(0.458/(2000*(1-0.458)))})
          (20000,{ln(1-0.556)-sqrt(0.556/(2000*(1-0.556)))})
          (30000,{ln(1-0.621)-sqrt(0.621/(2000*(1-0.621)))})
          (40000,{ln(1-0.669)-sqrt(0.669/(2000*(1-0.669)))})
          (50000,{ln(1-0.708)-sqrt(0.708/(2000*(1-0.708)))})
          (60000,{ln(1-0.743)-sqrt(0.743/(2000*(1-0.743)))})
          (70000,{ln(1-0.740)-sqrt(0.740/(2000*(1-0.740)))})
          (80000,{ln(1-0.784)-sqrt(0.784/(2000*(1-0.784)))})
          (90000,{ln(1-0.814)-sqrt(0.814/(2000*(1-0.814)))})
        };
        \addplot[fill=colFCSR, fill opacity=0.15] fill between[of=fcsr3U and fcsr3L];
      \end{axis}
    \end{tikzpicture}
    \label{fig:synthetic_mean}
  \end{minipage}\hfill%
  \begin{minipage}[t]{0.5\textwidth}
    \centering
    \begin{tikzpicture}
      \begin{axis}[
        width=\linewidth,
        height=6cm,
        title={(iv) Combined instance},
        xlabel={Budget},
        ylabel={$\ln(1-\mathrm{Accuracy})$},
        xmin=0, xmax=90000,
        xtick={0,20000,40000,60000,80000},
        mygrid
      ]
        \addplot[name path=us4, color=colUS, very thick, dashed, mark=diamond*, mark options = {solid}] coordinates {
          (0,   {ln(1-0.066)}) (10000,{ln(1-0.113)}) (20000,{ln(1-0.194)})
          (30000,{ln(1-0.287)}) (40000,{ln(1-0.381)}) (50000,{ln(1-0.459)})
          (60000,{ln(1-0.497)}) (70000,{ln(1-0.577)}) (80000,{ln(1-0.612)})
          (90000,{ln(1-0.653)})
        };
        \addplot[name path=us4U, draw=none] coordinates {
          (0,{ln(1-0.066)+sqrt(0.066/(2000*(1-0.066)))})
          (10000,{ln(1-0.113)+sqrt(0.113/(2000*(1-0.113)))})
          (20000,{ln(1-0.194)+sqrt(0.194/(2000*(1-0.194)))})
          (30000,{ln(1-0.287)+sqrt(0.287/(2000*(1-0.287)))})
          (40000,{ln(1-0.381)+sqrt(0.381/(2000*(1-0.381)))})
          (50000,{ln(1-0.459)+sqrt(0.459/(2000*(1-0.459)))})
          (60000,{ln(1-0.497)+sqrt(0.497/(2000*(1-0.497)))})
          (70000,{ln(1-0.577)+sqrt(0.577/(2000*(1-0.577)))})
          (80000,{ln(1-0.612)+sqrt(0.612/(2000*(1-0.612)))})
          (90000,{ln(1-0.653)+sqrt(0.653/(2000*(1-0.653)))})
        };
        \addplot[name path=us4L, draw=none] coordinates {
          (0,{ln(1-0.066)-sqrt(0.066/(2000*(1-0.066)))})
          (10000,{ln(1-0.113)-sqrt(0.113/(2000*(1-0.113)))})
          (20000,{ln(1-0.194)-sqrt(0.194/(2000*(1-0.194)))})
          (30000,{ln(1-0.287)-sqrt(0.287/(2000*(1-0.287)))})
          (40000,{ln(1-0.381)-sqrt(0.381/(2000*(1-0.381)))})
          (50000,{ln(1-0.459)-sqrt(0.459/(2000*(1-0.459)))})
          (60000,{ln(1-0.497)-sqrt(0.497/(2000*(1-0.497)))})
          (70000,{ln(1-0.577)-sqrt(0.577/(2000*(1-0.577)))})
          (80000,{ln(1-0.612)-sqrt(0.612/(2000*(1-0.612)))})
          (90000,{ln(1-0.653)-sqrt(0.653/(2000*(1-0.653)))})
        };
        \addplot[fill=colUS, fill opacity=0.15] fill between[of=us4U and us4L];
        \addplot[name path=etc4, color=colETC, very thick, dashed, mark=triangle*] coordinates {
          (0,   {ln(1-0.048)}) (10000,{ln(1-0.075)}) (20000,{ln(1-0.085)})
          (30000,{ln(1-0.113)}) (40000,{ln(1-0.130)}) (50000,{ln(1-0.164)})
          (60000,{ln(1-0.185)}) (70000,{ln(1-0.204)}) (80000,{ln(1-0.243)})
          (90000,{ln(1-0.262)})
        };
        \addplot[name path=etc4U, draw=none] coordinates {
          (0,{ln(1-0.048)+sqrt(0.048/(2000*(1-0.048)))})
          (10000,{ln(1-0.075)+sqrt(0.075/(2000*(1-0.075)))})
          (20000,{ln(1-0.085)+sqrt(0.085/(2000*(1-0.085)))})
          (30000,{ln(1-0.113)+sqrt(0.113/(2000*(1-0.113)))})
          (40000,{ln(1-0.130)+sqrt(0.130/(2000*(1-0.130)))})
          (50000,{ln(1-0.164)+sqrt(0.164/(2000*(1-0.164)))})
          (60000,{ln(1-0.185)+sqrt(0.185/(2000*(1-0.185)))})
          (70000,{ln(1-0.204)+sqrt(0.204/(2000*(1-0.204)))})
          (80000,{ln(1-0.243)+sqrt(0.243/(2000*(1-0.243)))})
          (90000,{ln(1-0.262)+sqrt(0.262/(2000*(1-0.262)))})
        };
        \addplot[name path=etc4L, draw=none] coordinates {
          (0,{ln(1-0.048)-sqrt(0.048/(2000*(1-0.048)))})
          (10000,{ln(1-0.075)-sqrt(0.075/(2000*(1-0.075)))})
          (20000,{ln(1-0.085)-sqrt(0.085/(2000*(1-0.085)))})
          (30000,{ln(1-0.113)-sqrt(0.113/(2000*(1-0.113)))})
          (40000,{ln(1-0.130)-sqrt(0.130/(2000*(1-0.130)))})
          (50000,{ln(1-0.164)-sqrt(0.164/(2000*(1-0.164)))})
          (60000,{ln(1-0.185)-sqrt(0.185/(2000*(1-0.185)))})
          (70000,{ln(1-0.204)-sqrt(0.204/(2000*(1-0.204)))})
          (80000,{ln(1-0.243)-sqrt(0.243/(2000*(1-0.243)))})
          (90000,{ln(1-0.262)-sqrt(0.262/(2000*(1-0.262)))})
        };
        \addplot[fill=colETC, fill opacity=0.15] fill between[of=etc4U and etc4L];
        \addplot[name path=sr4, color=colSR, very thick, dashdotted, mark=square*] coordinates {
          (0,   {ln(1-0.000)}) (10000,{ln(1-0.420)}) (20000,{ln(1-0.609)})
          (30000,{ln(1-0.727)}) (40000,{ln(1-0.796)}) (50000,{ln(1-0.844)})
          (60000,{ln(1-0.892)}) (70000,{ln(1-0.923)}) (80000,{ln(1-0.949)})
          (90000,{ln(1-0.956)})
        };
        \addplot[name path=sr4U, draw=none] coordinates {
          (0,{ln(1-0.000)+0})
          (10000,{ln(1-0.420)+sqrt(0.420/(2000*(1-0.420)))})
          (20000,{ln(1-0.609)+sqrt(0.609/(2000*(1-0.609)))})
          (30000,{ln(1-0.727)+sqrt(0.727/(2000*(1-0.727)))})
          (40000,{ln(1-0.796)+sqrt(0.796/(2000*(1-0.796)))})
          (50000,{ln(1-0.844)+sqrt(0.844/(2000*(1-0.844)))})
          (60000,{ln(1-0.892)+sqrt(0.892/(2000*(1-0.892)))})
          (70000,{ln(1-0.923)+sqrt(0.923/(2000*(1-0.923)))})
          (80000,{ln(1-0.949)+sqrt(0.949/(2000*(1-0.949)))})
          (90000,{ln(1-0.956)+sqrt(0.956/(2000*(1-0.956)))})
        };
        \addplot[name path=sr4L, draw=none] coordinates {
          (0,{ln(1-0.000)-0})
          (10000,{ln(1-0.420)-sqrt(0.420/(2000*(1-0.420)))})
          (20000,{ln(1-0.609)-sqrt(0.609/(2000*(1-0.609)))})
          (30000,{ln(1-0.727)-sqrt(0.727/(2000*(1-0.727)))})
          (40000,{ln(1-0.796)-sqrt(0.796/(2000*(1-0.796)))})
          (50000,{ln(1-0.844)-sqrt(0.844/(2000*(1-0.844)))})
          (60000,{ln(1-0.892)-sqrt(0.892/(2000*(1-0.892)))})
          (70000,{ln(1-0.923)-sqrt(0.923/(2000*(1-0.923)))})
          (80000,{ln(1-0.949)-sqrt(0.949/(2000*(1-0.949)))})
          (90000,{ln(1-0.956)-sqrt(0.956/(2000*(1-0.956)))})
        };
        \addplot[fill=colSR, fill opacity=0.15] fill between[of=sr4U and sr4L];
        \addplot[name path=fcsr4, color=colFCSR, very thick, smooth, mark=o, mark options={solid}] coordinates {
          (0,   {ln(1-0.000)}) (10000,{ln(1-0.303)}) (20000,{ln(1-0.559)})
          (30000,{ln(1-0.712)}) (40000,{ln(1-0.849)}) (50000,{ln(1-0.906)})
          (60000,{ln(1-0.940)}) (70000,{ln(1-0.968)}) (80000,{ln(1-0.976)})
          (90000,{ln(1-0.990)})
        };
        \addplot[name path=fcsr4U, draw=none] coordinates {
          (0,{ln(1-0.000)+0})
          (10000,{ln(1-0.303)+sqrt(0.303/(2000*(1-0.303)))})
          (20000,{ln(1-0.559)+sqrt(0.559/(2000*(1-0.559)))})
          (30000,{ln(1-0.712)+sqrt(0.712/(2000*(1-0.712)))})
          (40000,{ln(1-0.849)+sqrt(0.849/(2000*(1-0.849)))})
          (50000,{ln(1-0.906)+sqrt(0.906/(2000*(1-0.906)))})
          (60000,{ln(1-0.940)+sqrt(0.940/(2000*(1-0.940)))})
          (70000,{ln(1-0.968)+sqrt(0.968/(2000*(1-0.968)))})
          (80000,{ln(1-0.976)+sqrt(0.976/(2000*(1-0.976)))})
          (90000,{ln(1-0.990)+sqrt(0.990/(2000*(1-0.990)))})
        };
        \addplot[name path=fcsr4L, draw=none] coordinates {
          (0,{ln(1-0.000)-0})
          (10000,{ln(1-0.303)-sqrt(0.303/(2000*(1-0.303)))})
          (20000,{ln(1-0.559)-sqrt(0.559/(2000*(1-0.559)))})
          (30000,{ln(1-0.712)-sqrt(0.712/(2000*(1-0.712)))})
          (40000,{ln(1-0.849)-sqrt(0.849/(2000*(1-0.849)))})
          (50000,{ln(1-0.906)-sqrt(0.906/(2000*(1-0.906)))})
          (60000,{ln(1-0.940)-sqrt(0.940/(2000*(1-0.940)))})
          (70000,{ln(1-0.968)-sqrt(0.968/(2000*(1-0.968)))})
          (80000,{ln(1-0.976)-sqrt(0.976/(2000*(1-0.976)))})
          (90000,{ln(1-0.990)-sqrt(0.990/(2000*(1-0.990)))})
        };
        \addplot[fill=colFCSR, fill opacity=0.15] fill between[of=fcsr4U and fcsr4L];
      \end{axis}
    \end{tikzpicture}
    \label{fig:synthetic_combined}
  \end{minipage}

\vspace{-27pt}
  \begin{center}
    \begin{tikzpicture}
        \matrix[matrix of nodes, column sep=10pt, row sep=0pt, anchor=center] {
        \draw[colUS, very thick, dashed] (0,0) -- (0.6,0);
        \fill[colUS] (0.3,0) node[draw=colUS, fill=colUS, diamond, inner sep=1.5pt] {}; &
        \node {US}; &
        \draw[colETC, very thick, dashed] (0,0) -- (0.6,0);
        \fill[colETC] (0.3,0) circle (0pt) node[draw=colETC, fill=colETC, regular polygon, regular polygon sides=3, inner sep=1pt] {}; &
        \node {ETC}; &
        \draw[colSR, very thick, dashdotted] (0,0) -- (0.6,0);
        \fill[colSR] (0.25,-0.08) rectangle ++(5pt,5pt) +(3pt,3pt); &
        \node {SR}; &
        \draw[colFCSR, very thick] (0,0) -- (0.6,0);
        \draw[colFCSR, very thick] (0.3,0) circle (2pt); &
        \node {FCSR}; \\
  };
\end{tikzpicture}
\end{center}
  \vspace{-18pt}
  
  \captionsetup{font=small}
  \caption{Performance comparison on synthetic instances. Each subplot shows
    $\ln(1-\mathrm{Accuracy})$ vs.\ budget for four algorithms (US, ETC, SR, FCSR).
    Shaded bands are $\pm1\sigma$ using the delta-method approximation
    $\mathrm{Var}(\ln(1-\hat{A})) \approx \tfrac{\hat{A}}{N(1-\hat{A})}$ with $N=2000$.}
  \label{fig:synthetic_all}
\end{figure*}

\subsection{Synthetic Instances}
We design four distinct problem instances to stress-test different aspects of the algorithms. These instances are parameterized by a small constant $a \in (0.001, 0.1)$ to control the difficulty. For notational convenience, define $[i:j] = \{i, i + 1, \dots, j\}$ where $i$ and $j$ are integers. Let $\mu_{[a:b], [c:d]} = x$ denote $\mu_{mk} = x$ for all $m \in [a:b]$ and all $k \in [c:d]$.

 \textbf{Experiment 1: Risky Instance.} We set $\tau = 0.5$ and $a = 0.01$. There is only feasible arm $i^\star = 10$ with $\mu_{i^\star, [1:M]} = 0.7$. The other $K-1=9$ arms are infeasible but possess high overall mean. $\mu_{[1:9], 5} = 0.5 - a \leq \tau$. $\mu_{[1:9], [1:4]} = 0.8 + \frac{a}{4}$. Thus, $\mu_{[1:9]} = 0.74$. 
    
    \emph{Results and Discussion.} Experiment $1$ is designed to challenge algorithms that pursue arms with a high overall mean without sufficiently verifying their feasibility. We observe that in Experiment 1, FCSR significantly outperforms the baseline algorithms that are less robust to risky arms, with SR in second place. See Figure \ref{fig:synthetic_all} for results of all 4 experiments.

 \textbf{Experiment 2: Feasibility Instance.}  We set $\tau = 0.5$ and $a=0.01$. The optimal arm $i^{\star} = 10$ has a high mean with $\mu_{i^\star, [1:4]}=0.8$, but its fifth attribute is barely feasible with $\mu_{i^\star, 5} =0.5+a$. The other $K-1=9$ arms are all comfortably feasible with $\mu_{[1:9],[1:5]} = 0.6$ but are significantly suboptimal. 
    
    \emph{Results and Discussion.} The difficulty in Experiment $2$ lies in gathering enough evidence to confirm the feasibility of the best arm without misidentifying one of the suboptimal but "safer" arms as the best. In Experiment 2, the Feasibility Instance and in Experiment 4, the Combined instance, FCSR still outperforms the baselines to a large extent, however the difference is not as pronounced as in Experiment 1.

 \textbf{Experiment 3: Mean Identification Instance.} We set $\tau = 0.3$ and $a=0.003$. All arms are clearly feasible. The best arm $i^\star = 1$ has $\mu_{i^\star, [1:5]} = 0.7$. The mean of each subsequent arm $k \in \{2,\ldots,10\}$ is progressively lower, decreasing as an arithmetic mean with parameter $a$, i.e. $\mu_k = \mu_{mk} = 0.7 - (k-1)a$ for all $k \in [2:10]$ and $m \in [1:5]$.
    
    \emph{Results and Discussion.} Experiment $3$ acts as a control and is equivalent to vanilla best arm identification. It tests FCSR on the performance degrade experienced due to partitioning its budget across various sampling strategies. In Experiment 3, we observe that SR is the best algorithm with FCSR not too far behind. This is to be expected as Experiment 3 is essentially a best arm identification problem.

 \textbf{Experiment 4: Combined Instance.} We set $\tau = 0.5$ and $a = 0.01$. This instance is a combination of the previous three instances. The best arm $i^\star = 10$ has arm mean $\mu_{i^\star} = 0.7$ with one attribute close to the threshold at $\mu_{i^\star, 5} = 0.5 + a$ while the rest are $\mu_{i^\star, [1:4]} = 0.75$. The first $5$ arms are risky with $\mu_{[1:5], 5} = 0.5 - a$, and $\mu_{[1:5], [1:4]} = 0.9 + \frac{a}{4}$. The next $4$ arms are difficult in the mean identification sense, i.e., all attributes are equal and the arm means decrease in an arithmetic progression with parameter $a$ from $\mu_{k} = 0.7$ to $\mu_{k} = 0.7 - 4a$ for $k \in [6:9]$. 

    \emph{Results and Discussion.} Experiment $4$ combines the above 3 difficult instances. We observe FCSR continues to observe a relatively steeper decay in log error probability and outperforms all other baselines. Since the problem instance consists of both mean discrimination and feasibility checking, we observe that the other algorithms, with SR in specific, is not as disfavored in this experiment as compared to Experiment $1$.

\begin{table*}[!t]
\centering
\footnotesize                       
\setlength{\tabcolsep}{8pt}         
\renewcommand{\arraystretch}{1}  
\caption{A movie portfolio of films of various genres. Rows with true mean $<0.73$ are shaded.}
\label{tab:movie-ratings-colored}
\begin{tabularx}{\textwidth}{@{} c c c l >{\raggedright\arraybackslash}X c @{}}
\toprule
Arm No. & True Arm mean & Attr. No. & Genre & Movie title & True attribute mean \\
\midrule
\multirow{5}{*}{0} & \multirow{5}{*}{0.826} &
1 & Comedy   & Princess Bride, The (1987)   & 0.826 \\
                   &                     &
2 & Action   & Star Wars: Episode IV - A New Hope (1977)               & 0.824 \\
                   &                     &
3 & Drama    & American Beauty (1999)                                  & 0.821 \\
                   &                     &
4 & Thriller & Dark City (1998)                                       & 0.761 \\
                   &                     &
5 & Sci\-Fi  & Army of Darkness (1993)                                 & 0.747 \\
\midrule
\multirow{5}{*}{1} & \multirow{5}{*}{0.740} &
1 & Comedy   & Blazing Saddles (1974)                                  & 0.772 \\
                   &                     &
2 & Action   & Star Wars: Episode VI - Return of the Jedi (1983)      & 0.799 \\
                   &                     &
3 & Drama    & Bridge on the River Kwai, The (1957)                   & 0.819 \\
\rowcolor{red!15}  &                     &
4 & Thriller & Con Air (1997)                                         & 0.640 \\
\rowcolor{red!15}  &                     &
5 & Sci\-Fi  & X-Files: Fight the Future, The (1998)                  & 0.668 \\
\midrule
\rowcolor{red!15} \multirow{5}{*}{2} & \multirow{5}{*}{0.710} &
1 & Comedy   & My Cousin Vinny (1992)                                  & 0.721 \\
\rowcolor{red!15}  &                     &
2 & Action   & Mission: Impossible (1996)                              & 0.680 \\
                   &                     &
3 & Drama    & Leaving Las Vegas (1995)                                & 0.735 \\
\rowcolor{red!15}  &                     &
4 & Thriller & Devil's Advocate, The (1997)                            & 0.709 \\
\rowcolor{red!15}                   &                     &
5 & Sci\-Fi  & Mad Max (1979)                                          & 0.706 \\
\bottomrule
\end{tabularx}
\end{table*}

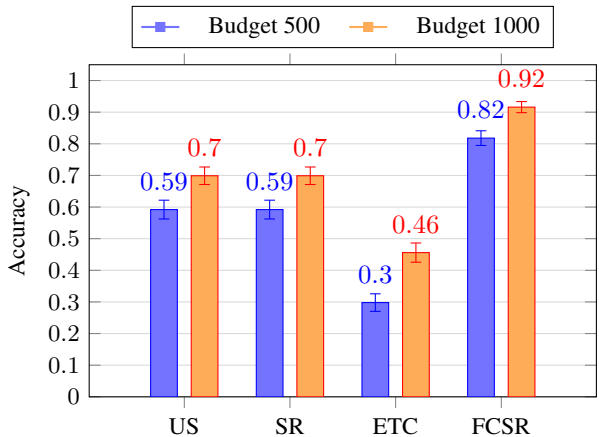
\begin{figure}[t]
\centering
\begin{tikzpicture}
  \newcommand{\Ntrials}{1000} 

  \pgfmathsetmacro{\errUSfive}{1.96*sqrt(0.592*(1-0.592)/\Ntrials)}
  \pgfmathsetmacro{\errUSthousand}{1.96*sqrt(0.699*(1-0.699)/\Ntrials)}

  \pgfmathsetmacro{\errSRfive}{1.96*sqrt(0.592*(1-0.592)/\Ntrials)}
  \pgfmathsetmacro{\errSRthousand}{1.96*sqrt(0.699*(1-0.699)/\Ntrials)}

  \pgfmathsetmacro{\errETCfive}{1.96*sqrt(0.298*(1-0.298)/\Ntrials)}
  \pgfmathsetmacro{\errETCthousand}{1.96*sqrt(0.456*(1-0.456)/\Ntrials)}

  \pgfmathsetmacro{\errFCSRfive}{1.96*sqrt(0.818*(1-0.818)/\Ntrials)}
  \pgfmathsetmacro{\errFCSRthousand}{1.96*sqrt(0.916*(1-0.916)/\Ntrials)}

  \begin{axis}[
      ybar,
      bar width=10pt,
      width=\linewidth,
      height=6cm,
      ymin=0,
      ymax=1.05,
      ylabel={Accuracy},
      symbolic x coords={US,SR,ETC,FCSR},
      xtick=data,
      enlarge x limits=0.3,
      tick label style={font=\small},
      label style={font=\small},
      ytick style={font=\small},
      ytick={0,0.1,...,1.0},
      ymajorgrids=true,
      grid style={draw=gray!30},
      legend style={
        font=\small,
        at={(0.5,1.05)},
        anchor=south,
        legend columns=2,
        column sep=0.3cm,
        row sep=2pt},
      nodes near coords,
      nodes near coords align={vertical},
      every node near coord/.append style={yshift=4pt}
  ]

  \addplot+[
    ybar,
    bar shift=-7.6pt,
    fill=blue!55,
    legend image code/.code={
      \draw[blue!55, thick] (-0.25cm,0cm) -- (0.25cm,0cm);
      \fill[blue!55] (-0.06cm,-0.06cm) rectangle (0.06cm,0.06cm);
    },
    error bars/.cd,
      y dir=both, y explicit
  ]
  coordinates {
    (US,0.592)  +- (0,\errUSfive)
    (SR,0.592)  +- (0,\errSRfive)
    (ETC,0.298) +- (0,\errETCfive)
    (FCSR,0.818)+- (0,\errFCSRfive)
  };

  \addplot+[
    ybar,
    bar shift=+7.6pt,
    fill=orange!65,
    legend image code/.code={
      \draw[orange!65, thick] (-0.25cm,0cm) -- (0.25cm,0cm);
      \fill[orange!65] (-0.06cm,-0.06cm) rectangle (0.06cm,0.06cm);
    },
    error bars/.cd,
      y dir=both, y explicit
  ]
  coordinates {
    (US,0.699)  +- (0,\errUSthousand)
    (SR,0.699)  +- (0,\errSRthousand)
    (ETC,0.456) +- (0,\errETCthousand)
    (FCSR,0.916)+- (0,\errFCSRthousand)
  };
  \legend{Budget 500 , Budget 1000}
  \end{axis}
\end{tikzpicture}
\caption{Accuracy of different algorithms at two budgets. Error bars show 95\% confidence intervals for a Bernoulli mean using the empirical variance $\widehat{\mathrm{Var}}(X)=\hat p(1-\hat p)$, i.e., $\hat p \pm 1.96\sqrt{\hat p(1-\hat p)/N}$ with $N=1000$ independent runs.}
\label{fig:bar-accuracy}
\end{figure}

\subsection{MovieLens Dataset}
We construct a grouped bandit instance using the MovieLens-25M dataset \citep{10.1145/2827872}. In this setup, each arm represents a "movie portfolio" composed of films from different genres (attributes). The goal is to identify the portfolio where every genre has high audience appeal, mirroring a content provider's need to curate a bundle that is consistently satisfying across different demographics. 

We define an arm as feasible if the average normalized rating for each of its constituent movies exceeds a threshold (we set the threshold $\tau = 0.73$, equivalent to 3.65/5), ensuring a baseline quality for all genres in the portfolio. The best arm is the feasible portfolio with the highest overall average rating. We filter the movie set to include only films with at least 800 user ratings to ensure rating stability. From the remaining movies, we identify the top 5 most frequent genres to serve as our attributes $M=5$, and we define $K=3$ arms, where each arm represents a portfolio of movies.

The true mean reward for attribute $j$ of arm $i$ is defined as the average of all user ratings for the corresponding movie, normalized to the $[0,1]$ interval by dividing by 5.0. When the bandit algorithm pulls this attribute, it receives stochastic rewards sampled uniformly with replacement from the actual historical ratings of that movie.

\paragraph{Discussion.} We observe the FCSR  outperforms baselines on the MovieLens dataset for the instance in Table \ref{tab:movie-ratings-colored}, for both $T=500$ and $T=1000$ (plotted in that order respectively in Figure \ref{fig:bar-accuracy}). Thus, FCSR is viable even for practical applications and small budget regimes. 


\section{CONCLUSIONS}
We introduced Feasibility Constrained Successive Rejects (FCSR), a parameter free algorithm for fixed-budget best arm identification in grouped bandits. We define a complexity parameter $H_{FC}$ and use it to derive a lower bound for the setting. We prove an upper bound on the error probability of FCSR and show that it matches the lower bound up to constants, demonstrating the optimality of FCSR. To this end, we propose $\textsc{SampleUntilFeasible}$, a novel sub-routine necessary for optimal performance. Empirically, FCSR outperforms natural baselines on several synthetic stress–tests and is viable for practical applications in low budget regimes.

\bibliography{refs} 

\newpage

\onecolumn

\title{Fixed-Budget Constrained Best Arm Identification in Grouped Bandits\\(Supplementary Material)}
\maketitle
\appendix
\section{PSEUDOCODE}
Let $X_{i,j}(t) \sim \nu_{i,j}$ denote the random reward observed on pulling attribute $(i,j), i \in [K], j\in [M]$ at time $t > 1$. Let $C_{i,j}(t)$ denote the number of times $(i,j)$ was pulled before time $t$, $S_{i,j}$, the sum of all observed rewards before $t$, and $\hat{\mu}_{i,j}(t) = S_{i,m} (t) / \max \{C_{i,m} (t), 1\}$, the attribute empirical mean. Thus, on pulling attribute $(i,j)$ at time $t$ we update 
\begin{equation}
\label{eq:upd-rule}
S_{i,j}(t+1) \gets S_{i,j}(t) + X_{i,j}(t)\,, \quad C_{i,j}(t+1) \gets C_{i,j}(t) + 1, \quad \hat{\mu}_{i,j}(t+1) = \frac{S_{i,m} (t+1)}{\max \{C_{i,m} (t+1), 1\}}.
\end{equation}

Let $S(t) \in \mathbb{R}^{K \times M}$ be a matrix where each entry is given by $S_{i,j}(t)$ and $S_{(i)}(t)$ be row $i$ of $S(t)$ (i.e. the vector of sums of arm $i$). Similarly, define $C(t) \in \mathbb{N}^{K \times M}$, $C_{(i)}(t) \in \mathbb{N}^{M} \, \forall \, i$. The scoring rule is defined for each arm $i$ at some time $t$ as
\begin{equation}
    \label{eq:score-supm}
    s(i;t) :=
    \begin{cases}
        \hat\mu_i(t) &\text{if }\min_{j\in[M]}\hat\mu_{ij}(t)>\tau,\\
        \min_{j\in[M]}\hat\mu_{ij}(t)  &\text{otherwise.}
    \end{cases}
\end{equation}
For the sake of simplicity, we omit the time step $t$ when referring to the global attribute statistics $S, C, \hat{\mu}$ and the reward $X_{i,j}$ used by \textsc{FCSR}. However, we explicitly index any statistic that is used only within the scope of a sub-routine. Further, recall the definitions of the per-round budget of an arm $\Delta n_r = n_r - n_{r-1}$ (with $n_0 = 0)$ and for all $r > 0$, $n_r$ is defined as  $n_r \gets \lceil \lfloor (1-f) T \rfloor \, /\, \bar{n} (K + 1 - r) \rceil$. The explicit form of the normalizing constant is $\bar{n} \gets \frac{1}{2} + \sum_{k=2}^{K} \frac{1}{k}$.

\begin{algorithm}[H]
\caption{$\textsc{FCSR}(T, \tau, f, g \;;\; S, C)$}
\label{alg:fcsr}

\begin{algorithmic}[1]
\Require $K$ arms, $M$ attributes/arm, budget $T$, threshold $\tau$, fractions $f, g \in (0,1)$ and matrices $S, C$.
\Ensure The best feasible arm or -1.

\State \textbf{Initialize:}  Global statistics $S_{i,j}, C_{i,j}, \hat{\mu}_{i,j} \gets 0 \;\forall\; i \in [K], j\in[M].$
\State $P_i \gets \lfloor \frac{fT}{K} \rfloor \; \forall \; i$. $T_{\text{extra}} \gets 0.$
\For{$r = 1$ to $K-1$}
    \State $n_r \gets \lceil \lfloor (1-f) T \rfloor \, /\, \bar{n} (K + 1 - r) \rceil$, and  $\Delta n_r \gets n_r - n_{r-1}$ (where $n_0 = 0$).    
    \State $T_{\text{extra}}^{\text{arm}} \gets T_{\text{extra}} / K$; $T_{\text{extra}} \gets 0$.

    \For{each arm $i \in \mathcal{A}$} \Comment{Sample each active arm}
        \State Run $\textsc{Uniform}(i, \lfloor (1-g)  \cdot \Delta n_r \rfloor + T_{\text{extra}}^{\text{arm}} \;;\; S_{i}, C_{i})$.
        \State Run $\textsc{APT}(i, \lfloor g \cdot \Delta n_r \rfloor, \tau\;;\; S_i, C_i)$.
        \State Run $\textsc{SampleUntilFeasible}(i, P_i, \tau \;;\; S_i, C_i)$.
    \EndFor
    
    \State Compute $s(i) \; \forall \; i$ using Equation \ref{eq:score-supm}, and eliminate $\mathcal{A} \gets \mathcal{A} \setminus \{ \arg \min_{i\in \mathcal{A}} s(i) \}$.
    \State $T_{\text{extra}} \gets T_{\text{extra}} + P_{\arg \min_{i\in \mathcal{A}}s(i)}$. \Comment{Repurpose feasibility budget of eliminated arm}
\EndFor

\State \textbf{If }{the last remaining arm $i_{\text{final}} \in \mathcal{A}$ is feasible} \textbf{then }\textbf{return} $i_{\text{final}.}$
\State \textbf{Else return} $0$.

\end{algorithmic}
\end{algorithm}

\begin{algorithm}
\caption{$\textsc{Uniform}(i, T; S_i, C_i)$}
\label{alg:us}

\begin{algorithmic}[1]
\Require Arm $i$, Budget $T$ and global statistics $S_i, C_i$
\For{attribute $j = 1, 2, \dots M$ of arm $i$}
    \For{samples $s = 1, 2, \dots \lfloor T/M\rfloor$} \Comment{Sample each attribute uniformly}
        \State Observe reward $X_{i,j} \sim \nu_{i,j}$
        \State Update global $S_{i, J_{t+1}} \,, \, C_{i,J_{t+1}}$ and $\hat{\mu}_{i, J_{t+1}}$ using Equation \ref{eq:upd-rule}.
    \EndFor
\EndFor
\State \Return Updated statistics $C_i, S_i$
\end{algorithmic}
\end{algorithm}

\begin{algorithm}
\caption{$\textsc{APT}(i, T; S_i, C_i)$}
\label{alg:apt}
\begin{algorithmic}[1]
\Require Arm $i$, Budget $T$, Threshold \(\tau \in \mathbb{R}\), and global statistics $S_i, C_i$
\State \textbf{Initialization:} Use existing global $S_i, C_i, \hat{\mu}_{i,j} \;\forall\; j \in [M]$. 
\For{\(t = 0, 1, \dots, T-1\)}
    \For{each attribute \(j = 1, \dots, M\)}
        \State Compute the empirical gap:
        $\widehat{\bar{\Delta}}_{j}(t) = |\hat{\mu}_{i,j} - \tau| $
        \State Compute the APT score: $B_j(t+1) = \sqrt{C_{i,j}} \cdot \widehat{\bar{\Delta}}_{ij}(t)$
    \EndFor
    \State Choose attribute \(J_{t+1} = \arg\min_{1 \leq j \leq M} B_j(t+1)\) and pull it.
    \State Observe reward \(X_{i,J_{t+1}} \sim \nu_{i, J_{t+1}}\).
    \State Update global $S_{i, J_{t+1}} \,, \, C_{i,J_{t+1}}$ and $\hat{\mu}_{i, J_{t+1}}$ using Equation \ref{eq:upd-rule}.
\EndFor

\State \Return Updated statistics $S_i, C_i$.

\end{algorithmic}
\end{algorithm}

\begin{algorithm}[H]
\caption{\textsc{SampleUntilFeasible}$(i, P_i; S_{i}, C_{i}, \tau)$}
\label{alg:suf}
\begin{algorithmic}[1]
\Require Arm $i$, feasibility budget $P_i$, global $S_i \in \mathbb{R}^{M}$, $C_i \in \mathbb{N}^{M}$, threshold $\tau$
\State \textbf{Initialization:} Use existing $S_i, C_i, \hat{\mu}_{i,j} \;\forall\; j \in [M].$
\While{$P_i \geq 1$} 
\State $\mathcal{F}^{c, (i)} \gets \{j : \hat{\mu}_{i,j} \leq \tau\}$ \Comment{Set of infeasible attributes} 
\State Break if $\mathcal{F}^{c, (i)} = \emptyset$
\State $j^* \gets \min_{j \in \mathcal{F}^{c, (i)}}j$ \Comment{Select lowest index infeasible attribute} 
\While{$\hat{\mu}_{i,j^\star} \leq \tau$} \Comment{Sample until empirical mean above threshold}
\State Observe reward $X_{i,j^\star} \sim \nu_{i,j^\star}$
\State Update global $S_{i, j^\star} \,, \, C_{i,j^\star}$ and $\hat{\mu}_{i, j^\star}$ using Equation \ref{eq:upd-rule}.

\State $P_i \gets P_i - 1$
\State Break if $P_i = 0$ \Comment{ Break when feasibility budget exhausted} 

\EndWhile
\EndWhile
\State \Return Updated statistics $S_i, C_i$.
\end{algorithmic}
\end{algorithm}

The following lemma demonstrates the validity of FCSR in the sense that it does not exceed the given budget $T$.

\begin{lemma}[Budget compliance]
Let $T\in\mathbb N$ and $f,g\in(0,1)$. In \textsc{FCSR}, set $P_i=\lfloor fT/K\rfloor$ and $T_{\mathrm{SR}}=T-\sum_{i=1}^K P_i$. Let $n_0=0<n_1<\cdots<n_{K-1}$ be defined by
\[
n_r=\Big\lceil \frac{\lfloor(1-f)T\rfloor}{\bar n\,(K+1-r)}\Big\rceil,\qquad \Delta n_r=n_r-n_{r-1}.
\]
In round $r$, each active arm uses $\lfloor g\Delta n_r\rfloor$ APT pulls and $\lfloor(1-g)\Delta n_r\rfloor$ uniform pulls. Then the total number of pulls made by the algorithm is at most $T$.
\end{lemma}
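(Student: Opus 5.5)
We split the total pull count into the three sources of samples and bound each separately. The first source is the feasibility pool (SUF calls together with the redistributed $T_{\text{extra}}$ samples). The second is the SR-schedule pulls, meaning the uniform and APT pulls charged to $\Delta n_r$. Every pull \textsc{FCSR} makes is issued by \textsc{Uniform}, \textsc{APT} or \textsc{SampleUntilFeasible}, so it suffices to show that the first source uses at most $\sum_i P_i \le fT$ pulls and the second at most $\lfloor (1-f)T\rfloor \le T - \sum_i P_i = T_{\mathrm{SR}}$. Note that $\sum_i P_i = K\lfloor fT/K\rfloor \le fT$, so $T-\sum_i P_i \ge \lceil(1-f)T\rceil \ge \lfloor(1-f)T\rfloor$.

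For the feasibility pool we use a conservation invariant. Consider the quantity $Q = T_{\text{extra}} + \sum_{i\in\mathcal{A}} P_i + (\text{feasibility/extra pulls made so far})$. Initially $Q=\sum_i P_i$, and we check that no step increases $Q$. Each SUF pull decrements $P_i$ by one and is guarded by $P_i\ge 1$. Eliminating an arm moves its residual $P_i$ into $T_{\text{extra}}$. Redistribution hands $\lfloor T_{\text{extra}}/|\mathcal{A}|\rfloor$ to each active arm, and \textsc{Uniform} spends at most $M\lfloor\cdot/M\rfloor \le$ that amount, so floors only discard budget. The total number of pulls charged to the pool is therefore at most $\sum_i P_i$.

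For the SR part, in round $r$ there are $|\mathcal{A}| = K+1-r$ active arms. Each makes at most $\lfloor(1-g)\Delta n_r\rfloor + \lfloor g\Delta n_r\rfloor \le \Delta n_r$ schedule pulls, where we again use $M\lfloor x/M\rfloor\le x$ for \textsc{Uniform}. Summing over rounds gives
\[
\sum_{r=1}^{K-1}(K+1-r)\Delta n_r = \sum_{r=1}^{K-1} n_r + n_{K-1},
\]
by Abel summation, exactly as in the budget argument for SR in \citet{Audibert2010BestArmIdentification}. Substituting $n_r \le \frac{\lfloor(1-f)T\rfloor}{\bar n (K+1-r)} + 1$ bounds this by
\[
\frac{\lfloor(1-f)T\rfloor}{\bar n}\Big(\tfrac12+\sum_{k=2}^K \tfrac1k\Big) + K \;=\; \lfloor(1-f)T\rfloor + K.
\]

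The main obstacle is this additive $+K$ coming from the ceilings. Our first attempt would be to absorb it into the slack between $T - \sum_i P_i$ and $\lfloor (1-f)T\rfloor$, which is at least $fT - K\lfloor fT/K\rfloor \ge 0$ but not necessarily $K$. Failing that, we would absorb it into the slack lost by flooring in \textsc{Uniform} and \textsc{APT}: whenever $g\Delta n_r$ is non-integral, $\lfloor g\Delta n_r\rfloor + \lfloor(1-g)\Delta n_r\rfloor = \Delta n_r - 1$. If neither suffices, we would fall back on one of two fixes. One is a mild requirement such as $T \ge 4K^2M/f$, as in the later theorem, which gives enough leftover pool budget. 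The other is interpreting the schedule as in SR, where $n_{K-1}$ is counted twice but the ceiling error is compensated by the $\tfrac12$ in $\bar n$. The conservation bookkeeping for $T_{\text{extra}}$, by contrast, is routine.
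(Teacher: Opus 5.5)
Your split into pool pulls and schedule pulls is the same as the paper's, and your conservation argument for the pool is fine. Your Abel-summation identity $\sum_{r=1}^{K-1}(K+1-r)\Delta n_r=\sum_{r=1}^{K-1}n_r+n_{K-1}$ is also correct. But the proposal never disposes of the resulting $+K$, so it is not a proof. More importantly, no argument can dispose of it, because the lemma as stated is false.

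Take $K=2$, $M=1$, $g=1/2$, $T=103$ and $f=16/103$. Then $\bar n=1$, $P_1=P_2=8$, and $T_{\mathrm{SR}}=87=\lfloor(1-f)T\rfloor$. The schedule gives $n_1=\lceil 87/2\rceil=44$, so each arm receives exactly $22$ uniform and $22$ APT pulls, i.e.\ $88>T_{\mathrm{SR}}$ schedule pulls. If every observed reward is below $\tau$ (an event of positive probability for, e.g., Gaussian rewards), both SUF calls spend their full $8$ pulls, and the algorithm makes $104>T$ pulls.

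This instance also defeats each of your fallbacks. The slack $\lceil fT\rceil-K\lfloor fT/K\rfloor$ is $0$, every floor is exact, and $T=4K^2M/f$, so your first three fallbacks all fail. The fourth misdescribes SR. There the $\tfrac12$ in $\overline{\log}(K)$ only pays for $n_{K-1}$ being counted twice, which is exactly how your own computation already uses it. The ceilings are paid for by writing $n-K$ rather than $n$ in the numerator.

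The paper's proof has the same hole. It asserts that replacing $\tilde n_r$ by $\lceil\tilde n_r\rceil$ ``can only decrease'' the weighted sum, and concludes $\sum_r(K+1-r)\Delta n_r\le\lfloor(1-f)T\rfloor$. Ceilings in fact increase that sum, as your computation shows. So your diagnosis is sharper than the paper's argument. The right conclusion, though, is that the schedule must change, not that a cleverer proof is needed.

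One repair is $n_r=\lceil(\lfloor(1-f)T\rfloor-K)/(\bar n(K+1-r))\rceil$. This mirrors SR and the $n_i=\lceil\frac{n-K}{\overline{\log}(K)(K+1-i)}\rceil$ that the paper itself uses in its upper-bound proof. Another is to use floors instead of ceilings. With either change your bound becomes $\sum_r(K+1-r)\Delta n_r\le\lfloor(1-f)T\rfloor\le T-\sum_iP_i$, and your pool invariant finishes the proof.

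One caveat remains: that invariant needs the $T_{\text{extra}}$ update to credit the eliminated arm's residual $P_i$. As literally written, $\arg\min_{i\in\mathcal A}s(i)$ is evaluated after $\mathcal A$ has been updated. That would double-count a still-active arm's budget.
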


\begin{proof}
Split pulls into feasibility and SR pulls. Feasibility: each arm can consume at most $B_i$ pulls from its feasibility budget; any reallocated feasibility pulls come from the same pool. Hence
\[
P_{\mathrm{feas}}\le \sum_{i=1}^K P_i \le fT 
\]

SR: in round $r$ there are $K+1-r$ active arms. For each such arm,
\[
\lfloor g\Delta n_r\rfloor+\lfloor(1-g)\Delta n_r\rfloor \le \Delta n_r,
\]
so the SR pulls in round $r$ are at most $(K+1-r)\Delta n_r$. Summing over rounds gives
\[
P_{\mathrm{SR}} \le \sum_{r=1}^{K-1} (K+1-r)\Delta n_r.
\]
By construction of $n_r$ and monotonicity of floor/ceiling, replacing the ideal real-valued schedule
$\tilde n_r=\frac{\lfloor(1-f)T\rfloor}{\bar n\,(K+1-r)}$ by $n_r=\lceil \tilde n_r\rceil$ can only decrease the weighted sum $\sum_{r}(K+1-r)\Delta n_r$ by at most rounding effects, and therefore
\[
\sum_{r=1}^{K-1} (K+1-r)\Delta n_r \le \lfloor(1-f)T\rfloor \le T_{\mathrm{SR}} .
\]
Hence $P_{\mathrm{SR}}\le T_{\mathrm{SR}}$. Total: combining,
\[
P_{\mathrm{total}} = P_{\mathrm{feas}}+P_{\mathrm{SR}} \le \sum_{i=1}^K P_i + T_{\mathrm{SR}} = T.
\]
\end{proof}

\section{Discussion on Terminology}
\label{app:just}
An algorithm in the fixed budget literature is generally dubbed \emph{optimal} if the terms in the exponent on the upper bound match the terms in the lower bound up to constant factors. See \citet{carpentier2016tightlowerboundsfixed} for the usage of "optimal" to describe APT. The lower bounds are obtained by considering a class of bandit instances and lower bounding the worst case error of any algorithm on this class.  

\citet{carpentier2016tightlowerboundsfixed} also dub Successive Rejects as \emph{optimal} since it matches the lower bound proved by them up to constant factors in the exponent. The lower bound in this case also considers an adversarial class of bandit instances with bounded difficulty as in this paper. Even in the vanilla single dimensional unconstrained best arm identification problem, no algorithm is known to match a lower bound exactly \citep{qin2023openproblemoptimalbest}. Since we employ similar methods for the lower bound and FCSR attains them up to constants in the exponent, FCSR is an optimal algorithm for this setting.

\section{PROOF OF LOWER BOUND}
\label{app:lb}

\subsection{Proof Strategy}
 We define two classes of bandit instances, $\mathcal{C}_F$ the "Feasibility Class" and $\mathcal{C}_{R}$ the "Risky Class" to lower bound the probability of error some arbitrary policy $Alg$ makes due to (i) deeming the best feasible arm to be infeasible and (ii) misreporting either a feasible sub-optimal arm or an infeasible arm with high mean to be the best arm. We then argue that the lower bound on the error probability of any arbitrary algorithm $Alg$, is at least the maximum of the worst case error probability on these individual classes, i.e.
 $$
 \max_{\mathcal{G} \in \mathcal{C}_F \cup \mathcal{C}_R} \mathbb{P}_{\mathcal{G}}(e) \geq \max\left\{
 \max_{\mathcal{G} \in \mathcal{C}_F }\mathbb{P}_{\mathcal{G}} (e), 
 \max_{\mathcal{G} \in \mathcal{C}_R}\mathbb{P}_{\mathcal{G} }(e)
 \right\}
 $$
where the error event is defined as $e = 1\{I_T \neq i^\star\}$ Although we assume $i^\star$ exists in our constructions and make further assumptions on $K$ and $M$, we note that the lower bound nevertheless applies to any $Alg$ since we are able to identify a particular problem instance where $Alg$ does not perform well.

We use Lemma \ref{lem:tbpg} to establish the lower bound on the Feasibility Class and we provide a novel multi-dimensional construction based on \cite{carpentier2016tightlowerboundsfixed} and \cite{locatelli2016optimalalgorithmthresholdingbandit} to capture the tradeoffs $Alg$ must make on the Risky Class.

\subsection{Feasibility Class}

We first provide an lower bound for Thresholding Bandit Problem (TBP)  \citep{locatelli2016optimalalgorithmthresholdingbandit} for grouped arms. The main difference is the proof in \cite{locatelli2016optimalalgorithmthresholdingbandit} utilizes a single dimensional bandit instance where the arm distributions are Gaussian, whereas for the purposes of our overall proof we require a grouped bandit instance where arm distributions are be Bernoulli. Our proof recovers the result in \cite{locatelli2016optimalalgorithmthresholdingbandit} if $M=1$  with a condition on the range of the budget $T$. To this end, we adapt the proof of the tight lower bound proof for the vanilla best arm identification bandit problem found in \cite{carpentier2016tightlowerboundsfixed}.

\subsubsection{Multidimensional TBP.} 
The setting is very similar to the grouped bandit problem. We are given $K$ arms with $M$ attributes each and at the beginning of every round $t>0$, the learner choose to sample an attribute $(k_t,m_t)$ where $k\in[K], m\in[M]$ and observes a random reward $X(t) \sim \nu_{k,m}$. After $T$ rounds, the learner must output a set of arms $\widehat{S}_{\tau} \subseteq [K]$. Let $\widehat{S}_{\tau}^C = [K] /\widehat{S}_{\tau}$. As before, let $\mu_{k,m}$ denote the attribute mean of $(k,m)$ and $\mu_k$ the overall arm mean of arm $k$.

For some $\tau \in \mathbb{R}$, define the set of arms that are \emph{feasible} as $\mathcal{S}_{\tau} := \{k: \mu_k > \tau\}$. The set of \emph{infeasible} arms is denoted as $S^C_{\tau} := [K] /S_{\tau}$. The event that a learner misclassifies the feasibility of any choice is
$$
    \mathcal{L}(T) = 1\{\widehat{S}_{\tau} \cap S^C_{\tau} \neq \emptyset \; \lor \; \widehat{S}_{\tau}^C \cap S_{\tau} \neq \emptyset\}.
$$

The learner aims to minimize $\mathbb{E}[ \mathcal{L}(T)]$. This is the Thresholding Bandit Problem for the special case that $\epsilon = 0$ and arms with mean $\mu_k = \tau$ are classified as infeasible.

Define the \emph{feasibility class} $\mathcal{C}_F(d;K) := \{\mathcal{G}^k :k\in\{0\} \cup[K]\}$, a set of $K+1$ bandit problems that are parametrized by the gap constant $d \in (0, \frac{1}{4}]$ and $K=M$, the number of arms (and attributes) . Let $\tau = 0.5$ for all these problem instances and $d \in (0, \frac{1}{4}]$ be a constant that parametrizes the difficulty of each instance. Let $\nu' = \mathcal{B}(\frac{1}{2} - d)$ and $\nu= \mathcal{B}(\frac{1}{2} + d)$ where $\mathcal{B}(p)$ denotes a Bernoulli distribution with mean parameter $p$. For $0 \le k \le K$, $\mathcal{G}^k$ is problem instance where the distribution of arm $k \in [K]$ is given as $\nu_{k,m} = \nu \;\forall\; m\in[M]$. And for all arms $j \neq k$, the arm distribution is $\nu_{j,m} = \nu' \;\forall\; m\in[M]$.

\begin{lemma}[Bernoulli TBP Lower Bound]
    \label{lem:tbpg}
    For any $K= M \geq 2$ $d \in (0, \frac{1}{4}]$, if $T \geq 4(\log(K)H_f)^2\log(6TK)/(60)^2$, it holds for any bandit algorithm that
\begin{equation}
    \max_{\mathcal{G}^i \in \mathcal{C}_F(d;K)} 
        \mathbb{E}_{\mathcal{G}^i} [\mathcal{L}(T)] \ge 
        \frac{1}{6}\exp\left(\frac{-120T}{\log(K)H_f}\right)
\end{equation}
where $\mathbb{E}_{\mathcal{G}^i}$ is the expectation according to the samples of problem $\mathcal{G}^i$.
\end{lemma}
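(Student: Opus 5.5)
The plan is to adapt the change-of-measure argument of \citet{carpentier2016tightlowerboundsfixed} to the grouped family $\mathcal{C}_F(d;K)$. First I will identify the hardness of the instances. In every $\mathcal{G}^k$ all attributes sit at distance exactly $d$ from $\tau=\tfrac12$, so the relevant feasibility hardness is $H_f=\frac{K}{\log K}d^{-2}$. Hence $\log(K)H_f=K d^{-2}$, and the target bound reads $\frac16\exp(-120\,T d^2/K)$.

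The reference instance is $\mathcal{G}^0$, in which every arm is infeasible. Let $T_k=\sum_{m}C_{k,m}(T)$ be the total number of pulls of the $M$ attributes of arm $k$. Since $\sum_k \mathbb{E}_{\mathcal{G}^0}[T_k]=T$, some arm $k$ has $\mathbb{E}_{\mathcal{G}^0}[T_k]\le T/K$. By Markov's inequality, $\mathbb{P}_{\mathcal{G}^0}(T_k\le 3T/K)\ge 2/3$.

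$\mathcal{G}^0$ and $\mathcal{G}^k$ differ only on the attributes of arm $k$, so the likelihood ratio involves only those samples. Writing $L_k$ for the log-likelihood ratio of $\mathcal{G}^0$ against $\mathcal{G}^k$, it is a sum over the $T_k$ samples of arm $k$ of terms of the form $\pm\log\frac{1+2d}{1-2d}$. Each term has $\mathcal{G}^0$-mean $\mathrm{kl}(\tfrac12-d,\tfrac12+d)=2d\log\frac{1+2d}{1-2d}\le c_0 d^2$ for $d\le\tfrac14$, and is bounded in absolute value by $c_1 d$.

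Next I define the concentration event $\xi$. On $\xi$, for every attribute $(k,m)$ and every sample count $s\le T$, the partial sum of the first $s$ log-likelihood increments deviates from $s\,\mathrm{kl}$ by at most $c_1 d\sqrt{2s\log(6TK)}$. Hoeffding's inequality applied to the i.i.d.\ reward sequence of each attribute, together with a union bound over $s\le T$ and the attributes of arm $k$, gives $\mathbb{P}_{\mathcal{G}^0}(\xi)\ge 5/6$. I expect this step to be the main obstacle, for two reasons. The counts $C_{k,m}(T)$ are random and chosen adaptively, and the union over the $M=K$ attributes must be absorbed into the logarithm so that it reads $\log(6TK)$, possibly after adjusting constants. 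On $\xi\cap\{T_k\le 3T/K\}$, Cauchy--Schwarz over the attributes gives
\begin{equation*}
L_k\le 3c_0Td^2/K + c_1 d\sqrt{2K\cdot(3T/K)\log(6TK)}.
\end{equation*}
The hypothesis $T\ge 4(Kd^{-2})^2\log(6TK)/60^2$ is exactly what makes the second term at most a constant multiple of $Td^2/K$. Then $L_k\le 120\,Td^2/K$ after collecting constants.

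The conclusion is a two-case argument. Let $A=\{k\notin\widehat S_\tau\}$. If $\mathbb{P}_{\mathcal{G}^0}(\mathcal{L}(T)=1)\ge\frac16$, the claim holds already on $\mathcal{G}^0$. Otherwise, since arm $k$ is truly infeasible under $\mathcal{G}^0$, $\mathbb{P}_{\mathcal{G}^0}(A^c)\le\frac16$. Therefore
\begin{equation*}
\mathbb{P}_{\mathcal{G}^0}(A\cap\xi\cap\{T_k\le 3T/K\})\ge 1-\tfrac16-\tfrac16-\tfrac13=\tfrac13.
\end{equation*}
Changing measure,
\begin{equation*}
\mathbb{P}_{\mathcal{G}^k}(A)\ge\mathbb{E}_{\mathcal{G}^0}\big[\mathbf 1_{A\cap\xi\cap\{T_k\le 3T/K\}}e^{-L_k}\big]\ge\tfrac13 e^{-120Td^2/K}.
\end{equation*}
Under $\mathcal{G}^k$ the arm $k$ is feasible, so $A$ is a misclassification. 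Hence $\max_i\mathbb{E}_{\mathcal{G}^i}[\mathcal{L}(T)]\ge\frac16\exp(-120T/(\log(K)H_f))$.
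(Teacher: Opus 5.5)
Your proposal is correct and follows essentially the paper's argument: a change of measure from the all-infeasible instance $\mathcal{G}^0$ to $\mathcal{G}^k$ for an arm with $\mathbb{E}_{\mathcal{G}^0}[T_k]\le T/K$, restricted to a Markov event and a Hoeffding concentration event, with the budget condition absorbing the deviation term. Your per-attribute concentration plus Cauchy--Schwarz replaces the paper's arm-level empirical KL and does reach the stated constant: with $\ell=\log\frac{1+2d}{1-2d}\le\log 3$ and $\mathrm{kl}=2d\ell\le 8\log(3)\,d^2$ you get $L_k\le(24\log 3+30\sqrt{6}\log 3)\,Td^2/K\approx 107\,Td^2/K$. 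The one slip is that a two-sided $\xi$ with $\log(6TK)$ and a union over $KT$ events only gives $\mathbb{P}_{\mathcal{G}^0}(\xi)\ge 2/3$, not $5/6$; this is harmless, since you can make $\xi$ one-sided (the change of measure only needs the upper tail), or note that your $\tfrac13$ then drops to exactly the required $\tfrac16$.
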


\begin{proof}
We use the strategy of the proof from \cite{carpentier2016tightlowerboundsfixed} with minor adaptations. Consider the event where (under some suitable conditions) some arbitrary learner $Alg$ reports that there are no infeasible choices/attributes, i.e. $\widehat{\mathcal{S}}^C_{\tau} = \emptyset$ in bandit instance $\mathcal{G}^0$. We then, via a change of measure argument, lower bound the probability that $Alg$ erroneously does not change its output even in instance $\mathcal{G}^i$ where the $i$-th choice is infeasible by construction. 

Since for $\mathcal{G}^k$, arm $k$ is feasible and all other arms are infeasible, $k= i^\star$. Thus, $\Delta_{i^\star j}^{-2} = d^{-2} \;\forall\; j \in [M]$. We note that 
\begin{equation}
\label{eq:lbb-f}
Md ^{-2} = \log K \frac{K}{\log K}\max_j\bar\Delta_{i^\star j}^{-2} = \log(K)H_f.
\end{equation}

\paragraph{Step 1: High probability event where empirical KL divergences concentrate} For two distributions $\nu,\nu'$ defined on $\mathbb R$ and that are such that $\nu$ is absolutely continuous with respect to $\nu'$, we write
$\text{KL}(\nu,\nu') = \int_{\mathbb R} \log\Big(\frac{d\nu(x)}{d\nu'(x)}\Big)d\nu(x),$
for the Kullback leibler divergence between distribution $\nu$ and $\nu'$. Let $k \in \{1, ..., K\}$. Let us write
$$ \text{KL} := \text{KL}(\nu', \nu) = \text{KL}(\nu, \nu') = (1 - 2p)\log\big(\frac{1 - p}{p}\big),$$
for the Kullback-Leibler divergence between two Bernoulli distributions $\nu$ and $\nu'$ of parameter $p = 0.5 + d$ and $1-p=0.5-d$. Since $p \in [1/4,1/2)$, the following inequality holds:
\begin{equation}\label{eq:gapKL}
\text{KL} \leq 10d^2.
\end{equation}
Let $1\leq t\leq T$. We define the empirical arm KL divergence as:
\begin{align*}
\widehat{\text{KL}}_{k}(t)
&= \frac{1}{t} \sum_{s=1}^t \sum_{m=1}^{M}\mathbf 1\{X_{k,m}(s) = 1\} \log(\frac{p}{1-p}) + \mathbf 1\{X_{k,m}(s) = 0\}\log(\frac{1-p}{p}),
\end{align*}
where by definition for any $s \leq t$, $X_{k,m}(s)\sim_{i.i.d} \nu_{k,m}^i$. Define the event that the arm KL divergences concentrate as
\begin{align*}
\xi &= \Big\{\forall 1 \leq k\leq K, \forall 1 \leq t\leq T, |\widehat{\text{KL}}_{k}(t)| - \text{KL}_k\leq 2 \ \sqrt[]{\frac{\log(6TK)}{t}} \Big\}.
\end{align*}
We now state the following claim; a concentration bound for $|\widehat{\text{KL}}_{k,t}|$ that holds for all bandit problems $\mathcal{G}^i$ with $0\leq i\leq K$.
\paragraph{Claim.}\label{xi}
It holds that
$\mathbb P_{\mathcal{G}^i}(\xi) \geq 5/6.$

We verify the above claim. If $k \neq i$ (and thus $\nu_{k,m}^i = \nu \;\forall\;m$) then $\mathbb E_{\mathcal G^k} [\widehat{\text{KL}}_{k}(t)] = \text{KL}$. If $k = i$ (and thus $\nu_{k,m}^i = \nu'$) then $\mathbb E_{\mathcal G^i} [\widehat{\text{KL}}_{k}(t)]  = -\text{KL}$ (since arm parameters are swapped for $k=i$). Moreover note that since $p \in [1/4,1/2)$
$$|\log(\frac{d \nu_{k,m}}{d \nu_{k,m}'}(X_{k,m}(s)))| = |\mathbf 1\{X_{k,m}(s) = 1\} \log(\frac{p}{1-p}) + \mathbf 1\{X_{k,m}(s) = 0\}\log(\frac{1-p}{p})| \leq \log(3).$$
Therefore, $\widehat{\text{KL}}_{k}(t)$ is a sum of i.i.d.~samples that are bounded by $\log(3)$, and whose mean is $\pm \mathrm{KL}$ depending on the value of $i$. We can apply Hoeffding's inequality to this quantity and we have that with probability larger than $1-(6KT)^{-1}$
$$|\widehat{\text{KL}}_{k}(t)| - \text{KL}_k\leq \sqrt{2}\log(3) \ \sqrt[]{\frac{\log(6TK)}{t}}.$$
This assertion and an union bound over all $1 \leq k \leq K$ and $1 \leq t \leq T$ implies that $\mathbb P_{\mathcal G^i}(\xi) \geq 5/6$, as we have $\sqrt{2}\log(3) < 2$, and thus the claim is verified.

\paragraph{Step 2: A change of measure} 
Let some arbitrary policy $\mathcal{A}lg$ return $\widehat{S}_{\tau}$ at the end of the budget $T$. Let $T_{k,m}$ denote the numbers of samples collected by $\mathcal{A}lg$ on each choice $(k,m)$ and $T_k = \sum_{m}T_{k,m}$ be the number of samples collected for arm $k$. These quantities are stochastic but it holds that $\sum_{k,m} T_{k,m} = \sum_k T_k =T$ by definition of the fixed budget setting. Let us write for any  $t_{k,m} = \mathbb E_{\mathcal{G}^0} [T_{k,m}].$ and $t_{k} = \mathbb E_{\mathcal{G}^0} [T_{k}].$ It holds also that $\sum_{k,m} t_{k,m} =\sum_{k} t_{k} = T$.

We recall the change of measure identity (see e.g. \cite{Audibert2010BestArmIdentification}) which states that for any measurable event $\mathcal{E}$ and for any $1 \leq i \leq K$ :
\begin{equation}
\label{cm}
\mathbb{P}_{\mathcal{G}^i}(\mathcal{E}) = \mathbb{E}_{\mathcal{G}^0} \Big[\mathbf{1}\{\mathcal{E}\}\exp \big(-T_i \widehat{\text{KL}}_{i,T_{i}}\big)\Big],
\end{equation}
as the product distributions $\mathcal G^i$ and $\mathcal G^0$ differ in all attributes $j\in[M]$ in arm $i$ and $T_i \widehat{\text{KL}}_{i,T_{i}}$ serves as the log-likelihood ration of all samples observed under arm $i$ in these two different bandit problems. Let $1\leq i \leq K$. Consider now the event 
$$\mathcal{E}_i = \{\widehat{S}_{\tau} = \emptyset\} \cap \{\xi\} \cap \{T_i \leq 6 t_i\},$$
i.e.~the event where the algorithm reports all choices as feasible at the end, where $\xi$ holds, and where the number of times choice $i$ was pulled is smaller than $6t_i$. We have by Equation~\eqref{cm}
\begin{align}
\mathbb{P}_{\mathcal{G}^i}(\mathcal{E}_i) & =\mathbb{E}_{\mathcal{G}^0}\Big[\mathbf{1}\{\mathcal{E}_i\}\exp\big(-T_i\widehat{\text{KL}}_{i,T_i}\big)\Big]\nonumber\\
& \geq \mathbb{E}_{\mathcal{G}^0}\Big[\mathbf{1}\{\mathcal E_i\}\exp\Big(-T_i\text{KL} -2 \ \sqrt[]{T_i\log(6TK)}\Big)\Big] \nonumber\\
& \geq \mathbb{E}_{\mathcal{G}^0}\Big[\mathbf{1}\{\mathcal E_i\}\exp\Big(-6t_i\text{KL} -2 \ \sqrt[]{T\log(6TK)}\Big)\Big]\nonumber\\
& \geq \exp\Big(-6t_i\text{KL} -2 \ \sqrt[]{T\log(6TK)}\Big) \mathbb{P}_{\mathcal{G}^0}(\mathcal E_i),\label{eq:event}
\end{align}
since on $\mathcal E_i$, we have that $\xi$ holds and that $T_i \leq 6t_i$, and since $\mathbb E_{1} \widehat{\text{KL}}_{i,t} = \text{KL}_i$ for any $t \leq T$.

\paragraph{Step 3 : Lower bound on $\mathbb{P}_{\mathcal{G}^0}(\mathcal E_i)$ for any reasonable algorithm} Assume that for the algorithm $\mathcal Alg$ that we consider
\begin{equation}\label{eq:bras0}
\mathbb E_{\mathcal{G}^0}[\widehat S_{\tau} = \emptyset] \leq 1/2,
\end{equation}
i.e.~that the probability that $\mathcal Alg$ makes a mistake on problem $\mathcal{G}^0$ is less than $1/2$. Note that if $\mathcal Alg$ does not satisfy that, it performs badly on problem $\mathcal{G}^0$ and its probability of success is not larger than $1/2$ uniformly on the $K+1$ bandit problems we defined.

For any $1 \leq k \leq K$ it holds by Markov's inequality and $\mathbb E_{\mathcal{G}^0} [T_k] = t_k$ that,
\begin{align}\label{eq:ma}
\mathbb P_{\mathcal{G}^0} (T_k \geq 6 t_k) \leq \frac{\mathbb E_{\mathcal{G}^0} [T_k]}{6t_k} = 1/6.
\end{align}
So by combining Equations~\eqref{eq:bras0},~\eqref{eq:ma} and Claim~\ref{xi}, it holds by an union bound that for any $2 \leq i \leq K$
$$\mathbb P_{\mathcal{G}^0}(\mathcal E_i) \geq 1 - (1/6+1/2 +1/6) = 1/6.$$
This fact combined with Equation~\eqref{eq:event} and the fact that for any $1 \leq i \leq K$, $\mathbb{P}_{\mathcal{G}^i}(\widehat S_{\tau} = \emptyset)  \geq \mathbb{P}_{\mathcal{G}^i}(\mathcal{E}_i)$ implies that for any $1 \leq i \leq K$
\begin{align}
\mathbb{P}_{\mathcal{G}^i}(\widehat{S}_{\tau} = \emptyset)  &\geq \frac{1}{6}\exp\Big(-6t_i\text{KL}_i -2 \ \ \sqrt[]{T\log(6TK)}\Big)\nonumber\\ 
&\geq  \frac{1}{6}\exp\Big(-60t_id_i^2 -2 \ \ \sqrt[]{T\log(6TK)}\Big),\label{eq:event2}
\end{align}
where we use Equation~\eqref{eq:gapKL} for the last step.

\paragraph{Step 4 : Conclusions.}

Since $\sum_{1 \leq k \leq K} d^{-2} = \log(K) H_f$, and since $\sum_{1 \leq k \leq K} t_k = T$, then there exists $1 \leq i \leq K$ such that
$$t_i \leq \frac{T}{\log(K) H_fd^2},$$
as the contraposition yields an immediate contradiction.
For this $i$, it holds by Equation~\eqref{eq:event2} that
\begin{align*}
\mathbb{P}_{\mathcal{G}^i}(\widehat S_{\tau} \neq \emptyset)  \geq\frac{1}{6}\exp\Big(-60\frac{T}{\log(K) H_f} - 2 \ \sqrt[]{T\log(6TK)}\Big).
\end{align*}
Note that $\; \forall\; k \in [K]$ we have $H_{tbp}(\mathcal{G}^0) = H_{tbp}(\mathcal{G}^k) = \sum_{k} d_k^{-2}$. Since the event $\{\widehat{S}_{\tau} = \emptyset\} \subseteq \mathcal{L}(T)$ for any problem instance $\mathcal{G}^{i}, i\in[K]$, and by assumption $T \geq 4(\log(K) H_f)^2\log(6TK)/(60)^2$ then we may subsume the second term and arrive at the desired result
\begin{align*}
\max_{i \in \{1, \ldots, K\}} 
        \mathbb{E}_{\mathcal{G}^i} (\mathcal{L}(T)) \ge \mathbb{P}_{\mathcal{G}^i}(\widehat S_{\tau} \neq \emptyset)  \geq\frac{1}{6}\exp\Big(\frac{-120T}{\log(K) H_f}\Big).
\end{align*}
\end{proof}

\subsection{Risky Class}

\subsection{Definition of Bandit Problems}
Let $\tau = 3/8$, $K, M \ge 2$. Let the attribute distributions of arm 1 be denoted as $\nu^{R}_{1j} = \mathcal{B}(\tau + d_{R})$ and $\nu^{R'}_{1j} = \mathcal{B}(\tau - d_{R})$ for all $j \in [M]$ and  $d_R = 1/8$ such that $\tau + d_R = 1/2$. Let the arm distribution of arm $1$ be $\nu^{R}_{1} = \bigotimes_{j \in [M]} \nu_{1j}^{R}$. Let $\beta \in (0,1)$ be some constant that parametrizes the difficulty of this class of problems. Define for $2 \leq i \leq K$, the distributions $\nu_{ij}^R := \mathcal{B}(\frac{1}{2} - d_i)$ and $\nu_{ij}^{R'} := \mathcal{B}(\frac{1}{2} + d_i)$ with 
\[
d_i = \frac{\beta i}{16K}\sqrt{\frac{K-1}{MK}}.
\] Let the arm distribution of arm $2 \leq i \leq K$ be defined as $\nu^{R}_{i} = \bigotimes_{j \in [M]} \nu_{ij}^{R}$. We now define $K+M$ bandit instances. Define the "base" bandit instance where arm $1=i^\star$ is feasible and the highest mean as 
\[\mathcal{G}_{R}^1 = \bigotimes_{i\in [K]}\nu_{i}^{R}. \] 
Note that in this case arm $1$ has the highest mean $\mu_1 = 1/2$ with all other arms $\mu_k = 1/2 - d_i < 1/2$. Further, all other arms are feasible, the since $MK \geq K-1 \implies d_K \leq 1/16 \implies \mu_K \geq 7/16 > \tau$.

The first $K-1$ problem instances are defined similarly as in \cite{carpentier2016tightlowerboundsfixed} where the $i^\star = j$ in problem instance $j$. This is because $\mu_j(\mathcal{G}^j_R) = 1/2 + d_i > \mu_1$. Formally, let for $2 \leq j \leq K$ and define 
\[
\mathcal{G}_{R}^{j} = \bigotimes_{i \in [K]/\{j\}} \nu_{i}^{R} \bigotimes_{j} \nu_{j}^{R'}.
\]
The next $M$ problem instances are infeasible in some attribute of arm $1$ and thus $i^\star=2$ and $1 \in \mathcal{F}^C$.
For $K+1 \leq j \leq M+K+1$ (i.e. $j-K \in [M]$) define 
\[
\mathcal{G}_{R}^{j} = \bigotimes_{i \in [K] - \{1\}} \nu_{i}^{R} \bigotimes_{i \in [M] - \{j-K\}} \nu_{1i}^{R} \bigotimes_{j-K} \nu_{1,j-K}^{R'}.
\]

Define the \emph{risky class} as $\mathcal{C_{R}}(\beta; K, M) =: \{\mathcal{G}_R^i: 1 \le i \le K+M+1\}$ where $\beta$ is a parameter that controls the gaps in each bandit problem $\mathcal{G}^i_R$ and $K,M$ affect the number of arms and attributes. This class of bandit instances are more difficult than simply identifying the arm with the highest mean since the learner must also learn to distinguish whether the best arm is feasible or not in addition to identifying the best arm.

\subsection{Risky Class Lower Bound Proof}

\begin{lemma}[Risky Class Lower Bound]
\label{lem:risky-class}
For any $ K,M \ge 2$, if $T \geq 4(\log(K)\max\{H_{tbp}(\mathcal{G}^i_R), H_2^R(\mathcal{G}^i_R)\})^2\log(6T(K+M+1))/60^2$, for any bandit algorithm we have the following lower bound
\begin{equation}
    \max_{\mathcal{G}^i_R \in \mathcal{C}_R(\beta; K,M)}\mathbb{P}_{\mathcal{G}^i_R }(e) \geq \frac{1}{6}\exp \left( \frac{-1200T}{\log(K) \max\{H_2^R(\mathcal{G}^i_R), H_{tbp}(\mathcal{G}^i_R)\}} \right).
\end{equation}
\end{lemma}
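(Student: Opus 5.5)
We will follow the same four-step template as in Lemma~\ref{lem:tbpg}, adapted from \cite{carpentier2016tightlowerboundsfixed}, now with the $K+M$ alternative instances of $\mathcal{C}_R(\beta;K,M)$ all compared against the base instance $\mathcal{G}_R^1$.

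\textbf{Step 1 (concentration).} Since every alternative $\mathcal{G}_R^j$ differs from $\mathcal{G}_R^1$ in a single arm (for $2\le j\le K$) or a single attribute of arm~1 (for $j>K$), we define per-arm and per-attribute empirical log-likelihood ratios $\widehat{\text{KL}}_k(t)$ and $\widehat{\text{KL}}_{1,m}(t)$. All Bernoulli parameters lie in $[1/4,3/4]$, so each log-likelihood term is bounded by $\log 3$. Hoeffding's inequality and a union bound over the $K+M+1$ indices and $t\le T$ then give an event $\xi$ with $\mathbb{P}_{\mathcal{G}_R^1}(\xi)\ge 5/6$ on which $|\widehat{\text{KL}}(t)-\text{KL}|\le 2\sqrt{\log(6T(K+M+1))/t}$.

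For the mean alternatives, the KL divergence between $\mathcal{B}(1/2-d_i)$ and $\mathcal{B}(1/2+d_i)$, summed over the $M$ attributes, is at most $10Md_i^2$. For the attribute alternatives, it is $\text{KL}(\mathcal{B}(1/2),\mathcal{B}(1/4))\le 10d_R^2$ with $d_R=1/8$.

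\textbf{Steps 2--3 (change of measure).} Take the event $\mathcal{E}_j=\{I_T=1\}\cap\xi\cap\{T_j\le 6t_j\}$, where $t_j=\mathbb{E}_{\mathcal{G}_R^1}[T_j]$ and $T_j$ counts pulls of the perturbed arm or attribute. If $\mathbb{P}_{\mathcal{G}_R^1}(I_T\ne 1)\ge 1/2$ we are done. Otherwise Markov's inequality gives $\mathbb{P}_{\mathcal{G}_R^1}(\mathcal{E}_j)\ge 1/6$. Under every $\mathcal{G}_R^j$ with $j\ge 2$ the answer $I_T=1$ is an error: either a better arm $j$ exists, or arm~1 is now risky or infeasible. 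The change-of-measure identity \eqref{cm} then yields
\begin{equation*}
\mathbb{P}_{\mathcal{G}_R^j}(e)\ge \tfrac16\exp\bigl(-60\,t_j\,g_j^2-2\sqrt{T\log(6T(K+M+1))}\bigr),
\end{equation*}
where $g_j^2$ is $Md_j^2$ or $d_R^2$ as appropriate.

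\textbf{Step 4 (pigeonhole and matching hardness).} This is the main obstacle. We must find one $j$ with $t_j g_j^2\lesssim T/(\log(K)\,\max\{H_2^R,H_{tbp}\})$, where the hardness is evaluated at that very $\mathcal{G}_R^j$. We will compute both quantities explicitly for each instance.
\begin{itemize}
\item For the attribute instances $j>K$, arm~1 becomes infeasible with $\bar\Delta_{1,m}=1/8$ on the perturbed coordinate. Arm~1 is then the only risky arm, so $H_{tbp}$ is of order $KM$ and $H_2^R$ is built from the $d_i$ with the index shifted by $|\mathcal{R}|+2$.
\item For the mean instances, $H_2^R=\max_i i\Delta_i^{-2}$. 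Following Carpentier--Locatelli, with $\Delta_i\approx d_i$ one gets $\sum_i \Delta_i^{-2}\le \overline{\log}(K)\,H_2$. This is why the harmonic choice $d_i\propto i$ is made, with the $\beta$ and $\sqrt{(K-1)/(MK)}$ normalisation.
\end{itemize}
The plan is to show $\sum_{j\ge2}g_j^{-2}\le 10\log(K)\max\{H_2^R,H_{tbp}\}$ uniformly over the class. Since $\sum_j t_j\le T$, pigeonhole then gives some $j$ with $t_j g_j^2\le 10T/(\log(K)\max\{\cdot\})$, which produces the constant $60\cdot 10\cdot 2=1200$. The possible loss from comparing the hardness of $\mathcal{G}_R^1$ with that of $\mathcal{G}_R^j$ must be absorbed into constants.

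Finally, the hypothesis on $T$ lets the $\sqrt{T\log(6T(K+M+1))}$ term be absorbed into the main exponent, as at the end of Lemma~\ref{lem:tbpg}, which gives the stated bound.
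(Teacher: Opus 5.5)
Your Steps 1--3 follow the paper's argument. Step 4 as planned does not go through, and Step 4 is exactly where the $\log K$ has to come from.

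First, the inequality points the wrong way. Pigeonhole gives some $j$ with $t_j g_j^2\le T/\sum_k g_k^{-2}$, so you need a \emph{lower} bound $\sum_{k\ge2} g_k^{-2}\ge \frac{1}{10}\log(K)\max\{H_2^R,H_{tbp}\}$. Second, and more seriously, that lower bound is false for this class. Write $d_i=ci$ with $c=\frac{\beta}{16K}\sqrt{\frac{K-1}{MK}}$. Even with the sharper choice $g_i^2=d_i^2$, you get $\sum_{i=2}^K d_i^{-2}=c^{-2}\sum_{i\ge2}i^{-2}\le 0.65\,c^{-2}$. The attribute alternatives add only $64(M+1)\le 0.35\,c^{-2}$. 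Hence $\sum_{k\ge2}g_k^{-2}\le c^{-2}=2H_2^R(\mathcal{G}_R^1)$, because $H_2^R(\mathcal{G}_R^1)=\max_{i\ge2}i\,d_i^{-2}=c^{-2}/2$. The requirement therefore fails as soon as $\log K>20$: a uniform-weight pigeonhole yields an exponent of order $T/H_2^R(\mathcal{G}_R^1)$ with no $\log K$.

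The comparison loss you planned to absorb into constants is not constant either. In $\mathcal{G}_R^j$ the gaps are $d_j$ (arm~1) and $d_j+d_k$ (the other arms), so $H_2^R(\mathcal{G}_R^j)\asymp c^{-2}/j$, which is smaller than the base hardness by a factor of order $j$. Nothing in your argument prevents the selected index from being $j=2$. For an algorithm with $t_j\propto d_j^{-2}$, every index attains the pigeonhole bound, and at $j=2$ the hardness is of the same order as the base one.

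The missing idea is the Carpentier--Locatelli weighting that the paper uses. Apply pigeonhole with weights $1/(d_j^2H_R(j))$, where $H_R(j)\ge\max\{H_2^R,H_{tbp}\}$ is computed for the \emph{alternative} instance $\mathcal{G}_R^j$. The paper takes $H_R(j)$ to be $H_{tbp}$ plus $\sum_k\Delta_k^{-2}$, both evaluated on $\mathcal{G}_R^j$. This yields some $j$ with $t_jd_j^2\le T/(h^\star H_R(j))$, where $h^\star=\sum_k 1/(d_k^2H_R(k))$. Using $(d_j+d_k)^{-2}\le\min\{d_j^{-2},d_k^{-2}\}$, one gets $d_j^2\sum_k\Delta_k^{-2}\le j+j^2\sum_{k>j}k^{-2}\le 2j$. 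Hence $d_j^2H_R(j)\le 4j$ and $h^\star\ge\frac14\sum_{j=2}^K\frac1j\ge\frac1{10}\log K$. The harmonic sum, and with it the $\log K$, comes precisely from the $1/j$ variation in the alternatives' hardness. It does not come from $\sum_i d_i^{-2}$, which converges for $d_i\propto i$.

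Separately, your choice $g_j^2=Md_j^2$ overcounts. $T_j$ counts single-attribute pulls, and each pull contributes at most $10d_j^2$ in expectation to the log-likelihood ratio. Since $H_2^R$ is built from the arm-mean gaps $\Delta_j=d_j$, an extra factor $M$ in the exponent cannot be absorbed into the constant $1200$.
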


\begin{proof}

As in Lemma \ref{lem:tbpg}, we adapt the argumentation in \cite{carpentier2016tightlowerboundsfixed} to our novel bandit problem class construction. Further, we demonstrate that although the learner has a prior hint of when the best arm may be infeasible (since only arm $1$ is infeasible in our construction), we obtain bounds that are matched up to constant factors in the exponent by FCSR. This is possible since the risky class $\mathcal{C}_R$ is constructed carefully to also obtain the elusive $\log (K)$ factor that multiplies that hardness parameter. The missing $\log (K)$ factor in the lower bound was the reason there existed a gap between the upper bound and the lower bound in the vanilla best arm identification problem until it was resolved in \cite{carpentier2016tightlowerboundsfixed}.

\paragraph{Step 1: High probability event where empirical KL divergences concentrate}

Let $\nu_{i,j}^w \;\forall\; w\in[K+M+1]$ denote the distribution of attribute $(i,j)$ in problem instance $\mathcal{G}_R^w$. Define the empirical KL divergence of problem $w \in [K]$ wrt. problem $1$ as 
\[
\widehat{\text{KL}}_{w}(t) = 
\begin{cases}
\frac{1}{t}\sum_{s=1}^t \sum_{j=1}^M\log \frac{d\nu_{i,j}^1}{d\nu_{i,j}^w}(X(t)) &:w\in[K], i=w \\
\frac{1}{t}\sum_{s=1}^t \log \frac{d\nu_{1,j}^1}{d\nu_{1,j}^w}(X(t)) &:w\in[K+1:K+M+1], j=w-K
\end{cases}
\] 
where $X(t) \sim_{\text{i.i.d}} \nu^1_{i,j}$. Let $\text{KL}_w = \mathbb{E}_{\mathcal{G}_R^1}[\widehat{\text{KL}}_{w}(t)]$. The empirical KL divergence of problem $w: \widehat{\text{KL}}_{w}(t)$ gives the log-likelihood ratio of all samples observed under the base instance $\mathcal{G}_R^1$ to problem instance $\mathcal{G}_R^w$. The above definition only contains the non-zero terms that arise from the perturbed arms/attributes.

Define $d_w = d_i$ for $w, i \in [K]$ and $d_w = d_R = 1/8$ for $w \in [K+1:K+M+1]$. For $w \in [K]$, the perturbed arm distributions are Bernoulli distributions whose parameters are symmetric around $1/2$, thus $\text{KL}_w = \text{KL}(0.5-d_w, 0.5+d_w) =  \text{KL}(0.5+d_w, 0.5-d_w)$. For this case we have the following inequality for all $i,j$ (Equation (2) in \cite{carpentier2016tightlowerboundsfixed})
\[
\text{KL}_{w} \leq 10d_w^2.
\]
For $w \in [K+1:K+M+1]$ we have the following inequality 
\[
\text{KL}_w  = \text{KL}(0.5, 0.25) \leq 10d_R^2 = 10d_w^2.
\]
Define the event 
\begin{equation}
\xi' = \left\{ 
\forall\, w \in [K+M+1],\, 
\forall\, 1 \le t \le T,\,
\left| \widehat{\mathrm{KL}}_{w}(t) - \mathrm{KL}_{w} \right|
\le 2 \sqrt{ \frac{ \log(6T(K+M+1)) }{ t } }
\right\}.
\end{equation}

\paragraph{Claim.}\label{xi-r}
It holds that
$\mathbb P_{\mathcal{G}^1_R}(\xi') \geq 5/6.$

We verify the above claim. For $w \in [K]$, the mean parameter of arm $w$ is perturbed around $0.5$. More explicitly, $0.5 - d_w = \mu_{k,m}$ for $k=w$ and all $m$ under $\mathcal{G}_R^1$ and $0.5 + d_w = 1-\mu_{k,m}$ for $k=w$ and all $m$ under $\mathcal{G}_R^w$. $\mu_{w}\in [1/4,1/2)$ and thus,
$$|\log(\frac{d \nu_{k,m}^1}{d \nu_{k,m}^{w}}(X(s)))| = |\mathbf 1\{X(s) = 1\} \log(\frac{\mu_{k,m}}{1-\mu_{k,m}}) + \mathbf 1\{X(s) = 0\}\log(\frac{1-\mu_{k,m}}{\mu_{k,m}})| \leq \log(3),$$
where $X(s) \sim \nu_{k,m}^1$. Now for $w \in [K+M+1]$, only attribute $j = w-K$ of arm $1$ is perturbed and so we have
\[
|\log(\frac{d \nu_{1,j}^1}{d \nu_{1,j}^{w}}(X(s)))| = |\mathbf 1\{X(s) = 1\} \log(\frac{0.5}{0.25}) + \mathbf 1\{X(s) = 0\}\log(\frac{0.5}{0.75})| \leq \log(3).
\]
 Therefore, for any $w$, $\widehat{\text{KL}}_w(t)$ is a sum of i.i.d.~samples that are bounded by $\log(3)$, and whose mean is $\mathrm{KL}_{w}$. We can apply Hoeffding's inequality to this quantity for all $K, M, t$ and we have that with probability larger than $1-(6T(K+M+1))^{-1}$
$$|\widehat{\text{KL}}_{w}(t) - \text{KL}_{w}|\leq \sqrt{2}\log(3) \ \sqrt[]{\frac{\log(6T(K+M+1))}{t}}.$$
This assertion and an union bound over all $1 \leq w \leq K+M+1$ and $1 \leq t \leq T$ implies that $\mathbb P_{\mathcal{G}^1_R}(\xi) \geq 5/6$, as we have $\sqrt{2}\log(3) < 2$, and thus the claim is verified.

\paragraph{Step 2: A change of measure}
Let some arbitrary policy $\mathcal{A}lg$ return arm $I_T$ at the end of the budget $T$. Let $T_{k,m}$ denote the numbers of samples collected by $\mathcal{A}lg$ on each choice $(k,m)$ and $T_k = \sum_{m}T_{k,m}$ be the number of samples collected for arm $k$. 
Let the generalized index $w$ refer to arm $i \in [K]$ if $2 \leq w \leq K$, and refer to attribute $(1,w-K)$ if $K+1 \leq w \leq K+M+1$. Thus, $w$ indexes the set of perturbed arms/attributes. Define $T_w$ as 
\[
T_w = 
\begin{cases}
\sum_{j=1}^MT_{w,j} &: w \in [K], \\
T_{1,w-K} &:w\in[K:K+M+1].
\end{cases}
\]
It holds that $\sum_{w} T_{w} = \sum_{k=2}^K T_k + \sum_{m=1}^M T_{1,m}=\sum_{k,m}T_{k,m}=T$ by definition of the fixed budget setting. Let us write for any  $t_{k,m} = \mathbb E_{\mathcal{G}^1_R} [T_{k,m}]$, $t_{k} = \mathbb E_{\mathcal{G}^1_R} [T_{k}]$ and $t_{w} = \mathbb E_{\mathcal{G}^1_R} [T_{w}]$. Thus, it also holds that $\sum_{w} t_{w} = T$.

We recall the change of measure identity (see e.g. \cite{Audibert2010BestArmIdentification}) which states that for any measurable event $\mathcal{E}$ and for any $1 \leq w \leq K+M+1$ :
\begin{equation}
\label{cm-r}
\mathbb{P}_{\mathcal{G}^w_R}(\mathcal{E}) = \mathbb{E}_{\mathcal{G}^1_R} \Big[\mathbf{1}\{\mathcal{E}\}\exp \big(-T_w \widehat{\text{KL}}_{w}(T_{w})\big)\Big],
\end{equation}
as the product distributions $\mathcal G^w_R$ and $\mathcal G^1_R$ differ in all attributes that are perturbed in problem instance $w$. $T_w \widehat{\text{KL}}_{w}(T_{w})$ serves as the log-likelihood ratio. Let $1\leq w \leq K+M+1$. Consider now the event 
$$\mathcal{E}_w = \{I_T = 1\} \cap \{\xi'\} \cap \{T_w \leq 6 t_w\},$$
i.e.~the event where the algorithm reports arm $1$ as the best, where $\xi'$ holds, and where the number of times the set of perturbed attributes in $w$ was pulled is smaller than $6t_w$. We have by Equation~\eqref{cm-r}
\begin{align}
\mathbb{P}_{\mathcal{G}^w_R}(\mathcal{E}_w) & =\mathbb{E}_{\mathcal{G}^1_R}\Big[\mathbf{1}\{\mathcal{E}_w\}\exp\big(-T_w\widehat{\text{KL}}_{w}(T_w)\big)\Big]\nonumber\\
& \geq \mathbb{E}_{\mathcal{G}^1_R}\Big[\mathbf{1}\{\mathcal E_w\}\exp\Big(-T_w\text{KL}_w -2 \ \sqrt[]{T_i\log(6T(K+M+1)}\Big)\Big] \nonumber\\
& \geq \mathbb{E}_{\mathcal{G}^1_R}\Big[\mathbf{1}\{\mathcal E_w\}\exp\Big(-6t_w\text{KL}_w -2 \ \sqrt[]{T\log(6T(K+M+1))}\Big)\Big]\nonumber\\
& \geq \exp\Big(-6t_w\text{KL}_w -2 \ \sqrt[]{T\log(6T(K+M+1))}\Big) \mathbb{P}_{\mathcal{G}^1_R}(\mathcal E_w),\label{eq:event-r}
\end{align}
since on $\mathcal E_w$, we have that $\xi'$ holds and that $T_w \leq 6t_w$, and since $\mathbb E_{\mathcal{G}^1_R} \widehat{\text{KL}}_{w}(t) = \text{KL}_w$ for any $t \leq T$.

\paragraph{Step 3 : Lower bound on $\mathbb{P}_{\mathcal{G}^1_R}(\mathcal E_w)$ for any reasonable algorithm}
Assume that for the algorithm $\mathcal Alg$ that we consider
\begin{equation}\label{eq:bras0-r}
\mathbb P_{\mathcal{G}^1_R}(I_T \neq 1) \leq 1/2,
\end{equation}
i.e.~that the probability that $\mathcal Alg$ makes a mistake on problem $\mathcal{G}^1_R$ is less than $1/2$. Note that if $\mathcal Alg$ does not satisfy that, it performs badly on problem $\mathcal{G}^1_R$ and its probability of success is not larger than $1/2$ uniformly on the $K+M+1$ bandit problems we defined.

For any $1 \leq w \leq K+M+1$ it holds by Markov's inequality and $\mathbb E_{\mathcal{G}^1_R} [T_w] = t_w$ that,
\begin{align}\label{eq:ma-r}
\mathbb P_{\mathcal{G}^1_R} (T_k \geq 6 t_k) \leq \frac{\mathbb E_{\mathcal{G}^1_R} [T_w]}{6t_w} = 1/6.
\end{align}
So by combining Equations~\eqref{eq:bras0-r},~\eqref{eq:ma-r} and Claim~\ref{xi-r}, it holds by an union bound that for any $2 \leq w \leq K+M+1$
$$\mathbb P_{\mathcal{G}^1_R}(\mathcal E_i) \geq 1 - (1/6+1/2 +1/6) = 1/6.$$
This fact combined with Equation~\eqref{eq:event-r} and the fact that for any $1 \leq w \leq K+M+1$, $\mathbb{P}(e) \geq \mathbb{P}_{\mathcal{G}^w_R}(I_T = 1)  \geq \mathbb{P}_{\mathcal{G}^w_R}(\mathcal{E}_i)$ implies that for any $1 \leq w \leq K+M+1$
\begin{align}
\mathbb{P}_{\mathcal{G}^w_R}(e)  &\geq \frac{1}{6}\exp\Big(-6t_w\text{KL}_w -2 \sqrt{T\log(6T(K+M+1))}\Big)\nonumber\\ 
&\geq  \frac{1}{6}\exp\Big(-60t_wd_w^2 -2 \sqrt{T\log(6T(K+M+1))}\Big).\label{eq:event2-r}
\end{align}

\paragraph{Step 4 : Conclusions.}


We first upper bound the thresholding hardness in our construction. Denote $H_{tbp}^{(i)} := H_{tbp}(\mathcal{G}^i_R).$ Since the only arm in our constructions that is infeasible is arm $1$, we have
\begin{equation}
\label{eq:lbc1}
H_{tbp}^{(i)} = K\max_{i\in{\mathcal{F}^C}}\sum_{j}\bar\Delta_{ij}^{-2} \leq K \cdot Md_{R}^{-2} = 64MK,
\end{equation}
where in the definition of $H_{tbp}$ arms are indexed by their true mean. The above inequality is obtained by picking the maximum index $K$ and $\sum_{j}\bar\Delta_{ij}^{-2} = Md_R^{-2}$ for the potentially only infeasible arm. Define the classical hardness $H^{(i)} = H(\mathcal{G}^i_R) := \sum_{i \neq 1}\Delta_i^{-2}$. 

We may lower bound the classical hardness parameter of any bandit problem $\mathcal{G}^i_R$ by considering the easiest instance $\min_{2 \le i \le K+M+1 } H^{(i)}$. This is the instance $K$ where arm $K$ is flipped. Notice here $\Delta_{i} \geq (2d_K)^{-2}$ for any other arm $i \neq K$. Thus, for any $i \in [K+M+1]$
\begin{equation}
\label{eq:lbc2}
(K-1)(2d_K)^{-2} \leq \min_{2 \le i \le K+M+1 } H^{(i)} \leq H^{(i)}.
\end{equation}

From $\beta \in (0,1)$ and the definition of $d_K$  
\begin{align*}
d_K = \frac{\beta}{16}\sqrt{\frac{K-1}{MK}} \implies (K-1)(2d_K)^{-2} = (K-1)\frac{256MK}{4 \beta^2 (K-1)} \geq  64MK.
\end{align*}
Combining the above result with \eqref{eq:lbc1} and \eqref{eq:lbc2} we have for any $i$
\begin{equation}
H^{(i)} \geq H_{tbp}^{(i)}.
\end{equation}

Now let us define $H_{R}(i) := H_{tbp}^{(i)} + H^{(i)}$, a combined hardness parameter that upper bounds the desired $H_{FC}$ and let $h^* = \sum_{2 \le i \le K+M+1}\frac{1}{d_i^2H_R(i)}$. There exists $2 \leq i \leq K+M+1$ such that \[t_i \leq \frac{T}{h^{*}d^2H_R(i)}.\] 
Plugging this into \eqref{eq:event2-r} we have,  
\begin{equation}
\label{eq:gen-lb-af}
    \mathbb{P}_{\mathcal{G}^i_R}(e) \geq
    \mathbb{P}_{\mathcal{G}^i_R}(I_T = 1) \geq 
    \exp(-\frac{60T}{h^{*}H_R(i)} - 2\sqrt{T\log(6T(K+M+1))}).
\end{equation}

We now provide a lower bound on $h^\star$. For any $1\leq i\leq K$, we have 
$$d_i^2H(i) = d_i^2 \sum_{k\neq i} \frac{1}{(d_i+d_k)^2} \leq d_i^2\Big( \frac{i}{d_i^2} + \sum_{k> i} \frac{1}{d_k^2}  \Big) \leq i + i^2\sum_{K \geq k \geq i} \frac{1}{k^2} \leq i +i^2 (\frac{1}{i} - \frac{1}{K})\leq  2i.$$
This implies that
\begin{equation}
    d_i^2H_R(i) = d_i^2H_{tbp}(i) + d_i^2H(i) \leq 2d_i^2H(i) \le 4i.
\end{equation}
Thus, using an integral to lower bound the Riemann sum, we obtain the following lower bound on $h^\star$:
\begin{align*}
    h^{*} & \geq \sum_{2 \le i \le K+M+1}\frac{1}{d_i^2H_R(i)}
    \ge \sum_{2 \le i \le K}\frac{1}{d_i^2H_R(i)} \\
    &\ge \frac{1}{4}\sum_{i=2}^{K}\frac{1}{i} \geq \frac{1}{4} (\log(K+1)-\log(2)) \ge  \frac{1}{10}\log(K),
\end{align*}
where the last inequality can be verified empirically. For all problem instances the set of risky arms $\mathcal{R} = \emptyset$, and so $H_2(\mathcal{G}^i_R) = \max_{i\neq i^\star}i\Delta_{i}^2 \leq H^{(i)}$. (This is a well known result in fixed budget best arm identification literature; see \cite{carpentier2016tightlowerboundsfixed}). Thus, $H_{R}(i) \geq \max\{H_{tbp}(\mathcal{G}^i_R), H_2(\mathcal{G}^i_R)\}$. Hence,  plugging our lower bound on $h^\star$ in \eqref{eq:gen-lb-af}, we show that there exists some bandit instance $\mathcal{G}^i_R$ in $\mathcal{C}_{\mathcal{R}}$ such that 
\begin{equation}
    \label{eq:lowerrisk}
    \mathbb{P}_{\mathcal{G}^i_R}(e) \geq \exp\left(\frac{-600T}{\log(K)\max\{H_{tbp}(\mathcal{G}^i_R), H_2^R(\mathcal{G}^i_R)\}} - 2\sqrt{T\log(6T(K+M+1))}\right).
\end{equation}
Choosing $T \geq 4(\max\{H_{tbp}(\mathcal{G}^i_R), H_2^R(\mathcal{G}^i_R)\})^2\log(6T(K+M+1))/600^2$, the second term can be lower bounded by the first and we arrive at the desired result.

\textbf{Note:} From \eqref{eq:lbc2} we have $H^{(i)} \geq \max\{H_{tbp}(\mathcal{G}^i_R), H_2^R(\mathcal{G}^i_R)\} $. It can be verified that the base instance has the highest classical difficulty $H^{(i)}$ and thus for all $i$ we have
\begin{equation}
\label{eq:lbb-r}
 \max\{H_{tbp}(\mathcal{G}^i_R), H_2^R(\mathcal{G}^i_R)\} \leq H^{(1)} \leq \sum_{i=2}^{K} (\frac{\beta i}{16 K})^2\frac{K-1}{MK} \leq \frac{\beta^2K}{256M}.
\end{equation}

\end{proof}


\subsection{Complete Lower Bound}

\begin{theorem}[Lower Bound]
 Let $\mathcal{C}_{FC}(a;K)$ denotes the set of bandit instances with $K=M \geq 2$ and whose difficulty $H_{FC}$ is upper bounded by some constant $a$. If
 \begin{equation}
 \label{eq:t-bound}
 T \geq \frac{4}{60^2}(a\log(K))^2\log(6T(K+M+1))
 \end{equation}
 there exists a bandit instance $\mathcal{G} \in \mathcal{C}_{FC}(a)$ such that the probability any arbitrary learner incorrectly reports the best arm is at least$$
\mathbb{P}_{\mathcal{G} \in \mathcal{C}_{FC}(a)}\geq \frac{1}{6}\exp\left( \frac{-1200T}{\log(K)H_{FC}(\mathcal{G})} \right).
$$
\end{theorem}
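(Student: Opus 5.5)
The plan is to combine the two class-specific lower bounds already established, Lemma~\ref{lem:tbpg} for the Feasibility Class $\mathcal{C}_F(d;K)$ and Lemma~\ref{lem:risky-class} for the Risky Class $\mathcal{C}_R(\beta;K,M)$. For an arbitrary learner $Alg$ and budget $T$, the goal is to show that at least one instance in $\mathcal{C}_F\cup\mathcal{C}_R$ lies in $\mathcal{C}_{FC}(a)$ and forces error at least $\frac16\exp(-1200T/(\log(K)H_{FC}(\mathcal{G})))$. We use the reduction stated in the proof strategy,
\[
\max_{\mathcal{G}\in\mathcal{C}_F\cup\mathcal{C}_R}\mathbb{P}_{\mathcal{G}}(e)\ \ge\ \max\Big\{\max_{\mathcal{G}\in\mathcal{C}_F}\mathbb{P}_{\mathcal{G}}(e),\ \max_{\mathcal{G}\in\mathcal{C}_R}\mathbb{P}_{\mathcal{G}}(e)\Big\},
\]
so it suffices to handle each family separately and then verify membership in $\mathcal{C}_{FC}(a)$.

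\textbf{Step 1 (tuning parameters to $a$).} Since $K=M$, both families are defined. In $\mathcal{C}_F(d;K)$ every attribute of $i^\star$ has gap $d$, so by \eqref{eq:lbb-f} $H_f=Kd^{-2}/\log K$. The other components need checking too. $H_{tbp}=K\cdot Md^{-2}$ in the instances $\mathcal{G}^k$, $k\ge1$, since all other arms are infeasible with all gaps equal to $d$. For $H_2^R$, all infeasible arms have equal means, so the risky set is all other arms and $H_2^R$ is a vacuous or small term. I would choose $d$ so that $H_{FC}(\mathcal{G}^k)\le a$, i.e.\ $d^{-2}\approx a/K^2$ up to the $\log K$ bookkeeping. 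Similarly, I would use \eqref{eq:lbb-r} to pick $\beta\in(0,1)$ so that $\max\{H_{tbp},H_2^R\}\le a$ on $\mathcal{C}_R$. One must also check that $H_f$ on $\mathcal{C}_R$ is dominated, which holds because $\bar\Delta_{i^\star j}\ge 1/16$ is constant there.

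\textbf{Step 2 (error events).} For $\mathcal{C}_F$, I would translate the TBP error $\mathcal{L}(T)$ into the best-arm error $e$. In $\mathcal{G}^i$, the event $\{\widehat S_\tau=\emptyset\}$ used in the proof corresponds to the learner declaring arm $i$ infeasible and outputting $0$ or a wrong arm. In $\mathcal{G}^0$ no arm is feasible, so $i^\star=0$. The change of measure in Lemma~\ref{lem:tbpg} therefore lower bounds $\mathbb{P}_{\mathcal{G}^i}(I_T\neq i)$ directly, with exponent $-120T/(\log(K)H_f)$. Since $H_f\le H_{FC}$, this gives the bound with constant $120\le1200$. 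For $\mathcal{C}_R$, Lemma~\ref{lem:risky-class} gives exponent $-1200T/(\log(K)\max\{H_{tbp},H_2^R\})$, and $\max\{H_{tbp},H_2^R\}\le H_{FC}$ makes the bound weaker in the right direction. In both cases, replacing a component hardness by the larger $H_{FC}$ only raises the exponent toward zero, so the inequality is preserved.

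\textbf{Step 3 (budget condition, main obstacle).} The lemmas require $T\ge 4(\log(K)H)^2\log(6T\cdot\#)/60^2$ for the relevant component $H$, with $\#=K$ or $K+M+1$. Because each component is at most $a$ and $\log(6TK)\le\log(6T(K+M+1))$, the hypothesis \eqref{eq:t-bound} implies both conditions; this is what absorbs the $2\sqrt{T\log(\cdot)}$ term into the main exponent. The delicate point is consistency. Lemma~\ref{lem:risky-class}'s final threshold has $600^2$ and no $\log K$, while its statement uses $60^2$, and the constant bookkeeping of the Riemann-sum bound $h^\star\ge\log(K)/10$ must be verified. I would recheck that with the $60^2$ condition the residual term is at most $60T/(\log(K)H)$ times a constant within the slack of $1200$. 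I also expect a subtlety in Step 1: the problem instances must actually have $H_{FC}$ close to $a$ rather than far below it, since otherwise the bound stated in terms of $H_{FC}(\mathcal{G})$ is still valid but the membership tuning must be done carefully with $d\le1/4$ and $\beta<1$. That may require restricting the range of $a$, which I would treat as an implicit assumption of the theorem.
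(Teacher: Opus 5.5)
Your outline matches the paper's proof: a maximum over $\mathcal{C}_F$ and $\mathcal{C}_R$, membership in $\mathcal{C}_{FC}(a)$, and the budget condition absorbing the $2\sqrt{T\log(\cdot)}$ term. However, Step 2 rests on an inequality in the wrong direction. For fixed $T$ the map $H\mapsto\frac16\exp(-cT/(\log(K)H))$ is increasing. So from $\mathbb{P}_{\mathcal{G}}(e)\ge\frac16\exp(-120T/(\log(K)H_f(\mathcal{G})))$ and $H_f(\mathcal{G})\le H_{FC}(\mathcal{G})$, nothing follows about $\frac16\exp(-1200T/(\log(K)H_{FC}(\mathcal{G})))$. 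Replacing $H_f$ by the larger $H_{FC}$ pushes the exponent toward zero and makes the claimed lower bound \emph{larger}. The substitution is valid only if $H_{FC}(\mathcal{G})\le 10\,H_f(\mathcal{G})$. On the feasibility family this fails, by your own Step 1 computation: $H_{FC}(\mathcal{G}^k)\ge H_{tbp}(\mathcal{G}^k)=KMd^{-2}=K\log(K)\,H_f(\mathcal{G}^k)$. So the exponent from Lemma~\ref{lem:tbpg} is larger in absolute value than the required one by a factor of at least $K\log(K)/10$, which exceeds $1$ for $K\ge 6$ whatever $d$ is. Hence $\mathcal{C}_F$ cannot supply the witness instance. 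Your sentence on $\mathcal{C}_R$ (that $\max\{H_{tbp},H_2^R\}\le H_{FC}$ weakens the bound in the right direction) contains the same reversal. The paper's closing step (``taking the max inside the denominator'') performs the same substitution and also mixes component hardnesses from two different instances, so it does not fill this gap.

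The theorem needs only one witness, so the repair is to use $\mathcal{C}_R$ alone and prove domination in the correct direction. You need $H_f(\mathcal{G}^i_R)$ to be at most a constant multiple of $\max\{H_{tbp},H_2^R\}(\mathcal{G}^i_R)$. Then $H_{FC}$ of the witness is, up to an absolute constant, the hardness that the weighted pigeonhole in Lemma~\ref{lem:risky-class} actually controls. Your remark that $\bar\Delta_{i^\star j}\ge 1/16$ on $\mathcal{C}_R$ is the right ingredient, but it must be used as this inequality, not as a membership check. It gives $H_f\le 256K/\log K$. Meanwhile $H_2^R\ge 2d_K^{-2}\ge 512M/\beta^2$ on the instances where all arms are feasible, and $H_{tbp}=64KM$ on those where arm $1$ is infeasible. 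With $K=M$ the ratio is bounded by an absolute constant, and you then have to re-track the numerical constant $1200$.

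The same numbers show that \eqref{eq:lbb-r} cannot be used to tune $\beta$, because it sums $d_i^2$ where $\Delta_i^{-2}=d_i^{-2}$ should appear. In fact $H_{FC}(\mathcal{G}^1_R)\ge 512M/\beta^2$, which grows as $\beta\to 0$. So membership in $\mathcal{C}_{FC}(a)$ forces $a$ to be large, and the restriction on $a$ you anticipated is necessary rather than optional.
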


\begin{proof}

Choose $K=M \geq 2$ and $a = \max\{\beta^2/256, \frac{M}{\log(K)}d^{-2}\}$.  We have from \eqref{eq:lbb-r} in Lemma \ref{lem:risky-class} that \[\max \{H_2^R, H_{tbp}\} \leq \beta^2/256.\] Further, from  \eqref{eq:lbb-f} in Lemma \ref{lem:tbpg} we have $H_f \leq \frac{Md^{-2}}{\log(K)}$. Thus, $\mathcal{C}_R(\beta; K, M) \subset \mathcal{C}_{FC}(a)$ and $\mathcal{C}_F(\beta; K) \subset \mathcal{C}_{FC}(a)$. Hence, the difficulty of any problem in the class $\mathcal{C}_{FC}$, $H_{FC} \leq a$ and the problem class $\mathcal{C}_F$ is rich enough to apply Lemma \ref{lem:risky-class} and Lemma \ref{lem:tbpg}. Further, if $T$ satisfies the bound given in \eqref{eq:t-bound}, it automatically satisfies the required bounds to use the above lemmas, since $\log(6TK) < \log(6T(K+M+1))$. 

The lower bound on the error probability of any arbitrary algorithm $Alg$ over $\mathcal{C}_{FC}$, is at least the maximum of the worst case error probability over the individual sub-classes $\mathcal{C}_R, \mathcal{C}_F$, i.e.
\begin{equation}
\label{eq:lb-max}
 \max_{\mathcal{G} \in \mathcal{C}_{FC}} \mathbb{P}_{\mathcal{G}}(e) \geq \max\left\{
 \max_{\mathcal{G} \in \mathcal{C}_F }\mathbb{P}_{\mathcal{G}} (e), 
 \max_{\mathcal{G} \in \mathcal{C}_R}\mathbb{P}_{\mathcal{G} }(e)
 \right\}.
\end{equation}

We use Lemma \ref{lem:tbpg} to  to establish the lower bound for the first term in \eqref{eq:lb-max} and Lemma \ref{lem:risky-class} for a lower bound on the second term. Choosing the constant that minimizes the overall expression over the numerator, taking the max inside the denominator of the exponent and from the definition of $H_{FC}(\mathcal{G}) = \max\{H_{tbp}(\mathcal{G}), H_2^R(\mathcal{G}), H_f(\mathcal{G} \}$, we obtain 
$$
\mathbb{P}_{\mathcal{G} \in \mathcal{C}_{FC}(a;K)}(e) \geq \frac{1}{6}\exp\left( \frac{-1200T}{\log(K)H_{FC}(\mathcal{G})} \right).
$$

\end{proof}

\section{PROOF OF UPPER BOUND ON FCSR ERROR}
\label{app:fcsr}
\begin{lemma}
\label{lem:ville}
\textbf{(Ville's inequality.)} 
Let \((M_t,\mathcal F_t)_{t\ge0}\) be a nonnegative supermartingale with \(M_0\le1\). Then for any \(a>0\),
\[
\mathbb{P}\big(\sup_{t\ge0} M_t \ge a\big) \le \frac{1}{a}.
\]
\end{lemma}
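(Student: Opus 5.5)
We prove Ville's inequality through an optional-stopping argument built on a first-passage time. Fix $a>0$ and define
\[
\sigma := \inf\{t\ge 0 : M_t \ge a\},
\]
with $\inf\emptyset=\infty$. This is a stopping time with respect to $(\mathcal F_t)$. For each finite horizon $n$, the bounded stopping time $\sigma\wedge n$ makes the stopped process $(M_{t\wedge\sigma})_{t\ge 0}$ a nonnegative supermartingale. This is the standard fact that stopping preserves the supermartingale property, checked directly from the decomposition $M_{(t+1)\wedge\sigma}-M_{t\wedge\sigma}=\mathbf 1\{\sigma>t\}(M_{t+1}-M_t)$, where $\{\sigma>t\}\in\mathcal F_t$. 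It follows that $\mathbb E[M_{\sigma\wedge n}]\le \mathbb E[M_0]\le 1$.

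Next we bound the left-hand side from below. Since $M\ge 0$,
\[
\mathbb E[M_{\sigma\wedge n}] \ge \mathbb E[M_{\sigma}\mathbf 1\{\sigma\le n\}] \ge a\,\mathbb P(\sigma\le n),
\]
because $M_\sigma\ge a$ on $\{\sigma<\infty\}$ by definition of $\sigma$. Combining with the previous step gives $\mathbb P(\max_{t\le n}M_t\ge a)=\mathbb P(\sigma\le n)\le 1/a$ for every $n$.

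Finally, we let $n\to\infty$. The events $\{\max_{t\le n}M_t\ge a\}$ increase to $\{\exists t: M_t\ge a\}$, so continuity from below yields the bound $1/a$ for this event. Since time is discrete, $\{\sup_t M_t\ge a\}$ differs from $\{\exists t:M_t\ge a\}$ only in that the supremum may be approached but not attained. To handle this, we apply the bound at level $a'<a$ and use $\{\sup_t M_t\ge a\}\subseteq\{\exists t: M_t\ge a'\}$, which gives $\mathbb P(\sup_t M_t\ge a)\le 1/a'$; letting $a'\uparrow a$ yields $1/a$.

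The only delicate points are this last limiting step and the verification that the stopped process is a supermartingale. Neither is hard, and the second is the step to check most carefully.
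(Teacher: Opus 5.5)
Your proof is correct and complete:
\begin{itemize}
\item The first-passage time $\sigma$ is a stopping time.
\item The identity $M_{(t+1)\wedge\sigma}-M_{t\wedge\sigma}=\mathbf 1\{\sigma>t\}(M_{t+1}-M_t)$ gives the supermartingale property of the stopped process. Integrability is immediate from $0\le M_{t\wedge\sigma}\le\sum_{s\le t}M_s$.
\item The chain $a\,\mathbb P(\sigma\le n)\le\mathbb E[M_{\sigma\wedge n}]\le\mathbb E[M_0]\le 1$ uses only nonnegativity and the fact that the infimum defining $\sigma$ is attained in discrete time.
\item The $a'\uparrow a$ step correctly handles a supremum that is approached but not attained.
\end{itemize}

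The paper gives no proof of this lemma. It quotes Ville's inequality as a classical fact and uses it as a black box, so there is no alternative route to compare. What you have written is the standard optional-stopping (Doob maximal inequality) argument.

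Two remarks on how the paper actually uses it. First, the paper applies the inequality conditionally on $\mathcal H_s$. Your argument gives $\mathbb P(\sup_t M_t\ge a\mid\mathcal F_0)\le M_0/a$ verbatim once you multiply through by $\mathbf 1_A$ for $A\in\mathcal F_0$. Second, in that application the event $\{Z_j^{(s)}\ge t\}$ is only compared with $M$ at the single time $t-1$. So Markov's inequality together with $\mathbb E[M_{t-1}\mid\mathcal H_s]\le M_0=1$ would already suffice there, and the maximal form you proved is more than is needed.
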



Let $s_r(i)$ denote the score of arm $i$ after all samples allocated in round $r \in \{1, 2, \dots, K-1\}$ by FCSR. Define  $\mathcal{F}_r := \{i:s_r(i) > \tau\}$ as the set of all arms that are deemed feasible at the end of round $r$ and $\mathcal{F}_r^c := [K]/\mathcal{F}_r$, the set of all arms that are deemed infeasible. 

\begin{lemma}
\label{app:suf}
Assuming the best arm $i^\star$ exists, the probability of $i^\star$ being deemed infeasible at the end of round $r \in \{ 1, 2, \dots, K-1 \}$ is  upper bounded by
\begin{equation}
\mathbb{P}(i^\star\in\mathcal F_r^c)\le \exp\!\Big(-\frac{fT}{16R^2\log K\,H_f}\Big),
\end{equation}
for all $T \geq \max \{\frac{4K^2M}{f}\}$, where $f \in (0, 1)$ is a static hyperparameter. Recall that $H_{f} = \frac{K}{\log K} \max \bar{\Delta}_{i^\star j}^{-2}$. 
\end{lemma}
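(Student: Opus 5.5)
We bound $\mathbb{P}(i^\star\in\mathcal F_r^c)$ by reducing it to exhaustion of the feasibility budget $P_{i^\star}=\lfloor fT/K\rfloor$. \textsc{SampleUntilFeasible} is run on $i^\star$ at the end of every round, and it exits only when every attribute of $i^\star$ is empirically above $\tau$ or when $P_{i^\star}$ hits zero. Hence on the event $\{i^\star\in\mathcal F_r^c\}$ the budget must have been exhausted by the end of round $r$:
\[
\{i^\star\in\mathcal F_r^c\}\subseteq\{P_{i^\star}\text{ exhausted by round } r\}.
\]
One subtlety is that the empirical means are re-evaluated after the SUF call, since the score is computed after all phases. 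So it suffices to bound the probability that the total number of SUF pulls on $i^\star$ up to round $r$ exceeds $fT/K-1$.

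Next we dominate the total SUF consumption by a sum of independent waiting times. For each attribute $j$, let $Z_j$ be the number of additional samples needed, starting from an arbitrary (adversarial) current sum, until the running empirical mean of $(i^\star,j)$ first exceeds $\tau$. Since $\mu_{i^\star j}-\tau=\bar\Delta_{i^\star j}>0$, the centered walk $\sum_s (X_s-\tau)$ has positive drift $\bar\Delta_{i^\star j}$.

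A complication is that SUF may be re-triggered in later rounds, when Uniform or APT samples push the mean back below $\tau$. To handle this, we use the time-uniform event that the partial sums stay above their drift line. Define the nonnegative supermartingale
\[
M_t=\exp\Big(-\lambda\sum_{s\le t}(X_s-\mu_{i^\star j})-\tfrac{\lambda^2R^2t}{2}\Big),
\]
which satisfies the hypotheses of Lemma~\ref{lem:ville}. Applying Ville's inequality gives $\mathbb{P}(\exists t\ge n:\ \hat\mu_{i^\star j}(t)\le\tau)\le \exp(-c\,n\bar\Delta_{i^\star j}^2/R^2)$ for a suitable $c$. In words: once attribute $j$ has $n$ samples, the probability that it ever becomes empirically infeasible again decays exponentially in $n\bar\Delta^2_{i^\star j}$. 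This gives a sub-exponential tail for $Z_j$, with scale $\approx R^2\bar\Delta_{i^\star j}^{-2}$, that is uniform over rounds.

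Finally we aggregate over attributes with a Chernoff bound. SUF treats attributes sequentially and the attributes are independent, so total consumption is at most $\sum_{j}Z_j$. Each $Z_j$ is stochastically dominated by a sub-exponential variable with parameter $\lesssim R^2\max_j\bar\Delta_{i^\star j}^{-2}$. A Chernoff bound then gives
\[
\mathbb{P}\Big(\sum_{j\le M}Z_j\ge \tfrac{fT}{K}-1\Big)\le \exp\Big(-\frac{fT}{16R^2K\max_j\bar\Delta_{i^\star j}^{-2}}\Big),
\]
provided the mean term $\sum_j\mathbb E Z_j$ is absorbed. The condition $T\ge 4K^2M/f$ makes $fT/K\ge 4KM$, which dominates the $O(M)$ additive and floor terms. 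Substituting $K\max_j\bar\Delta_{i^\star j}^{-2}=\log K\,H_f$ yields the statement.

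The main obstacle is the second step: making the waiting-time domination rigorous across interleaved Uniform and APT samples and multiple SUF invocations. The counts are adaptively chosen, so the per-attribute sample paths must be coupled to a single i.i.d. reward stream per attribute (a ``stack of rewards'' argument). Only then do the Ville bound and the resulting tail control every re-entry at once rather than a fixed-$n$ concentration.
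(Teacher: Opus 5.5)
Your reduction to exhaustion of $P_{i^\star}$ is the paper's inclusion (I). Your handling of re-entries is a genuinely different and better route. The last step, however, has a real gap: the ``mean term'' is not $O(M)$, and $T\ge 4K^2M/f$ does not absorb it.

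\textbf{The gap.} The dominating variables have tail $e^{-\alpha(t-1)}$ with $\alpha=\min_j\bar\Delta_{i^\star j}^2/(2R^2)$, so their means are of order $R^2\bar\Delta_{i^\star j}^{-2}$. Concretely, the resulting mgf bound (the paper's $G$) gives $G(\alpha/2)=1+e^{\alpha/2}\ge 2$. With $\theta=\alpha/2$, Chernoff yields $\exp(-\alpha P/2+M\log(1+e^{\alpha/2}))$. This is below $\exp(-\alpha P/4)$ only if $P\ge 4M\log 2/\alpha+2M$. Using $P\ge fT/(2K)$, this becomes $T\ge \frac{16\log 2\,R^2M\log K\,H_f}{f}+\frac{4KM}{f}$.

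A hypothesis that does not involve the gaps cannot deliver this. You also cannot say the target bound is vacuous below that threshold, since there $\exp(-\alpha P/4)$ can be as small as about $2^{-M}$. The paper's proof has the same problem. It needs $T\ge\frac{8K}{f\alpha}Mr\log G(\alpha/2)$, and reaches $T\ge 4K^2M/f$ only through the claim $G(\theta)<\theta$. That claim is false, since $G(\theta)\ge e^{\theta}>\theta$. What your argument actually establishes is the stated exponent under an $H_f$-dependent burn-in, and you should state it that way.

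\textbf{Comparison of routes.} The paper defines per-round waiting times $Z_j^{(s)}$ and bounds each conditionally on $\mathcal H_s$ via Ville. It then chains rounds by iterated conditioning, which gives $G(\theta)^{Mr}$ and hence a factor $r$ in its burn-in. Its case split on the start-of-round offset $C_j^{(s)}$ does not account for Uniform/APT samples pushing the mean back below $\tau$ within a round, which is exactly your worry.

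Your time-uniform idea fixes this, provided you use the right variable. A first-passage time ``from an arbitrary (adversarial) current sum'' has no uniform tail: from offset $-x$ it needs about $x/\bar\Delta_{i^\star j}$ draws. Use the last-exit time instead. Under the stack-of-rewards coupling, every SUF pull on $(i^\star,j)$ is drawn at a stream index $n$ with $\hat\mu_{i^\star j}(n-1)\le\tau$. So, over all rounds, the SUF pulls on $j$ are pathwise at most $L_j+1$, where $L_j=\sup\{n\ge 0:\hat\mu_{i^\star j}(n)\le\tau\}$.

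Ville with $\lambda=\bar\Delta_{i^\star j}/R^2$ gives $\mathbb{P}(L_j\ge n)\le e^{-n\bar\Delta_{i^\star j}^2/(2R^2)}$. The $L_j$ are independent across $j$, so Chernoff involves only $G(\theta)^M$. Adaptivity then plays no role, and the burn-in loses the factor $r$, but the $H_f$-dependence above remains.
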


\begin{proof}

Recall that the FCSR algorithm (Algorithm \ref{alg:fcsr}) allocates for each round $r \in [K-1]$, an equal number of samples to all surviving arms in the following fashion: all $M$ attributes are first sampled uniformly (Algorithm \ref{alg:us}), then according to the APT routine (Algorithm \ref{alg:apt}) and finally according to the SUF routine (Algorithm \ref{alg:suf}). 

\paragraph{Definitions.}
Fix arm $i^\star$, let $A_j^{(s)}$ denote the number of times attribute $j \in [M]$ of $i^\star$ is sampled by the Uniform and APT routines, i.e. the number of samples $j$ receives in round $s$ before the invocation of SUF.

Let $\mathcal{H}_{s}$ be the sigma algebra generated by all observations uptil the end of round $s-1$. Define $T^{(s)}_j$ to be the number of times attribute $j$ has been sampled by FCSR before round $s$ and let $S^{(s)}_j$ be the corresponding sum of the $T^{(s)}_j$ rewards observed. Define the \emph{offset}
\[
C^{(s)}_j := S^{(s)}_j - T^{(s)}_j\,\tau.
\]
Notice that $C^{(s)}_j > 0$ if the empirical mean $S^{(s)}_j/T^{(s)}_j > \tau$ and $j$ was deemed feasible in round $s-1$. Thus if $C^{(s)}_j \leq 0$, the empirical mean of $j$ was less than or equal to $\tau$ at the end of $s-1$ (where $s -1> 1$). The latter case only arises if the feasibility budget $P_1$ was exhausted in round $s-1$, otherwise by definition SUF would have continued sampling the attribute until the empirical mean crossed $\tau$.

Let $X_j(t)$ denote the $t$-th sample of attribute $j$ (assumed to be \(R\)-sub-Gaussian with mean \(\mu_j\)). we may define the stream of samples SUF shall see when called on attribute $j$ in round $s$ as
\[
Y^{(s)}_j(u) := X_{j}\big(T^{(s)}_j+u\big),\qquad u\ge1.
\]

Let $Z_j^{(s), \text{act}}$ denote the number of samples allocated to attribute $j$ in round $s$. We define the quantity $Z_j^{(s)}$ that closely approximates $Z_j^{(s), \text{act}}$ as
\[
Z_j^{(s)}:=
\begin{cases}
\inf\Big\{t\ge0:\ \dfrac{S^{(s)}_j+\sum_{u=1}^t Y^{(s)}_j(u)}{T^{(s)}_j+t} >\tau\Big\}, 
&\text{if } C^{(s)}_j > 0,\\[6pt]
0, &\text{if } C^{(s)}_j \leq 0,
\end{cases}
\]
with the convention \(\inf\emptyset=+\infty\). Note that $Z_j = Z_j^{(s), \text{act}}$, except in the case $P_1$ is exhausted midway through sampling attribute $j$ (in which case $Z_j > Z_j^{(s), \text{act}}$ ).  This is because if $C^{(s)}_j \leq 0$, $j$ must have been deemed infeasible in round $s-1$ implying that the feasibility budget $P_1$ must have been exhausted. Thus, $Z_j^{(s)} =0 = Z_j^{(s), \text{act}}$. If $C^{(s)}_j > 0$ and $P_1$ is not exhausted midway, SUF continues sampling till the empirical mean crosses $\tau$ exactly as in the definition of $Z_j$. Thus, we have pathwise
\[
Z_j^{(s)} \geq Z_j^{(s), \text{act}}.
\]
Define the totals
\[
S^{\mathrm{act}}:=\sum_{s=1}^r\sum_{j=1}^M Z^{(s),\mathrm{act}}_j,\qquad
S:=\sum_{s=1}^r\sum_{j=1}^M Z^{(s)}_j.
\]
\(S^{\mathrm{act}}\le P_1\) and \(S\ge S^{\mathrm{act}}\) holds always. If the algorithm declares arm \(i^\star\) infeasible by the end of round \(r\), then necessarily, the feasibility budget $P_1$ must have been exhausted during the executed SUF calls up to round \(r\); and since $\{S \geq P_\} \implies \{S^{\text{act}} = P_1\}$, we have
\[
\{i^\star \in \mathcal{F}_r^c\}\subseteq\{ S \ge P_1\}.
\tag{I}
\]
Therefore,
\[
\mathbb{P}(i^\star \in \mathcal{F}_r^c) \leq \mathbb{P}(S \ge P_1),
\]
and it suffices to bound $\mathbb{P}(S \ge P_1)$. From inclusion (I) and Chernoff
\begin{equation}
\mathbb{P}( S\ge P_1)
\le e^{-\theta P_1}\,\mathbb E[e^{\theta S}].
\end{equation}
We now bound $E[e^{\theta S}]$ by first bounding $E[e^{\theta Z^{(s)}_j}]$.

\paragraph{Per-attribute mgf.}

Fix any round $s \in \{1, \dots, r\}$ and attribute $j$ and condition on \(H_s\). We first bound ${P}(Z_j^{(s)} \geq t)$ in the case (i) \(C_j^{(s)} \leq 0\) and (ii) \(C_\ell >  0\).

(i) If \(C_j^{(s)} \leq 0\) then \(Z^{(s)}_j=0\) and
\begin{equation}
\mathbb{P}(Z_j^{(s)} \geq t) =0. 
\end{equation}

(ii) If $C_j^{(s)}$, define the centred future sum $\widetilde S_j^{(s)}(t)$ and the exponential process $M_\ell(t;\lambda)$ as
\[
\widetilde S_j^{(s)}(t):=\sum_{u=1}^t\big(Y^{(s)}_j(u)-\mu_{i^\star j}\big), 
\qquad M_\ell(t;\lambda):=\exp\!\Big(-\lambda\widetilde S_\ell(t)-\tfrac{\lambda^2R^2}{2}t\Big).
\]
For any \(\lambda>0\), the process $M_\ell(t;\lambda)$ 
is a nonnegative supermartingale (w.r.t.\ \(\sigma(\mathcal H_{s},Y^{(s)}_j(1),\dots,Y^{(s)}_j(t))\)) with \(M_\ell(0;\lambda)=1\). On the event \(\{Z^{(s)}_j\ge t\}\)  ($t \geq 1$), we may obtain the following inequality using the definition of $Z_j^{(s)}$ and subtracting $(T^{(s)}_j+t)\mu_{i^\star j}$ from both sides:
\[
\dfrac{S^{(s)}_j+\sum_{u=1}^{t-1} Y^{(s)}_j(u)}{T^{(s)}_j+t-1} \leq \tau \implies
\widetilde{S}_{j}^{(s)}(t-1) \leq - (t-1)\bar{\Delta}_{i^\star j} - C_j^{(s)}.
\]
Thus, on \(\{Z_j^{(s)}\ge t\}\),
\[
M(t-1;\lambda)\ge \exp\!\Big(\lambda\big((t-1)\bar\Delta_{i^\star j}+C_j^{(s)}\big)-\tfrac{\lambda^2R^2}{2}(t-1)\Big).
\]

Since $\sup_{t\ge0} M(t;\lambda) \ge M(t-1;\lambda)$ we apply Ville's inequality (Lemma \ref{lem:ville}) conditional on \(\mathcal H_{s}\), the history of all samples before round $s$, with $a = \exp\!\Big(\lambda\big((t-1)\bar\Delta_{i^\star j}+C_j^{(s)}\big)-\tfrac{\lambda^2R^2}{2}(t-1)\Big)$ to obtain
\[
\mathbb{P}\big(Z_j^{(s)}\ge t\mid\mathcal H_{s}\big)
\le \mathbb{P}\!\big(\sup_{t\ge0}M_t \ge a\big)
\le
\exp\!\Big(-\lambda\big((t-1)\bar\Delta_{i^\star j}+C_j^{(s)}\big)+\tfrac{\lambda^2R^2}{2}(t-1)\Big).
\]
The term \(-\lambda C_j^{(s)}\) is non-positive (since $C_j^{(s)} > 0$) and can be dropped.  Optimizing in \(\lambda\) (take \(\lambda=\bar\Delta_{i^\star j}/R^2\)) yields, for \(t\ge1\),
\[
\mathbb{P}\big(Z_j^{(s)}\ge t\mid\mathcal H_{s}\big)
\le \exp\!\Big(-\frac{\bar\Delta_{i^\star j}^2}{2R^2}\,(t-1)\Big).
\]
 Define \(\alpha_j:=\bar\Delta_{i^\star j}^2/(2R^2)\) and \(\alpha:=\min_j\alpha_j\). Hence, uniformly across both cases (i) and (ii), for all $Z_j^{(s)}$ we have
\[
\mathbb{P}\big(Z_j^{(s)}\ge t\mid\mathcal H_{s}\big)\le e^{-\alpha (t-1)}\qquad(t\ge1).
\]

For \(0<\theta<\alpha\),
\[
\begin{aligned}
\mathbb E\big[e^{\theta Z_j^{(s)}}\mid\mathcal H_{\ell-1}\big]
&=1+\sum_{t\ge1}(e^{\theta t}-e^{\theta(t-1)}) \mathbb{P}(  Z_j^{(s)} \ge t\mid\mathcal H_{s})\\
&\le 1+\sum_{t\ge1}(e^{\theta t}-e^{\theta(t-1)})e^{-\alpha (t-1)}\\
& \le 1+\frac{e^{\theta}-1}{1-e^{-(\alpha-\theta)}}
=\frac{e^{\theta}-e^{-(\alpha-\theta)}}{1-e^{-(\alpha-\theta)}}
=:G(\theta).
\end{aligned}
\]

\paragraph{Overall mgf.}
From the independence of attributes we have,
\begin{align*}
\mathbb{E}\big[e^{\theta S}\big]
&= \mathbb E\big[e^{\theta \sum_{s=1}^r\sum_{j=1}^M Z^{(s)}_j}\big]
= \mathbb \prod_{j=1}^M E\Big[  e^{\theta \sum_{s=1}^{r} Z_j^{(s)}} \Big].
\end{align*}

Using the iterated rule of expectations and the conditional independence of future samples of attribute $j$ in round $s$ wrt previous rounds, we obtain the following 
\[
E\Big[  e^{\theta \sum_{s=1}^{r} Z_j^{(s)}} \Big] \leq 
\mathbb E\Big[ \mathbb E\big[ e^{\theta Z_j^{(r)}} e^{\theta \sum_{s=1}^{r-1} Z_j^{(r)}}\,\big|\,\mathcal H_{r}\big]\Big]
\leq 
E\Big[ e^{\theta \sum_{s=1}^{r-1} Z_j^{(r)}} \Big]\mathbb E \big[ e^{\theta Z_j^{(r)}} \,\big|\,\mathcal H_{r}\big]
\leq 
E\Big[ e^{\theta \sum_{s=1}^{r-1} Z_j^{(r)}} \Big] G(\theta)
\]
where the last inequality follows from the per-call bound $\mathbb{E}\big[ e^{\theta Z_j^{(s)}} \,\big|\,\mathcal H_{s}\big] \leq G(\theta)$. Iterating this conditioning argument for \(s=r-1,r-2,\dots,1\) yields
\[
E\Big[  e^{\theta \sum_{s=1}^{r} Z_j^{(s)}}] \leq G(\theta)^r.
\]
Finally,
\[
\mathbb E\big[e^{\theta S}\big] = 
\prod_{j=1}^M E\Big[  e^{\theta \sum_{s=1}^{r} Z_j^{(s)}} \Big]
\le G(\theta)^{Mr}.
\]

Thus, from inclusion (I) and Chernoff,
\[
\mathbb{P}(i^\star\in\mathcal F_r^c)
\le \exp\!\big(-\theta B_1 + Mr\log G(\theta)\big),
\qquad 0<\theta<\alpha.
\]
Choose \(\theta=\alpha/2\). If $B_1 \ge \frac{4}{\alpha}\,Mr\log G(\tfrac{\alpha}{2})$, then \(Mr\log G(\alpha/2)\le \tfrac{\alpha}{4}B_1\), and therefore
\[
-\theta B_1 + Mr\log G(\theta)
= -\tfrac{\alpha}{2}B_1 + Mr\log G(\tfrac{\alpha}{2})
\le -\tfrac{\alpha}{4}B_1.
\]
Hence
\[
\mathbb{P}(i^\star\in\mathcal F_r^c)\le \exp\!\Big(-\frac{\alpha}{4}B_1\Big).
\]
Recall \(B_1=\lfloor fT/K\rfloor\). For any \(x\ge 1\) we have \(\lfloor x\rfloor \ge x/2\). Hence, if $\frac{fT}{K} \ge 2$, then $B_1=\Big\lfloor \frac{fT}{K}\Big\rfloor \ge \frac{1}{2}\,\frac{fT}{K}$. Under this condition, the bound is 
\begin{equation}
\label{eq:suf-final2}
\mathbb{P}(i^\star\in\mathcal F_r^c)\le \exp\!\Big(-\frac{\alpha}{8}\,\frac{fT}{K}\Big).
\end{equation}

Equivalently for $B_1 \ge \frac{4}{\alpha}\,Mr\log G(\tfrac{\alpha}{2})$, using $fT/K \geq 2$ and $B_1 \geq fT/2K$ it suffices to require
\[
T \ge \frac{8K}{f\alpha}\,Mr\log G(\tfrac{\alpha}{2}).
\]

Since $H_f = \frac{K}{\log K} \max_{j} \bar{\Delta}^{-2}_{i^\star j}$, we obtain \(\alpha=\frac{K}{2R^2\log K\,H_f}\). Substituting this into equation \ref{eq:suf-final2} we obtain the desired bound
\[
\mathbb{P}(i^\star\in\mathcal F_r^c)\le \exp\!\Big(-\frac{fT}{16R^2\log K\,H_f}\Big),
\]
for all \(T \ge Mr\log K\frac{ 16R^2H_f }{f}\log G(\frac{K}{4R^2\log K\,H_f})\). It can be verified that the crude upper bound $G(\theta) < \theta$ holds. Thus we require $T \geq \max \{\frac{4KMr}{f}\}$ which is true if  $T \geq \max \{\frac{4K^2M}{f}\}$. 

\end{proof}

\begin{lemma}[APT with $P$ random interjections]
\label{lem:apt-p}
Let $K$ arms be $R$-sub-Gaussian with means $\mu_i$. Fix threshold $\tau$ and set $\varepsilon=0$. Define
\[
\Delta_i:=|\mu_i-\tau|,\qquad H:=\sum_{i=1}^K \Delta_i^{-2}.
\]
Let the total draw budget be $T_{\mathrm{tot}}=T+P$, and assume among these $T_{\mathrm{tot}}$ draws there exists a subsequence of exactly $T$ time-steps at which the algorithm pulled arms according to APT (call these the \emph{APT-sampled} steps). Running APT on those $T$ APT-sampled steps yields, for the class $\mathcal B_{H,R}$ of problems with complexity at most $H$, the guarantee: if
\[
T \ge 256\,H\,R^2\log\big((\log T +1)K\big),
\]
then
\[
\sup_{B\in\mathcal B_{H,R}}\mathbb{E}\big[\mathcal L(T)\big] \le \exp\!\Big(-\frac{T}{128\,R^2\,H}\Big).
\]
\end{lemma}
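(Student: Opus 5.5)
The plan is to reduce the statement to the original APT analysis of \citet{locatelli2016optimalalgorithmthresholdingbandit}, treating the $P$ extra draws as samples that can only help concentration. That analysis rests on two ingredients.

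The first ingredient is a high-probability concentration event. I would define
\[
\xi := \Big\{\forall i\in[K],\ \forall s\le T:\ |\hat\mu_{i,s}-\mu_i|\le \sqrt{T/(H s)}/8\Big\},
\]
where $\hat\mu_{i,s}$ is the empirical mean after $s$ draws of arm $i$. This mean is taken over \emph{all} draws of arm $i$, whether APT-sampled or interjected. The sample stream of each arm is i.i.d. regardless of which rule selected the arm, so I index samples by their position in the arm's own stream. Then $\mathbb{P}(\xi^c)$ is controlled by a sub-Gaussian maximal inequality combined with a peeling argument over $s\in[2^k,2^{k+1})$, as in APT. This yields
\[
\mathbb{P}(\xi^c)\le 2(\log T+1)K\exp\big(-T/(128R^2H)\big).
\]
The condition $T\ge 256HR^2\log((\log T+1)K)$ absorbs the prefactor, giving the stated bound. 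One caveat: the number of draws of an arm can now be as large as $T+P$. I would therefore either peel up to $T_{\mathrm{tot}}$ or observe that only counts up to $T$ matter below; I expect to restrict to $s\le T$ with a short argument.

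The second ingredient is the deterministic argument on $\xi$. Consider the last APT-sampled step $t$ and let $k$ be the arm pulled. Since $\sum_i C_i(t)\ge t\ge T/2$, some arm $j$ has count $C_j\ge (T/(2H))\Delta_j^{-2}$. On $\xi$ this gives $B_j\le c\sqrt{T/H}$. Because APT picked $k$, we have $B_k\le B_j$, and from this one deduces $C_k\gtrsim (T/H)\Delta_k^{-2}$ for the arm last chosen. The paper's usual step is to apply this to every arm via its last APT pull. Interjected samples only increase $C_i$. On $\xi$ the APT index satisfies $\sqrt{C_i}\,\widehat{\Delta}_i\ge\sqrt{C_i}\,(\Delta_i-\sqrt{T/(HC_i)}/8)$, which is increasing in $C_i$ once $C_i$ is past the relevant scale. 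Extra samples therefore cannot invalidate the lower bound on counts; they can only make $\hat\mu_i$ closer to $\mu_i$. Concluding, each arm satisfies $|\hat\mu_i-\mu_i|<\Delta_i$ on $\xi$, hence it is classified correctly and $\mathcal L(T)=0$.

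The main obstacle is the counting step. APT's pigeonhole uses that the counts sum to exactly $T$, whereas here the APT decisions at step $t$ are driven by counts that include interjections. I would handle this by considering only APT-sampled steps. Define $N_i$ as the number of APT-sampled pulls of arm $i$, so that $\sum_i N_i=T$. Pigeonhole then gives an arm with $N_j\ge T\Delta_j^{-2}/H$, and since $C_j\ge N_j$ its index is small. For an arm $k$ with too few total samples, I would show its index is below $B_j$ whenever it is not pulled; this forces $N_k$ upward. The remaining risk is that large $C_k$ from interjections may inflate $B_k$ even though $\hat\mu_k$ is off. This is ruled out on $\xi$, since a large $C_k$ already implies accurate $\hat\mu_k$.
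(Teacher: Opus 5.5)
Your reduction is the same as the paper's: invoke the APT analysis on a concentration event $\xi$. Your handling of interjections is also sound. You index each arm's rewards by its own i.i.d.\ stream, keep interjected samples in both $C_i$ and $\hat\mu_i$, and pigeonhole over the APT-sampled counts $N_i$ with $\sum_i N_i=T$. This is arguably more faithful to how FCSR calls APT than the paper's prescription of computing everything over APT-sampled pulls only.

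The genuine gap is the deterministic step, where the inequalities run backwards. On $\xi$, a \emph{large} count gives a \emph{lower} bound on the index, $B_j\ge\sqrt{C_j}\,\Delta_j-\delta\sqrt{T/H}$. It does not give $B_j\le c\sqrt{T/H}$, so ``since $C_j\ge N_j$ its index is small'' is false. The only upper bound you have is $B_i\le\sqrt{C_i}\,\Delta_i+\delta\sqrt{T/H}$, and this is small only when $C_i$ is small. Likewise, $B_k\le B_j$ at a time APT chose $k$ can at best give an \emph{upper} bound on $C_k$. So neither anchoring at the last APT step nor applying the argument to each arm at its own last APT pull yields the lower bounds $C_k\ge c\,(T/H)\Delta_k^{-2}$ you need; an arm may even never be APT-pulled. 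As written, the claim that every arm is well sampled on $\xi$ is not established.

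The missing idea is a single anchor time: the last APT-sampled pull $t_j$ of the pigeonhole arm $j$, chosen with $N_j-1\ge (T-K)/(H\Delta_j^2)$.
\begin{itemize}
\item Interjections only add samples, so $C_j(t_j)\ge N_j-1$. On $\xi$ (for $T\ge 2K$) this gives $B_j(t_j)\ge(1/\sqrt2-\delta)\sqrt{T/H}$. This is why the pigeonhole must be over $N_i$ rather than $C_i$: it guarantees $j$ is APT-pulled \emph{after} its count is already large.
\item Since APT selected $j$ at $t_j$, every arm satisfies $B_i(t_j)\ge B_j(t_j)$. The upper bound above then gives $C_i(t_j)\ge(1/\sqrt2-2\delta)^2\,T/(H\Delta_i^2)$ for \emph{all} $i$.
\item Counts never decrease, so this persists to the end. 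With $\delta=(4\sqrt2)^{-1}$ you get $|\hat\mu_i-\mu_i|\le\Delta_i/2$ for every $i$ on $\xi$.
\end{itemize}
It is $C_i$, not $N_i$, that gets forced up, which is all you need. You also still need $\xi$ to control counts above $T$, the caveat you already flagged.

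Separately, your constants do not close. With $\delta=1/8$ your concentration exponent is $T/(128R^2H)$ \emph{before} absorbing the $2(\log T+1)K$ prefactor. The stated condition only gives $\log((\log T+1)K)\le T/(256R^2H)$, so you end with roughly $2\exp(-T/(256R^2H))$. Taking $\delta=(4\sqrt2)^{-1}$, as the paper does, gives exponent $T/(64R^2H)$, and that is what lands on $\exp(-T/(128R^2H))$.
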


\begin{proof}
The proof in \cite{locatelli2016optimalalgorithmthresholdingbandit} proceeds by defining a favourable event $\xi$ where empirical means concentrate and show that APT never makes a mistake on $\xi$. The $P$ additional samples in our setting do not interfere with the probability of the concentration $\xi$. Thus, the proof is identical to the one in Appendix A.2 of \cite{locatelli2016optimalalgorithmthresholdingbandit} with the modification that all empirical means, counts and indices below are computed only over the $T$ APT-sampled pulls out of the total $T+P$ pulls. Formally, let $t_1 < t_2, \dots, <t_T$ be the indices of the samples pulled in accordance with the APT rule. Index these samples as $1, 2, \dots, T$, and the rest of the non APT samples as $T+1, \dots, T+P$ again in increasing order of the true time step.  

Let $\mathbb{A}$ be a set of $K$ $R$-sub-Gaussian arms and let \(\delta = (4\sqrt{2})^{-1}\). Define the concentration event \(\xi\) as follows:

\[\xi = \left\{ \forall i \in \mathbb{A}, \forall s \in \{1, ..., T\} : \left| \frac{1}{s} \sum_{t=1}^{s} X_{i,t} - \mu_i \right| \leq \sqrt{\frac{T\delta^2}{Hs}} \right\}.
\]

The rest of the proof follows directly since the probability of the concentration event $\mathbb{P}(\xi)$ can only increase with additional samples even if they do not sample the $K$ arms according to the APT rule.

\end{proof}

\begin{theorem}[Performance of FCSR]
Let $K, M \geq 1$. Given parameters $f, g \in (0,1)$ and $\tau$. Let $c = 1/32R^2$ be a constant and $R$ be the sub-Gaussianity parameter. The probability of error of FCSR($\tau, f, g$) satisfies
\begin{equation}
\mathbb{P}(e) \leq 3K^2\exp\left( -\frac{cT}{\log(K)H_{FC}(\mathcal{B})} \right)
\end{equation}
for all 
\begin{align*}
T \geq \max \left\{ 256H_{tbp}R^2\log((\log(T) + 1)M), \frac{4K^2M}{f}, \frac{K\max\{{H_{tbp}}^{-1}, (H_{2}^{R})^{-1}\}}{\log(K)} \right\}.
\end{align*}
\end{theorem}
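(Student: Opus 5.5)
We follow the Successive Rejects analysis of \citet{Audibert2010BestArmIdentification}. FCSR errs only if $i^\star$ is eliminated in some round $r\in[K-1]$, or if $i^\star$ survives but is declared infeasible at the end. The case $i^\star=0$ is handled separately below. Writing $e_r$ for the event that $i^\star$ is eliminated in round $r$, a union bound gives $\mathbb{P}(e)\le\sum_{r}\mathbb{P}(e_r)+\mathbb{P}(i^\star\in\mathcal F^c_{K-1})$. We split each $e_r$ into three sub-events. Event $f_r$: $i^\star\in\mathcal F_r^c$. Event $s_r$: $i^\star$ is deemed feasible but is beaten by an empirically feasible suboptimal feasible arm. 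Event $r_r$: a risky arm in $\mathcal R$ is empirically feasible with score above $s_r(i^\star)$. One more case must be checked: an empirically feasible arm of any type has score $\hat\mu_i>\tau$, so $i^\star$ cannot be eliminated by an empirically infeasible arm once it is itself deemed feasible. Hence these three events cover $e_r$, together with the situation where $i^\star$ loses to a non-risky infeasible arm, which we fold into the feasibility argument below.

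\textbf{Term $f_r$.} We invoke Lemma \ref{app:suf} directly. This gives $\mathbb{P}(f_r)\le\exp(-fT/(16R^2\log K\,H_f))$ for $T\ge 4K^2M/f$. Summing over at most $K$ rounds, including the final declaration, costs a factor $K$.

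\textbf{Term $s_r$.} At round $r$ at least one of the $r$ worst feasible arms among those with index at least $|\mathcal R|+2$ must still be alive, namely the arm of index $K+1-r$ or an arm ranked worse. Since eliminating $i^\star$ requires $\hat\mu_i\ge\hat\mu_{i^\star}$ for such an arm $i$, we bound this by $\sum_{i\ge K+1-r}\mathbb{P}(\hat\mu_i\ge\hat\mu_{i^\star})$. Each arm has received at least $\lfloor(1-g)\Delta n\rfloor$ uniform samples per round, so each attribute has about $(1-g)n_r/M$ samples. Both $\hat\mu_i$ and $\hat\mu_{i^\star}$ are averages of $M$ attribute means, so $\hat\mu_i-\hat\mu_{i^\star}$ is sub-Gaussian with variance proxy of order $2R^2/((1-g)n_r)$. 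Hoeffding then gives $\exp(-c' n_r\Delta_i^2)$. We use $n_r\gtrsim (1-f)(1-g)T/(\log K\,(K+1-r))$, where the constant $\bar n\le\log K$ plays the role of $\overline{\log}(K)$. We also use $(K+1-r)\Delta_{K+1-r}^{-2}\le H_2^R$ whenever $K+1-r\ge|\mathcal R|+2$. Together these yield $\mathbb{P}(s_r)\le K\exp(-cT/(\log K\,H_2^R))$. The rounds where $K+1-r\le|\mathcal R|+1$ are handled by noting that a risky arm must then have survived, so they reduce to $r_r$. The APT and SUF samples only add data; we treat them by conditioning on sample counts with a uniform-over-counts bound, absorbing a $\log$ factor into the requirement on $T$.

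\textbf{Term $r_r$.} A risky arm $i\in\mathcal R$ is wrongly deemed feasible only if some attribute with $\mu_{i,j}\le\tau$ has $\hat\mu_{i,j}>\tau$, which is a misclassification event for APT on arm $i$. We apply Lemma \ref{lem:apt-p} with $T=\lfloor g\, n_r\rfloor$ APT-sampled steps and the uniform and SUF steps treated as interjections. This requires $g n_r\ge 256 H_{tbp}(i)R^2\log((\log T+1)M)$, which follows from the stated condition on $T$ since $n_r\ge T/(K\log K)$ and $H_{tbp}\ge K H_{tbp}(i)$. We obtain $\exp(-g n_r/(128R^2H_{tbp}(i)))\le\exp(-cT/(\log K\,H_{tbp}))$, and a union bound over risky arms gives a factor $K$. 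The third condition on $T$ serves to absorb rounding (floors and ceilings) in $n_r$ into the constant. When $i^\star=0$, only $r_r$-type events at the final declaration matter, and the same bound applies.

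Summing, each of the three families contributes at most $K^2\exp(-cT/(\log K\,H_{FC}))$, where we use $H_{FC}\ge H_f,H_2^R,H_{tbp}$ and absorb $f,g$ into $c$. This gives the factor $3K^2$. The main obstacle is making the $s_r$ argument rigorous given that sample counts are random and differ across arms because of APT and SUF. For this we would first try replacing the uniform-phase empirical means with averages over only the deterministic uniform samples, but the score actually uses all samples. Failing that, we would use a maximal (Ville-type, Lemma \ref{lem:ville}) inequality uniform over counts, paying a $\log T$ factor that the conditions on $T$ absorb.
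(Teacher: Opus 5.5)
Your decomposition is the same as the paper's: $f_r$ via Lemma~\ref{app:suf}, $r_r$ via Lemma~\ref{lem:apt-p} with the other phases treated as interjections, and $s_r$ via Hoeffding plus the Successive Rejects counting. As written, though, it does not establish the stated bound. The steps where it falls short are ones the paper's proof also asserts without argument.

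\textbf{Concentration under adaptive counts.} You flag this for $s_r$ and leave it open, and your fallback does not work.
\begin{itemize}
\item You cannot condition on the attribute counts. They are functions of the samples: SUF stops an attribute exactly when its empirical mean crosses $\tau$, and $T_{\text{extra}}$ depends on past eliminations, so even the Uniform counts are random. Conditioning on them changes the law of the rewards, so fixed-count Hoeffding no longer applies.
\item A per-attribute Ville bound (Lemma~\ref{lem:ville}) does handle random counts, but each attribute is guaranteed only about $(1-g)n_r/M$ samples. Forcing $\hat\mu_i\ge\hat\mu_{i^\star}$ through attribute-level deviations of size $\Delta_i/2$ gives $2M\exp(-(1-g)n_r\Delta_i^2/(8MR^2))$. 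This loses a factor of order $M$ in the exponent, which the stated bound does not have.
\item The hypotheses on $T$ cannot absorb a $T$-dependent prefactor in this term. The first involves only $H_{tbp}$, the second no hardness at all, and the third is a lower bound on $TH_2^R$, which is the wrong direction.
\end{itemize}
The $r_r$ step has a sharper version of the same problem. FCSR's feasibility test uses all samples, and on a risky arm SUF keeps sampling each empirically infeasible attribute until its empirical mean exceeds $\tau$. It actively drives the arm toward the bad event $r\in\mathcal F_i$. Lemma~\ref{lem:apt-p} only concerns means over the APT-sampled steps, so calling SUF an interjection does not control this event. The missing ingredient is the argument of Lemma~\ref{app:suf} with the drift reversed. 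Suppose $\mu_{r,j}=\tau-\bar\Delta_{r,j}$ and $D=n_0\tau-S_{n_0}\ge 0$ is the deficit after the $n_0$ pre-SUF samples. Ville's inequality for $\exp(\lambda\widetilde S(t)-\lambda^2R^2t/2)$ with $\lambda=2\bar\Delta_{r,j}/R^2$ bounds the probability that SUF ever lifts the running mean above $\tau$ by $\exp(-2\bar\Delta_{r,j}D/R^2)$. This then has to be combined with a high-probability bound $D\gtrsim n_0\bar\Delta_{r,j}$ and a union over the rounds in which SUF runs.

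\textbf{Constants and case analysis.}
\begin{itemize}
\item The statement fixes $c=1/(32R^2)$, but your three exponents scale with $f$, $g(1-f)$ and $(1-f)(1-g)$. These can be arbitrarily small, so absorbing $f,g$ into $c$ is not legitimate. For example, if $H_f=H_{FC}$ and $f<1/2$, your bound $K\exp(-fT/(16R^2\log K\,H_f))$ on the $f_r$ terms eventually exceeds $3K^2\exp(-T/(32R^2\log K\,H_{FC}))$. The argument supports at best a constant of order $\min\{f,\,g(1-f),\,(1-f)(1-g)\}/R^2$.
\item The size condition of Lemma~\ref{lem:apt-p} does not follow from the hypothesis. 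Since $n_r$ can be as small as about $(1-f)T/(\bar nK)$, and $\bar n>\log K$ (not $\bar n\le\log K$), you only get $gn_r\gtrsim\frac{g(1-f)}{\bar n}\,256\,H_{tbp}(i)R^2\log((\log T+1)M)$. That does not imply the requirement, because $g(1-f)/\bar n<1$.
\item In rounds with $K+1-r\le|\mathcal R|+1$, it is false that a risky arm must have survived. All risky arms may already be gone, leaving $i^\star$ against suboptimal feasible arms. That case is again a mean-discrimination event. It is covered because $n_r\Delta_j^2\ge(1-f)T\Delta_{|\mathcal R|+2}^2/(\bar n(|\mathcal R|+1))\ge(1-f)T/(\bar nH_2^R)$ for every suboptimal $j$, using $H_2^R\ge(|\mathcal R|+2)\Delta_{|\mathcal R|+2}^{-2}$.
\end{itemize}
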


\begin{proof}
Our proof follows a similar style as in \cite{Audibert2010BestArmIdentification}. Define $n_i = \lceil \frac{n - K}{\bar{\log(K)}(K+1-i)}\rceil$. Index all arms in order of their true arm mean. During round $i$, at least one of the worst $i$ arms survives. If the best arm $i^\star$ is eliminated in round $1 \leq i \leq K-1-|\mathcal{R}|$, at least one sub-optimal arm with index $|\mathcal{R}|+2\leq j \leq K$ survives and thus must beat $i^\star$, or the best arm $i^\star$ is deemed infeasible. Denote the former event by $s_i$ and the latter by $f_i = \{i^\star \in \mathcal{F}^C_i\}$.

Now, if the best arm $i^\star$ is eliminated in round $K-|\mathcal{R}| \leq i \leq K-1$, at least one risky arm with index $1\leq j \leq |\mathcal{R}|$ must be deemed feasible in order to beat the best arm $i^\star$. This event is a subset of the event that there exists \emph{some} infeasible arm $r \in \mathcal{F}^C$ that is deemed feasible at the end of round $i$. Denote this event by $r_i$. Thus, we may upper bound the probability the best arm in this case is eliminated by the probability that it is either deemed infeasible, or in the event of $r_i$. 

From a union bound of the above two cases and all rounds $i \in [K-1]$, we upper bound the probability that $i^\star$ is not flagged the best arm as
\begin{equation}
\label{eq:ub-1}
\sum_{i=1}^{K-1}\mathbb{P}(e_i) \leq \sum_{i=1}^{K-1}\mathbb{P}(f_i) + \sum_{i=1}^{K-1 - |\mathcal{R}| }\mathbb{P}(s_i)+ \sum_{K-|\mathcal{R}|}^{K-1}\mathbb{P}(r_i).
\end{equation}

We bound $\mathbb{P}(f_i) = \mathbb{P}(i^\star \in \mathcal{F}_r^C)$ using Lemma \ref{lem:ville} since by assumption we have $T \geq \max \{\frac{4KMr}{f}\}$. Thus for any $i \in [K-1]$,
\begin{equation}
\label{eq:fib}
\sum_{i=1}^{K-1}\mathbb{P}(f_i)\le K\exp\!\Big(-\frac{fT}{16R^2\log K\,H_f}\Big).
\end{equation}

To bound $\mathbb{P}(r_i)$, notice that for any infeasible arm to be deemed feasible, i.e. $r\in\mathcal{F}_i$, it must be deemed feasible after being subjected to $\lfloor g(1-f)n_i \rfloor$ APT samples. By assumption we have $T \ge 256\,H_{tbp}\,R^2\log\big((\log T +1)M\big)$ and therefore, from Lemma \ref{lem:apt-p} and a union bound we have
\begin{equation}
\label{eq:rib}
\sum_{K-|\mathcal{R}|}^{K-1}\mathbb{P}(r_i) \leq \sum_{K-|\mathcal{R}|}^{K-1}\sum_{r\in\mathcal{F}^C} \mathbb{P}(r\in\mathcal{F}_i) \leq K^2\exp\left( \frac{-g(1-f)(T-K) - 1}{\overline{\log}(K)\max_{r\in\mathcal{F}^C}KH_{tbp}(r)} \right),
\end{equation}
where the last inequality follows from taking the max inside the exponent and $(K+1-i) \leq K$ and lower bounding the floor. Finally, the bound on $\mathbb{P}(s_i)$ is obtained from the fact that for rounds $1 \leq i \leq K-1-|\mathcal{R}|$, the probability that some arm with index $|\mathcal{R}|+2\leq j \leq K$ beats $i^\star$ is
\begin{equation}
\label{eq:sib}
\begin{aligned}
\sum_{K-|\mathcal{R}|}^{K-1}\mathbb{P}(s_i) \leq \sum_{K-|\mathcal{R}|}^{K-1}\sum_{k=K+1-i}^K{\mathbb{P}(\hat{\mu}_{i^\star} \leq \hat{\mu}_{k}  )} 
& \leq K^2\exp \left( -\lfloor(1-f)(1-g)n_k\rfloor \Delta_{(K+1-i)}^2\right) \\
& \leq K^2\exp \left( \frac{-(1-f)(1-g)(T-K) - 1}{\overline{\log}(K)H_2^R}\right).
\end{aligned}
\end{equation}

The above inequality follows from the fact that $\lfloor (1-g)(1-f)n_i\rfloor$ samples have been allotted according to the APT rule up till round $i$, and from the lower bound on the floor. We may bound $\overline{\log}(K) \leq 2\log(K)$. Since $f, g \in (0,1)$ and from our assumption on $T$ we have 
\[
T \geq \frac{K}{\log(K)}\max\{{H_{tbp}}^{-1}, (H_{2}^R)^{-1}\} \implies T \geq 2\max\left\{\frac{g(1-f)K}{2\log(K)H_{tbp}}, \frac{(1-g)(1-f)K}{2\log(K)H_{2}}\right\}.
\]
Thus, we may replace $(T-K)$ by $T/2$ in \eqref{eq:sib} and \eqref{eq:rib} and neglect the $-1$ in the numerator. Finally, we obtain the desired result by upper bounding \eqref{eq:ub-1} by taking a max and using \eqref{eq:fib}, \eqref{eq:rib}, \eqref{eq:sib} in addition with the fact that $f,g \in (0,1)$. Thus,
\begin{equation}
\mathbb{P}(e) \leq \sum_{i=1}^{K-1}\mathbb{P}(e_i) \leq 3K^2  \exp\left( \frac{-T }{32R^2\log(K)\max\{H_f, H_{tbp}, H_2^R\}} \right).  
\end{equation}

\end{proof}





\end{document}